\documentclass{article}


\usepackage[preprint]{neurips_2023}




\usepackage[utf8]{inputenc} 
\usepackage[T1]{fontenc}    
\usepackage{hyperref}       
\usepackage{url}            
\usepackage{booktabs}       
\usepackage{amsfonts}       
\usepackage{nicefrac}       
\usepackage{microtype}      
\usepackage{xcolor}         

\usepackage{times,url,natbib}
\usepackage{amsthm,amsfonts,amsmath,amssymb,epsfig,color,float,graphicx,verbatim}
\usepackage{algorithm,algpseudocode}
\usepackage{enumitem}

\usepackage{hyperref}
\hypersetup{
	colorlinks   = true, 
	urlcolor     = blue, 
	linkcolor    = blue, 
	citecolor   = blue 
}

\newtheorem{lemma}{Lemma}[]
\newtheorem{theorem}{Theorem}[]
\newtheorem{observation}{Observation}
\newtheorem{corollary}{Corollary}[]
\newtheorem{definition}{Definition}[]
\newtheorem{remark}{Remark}[]

\DeclareMathOperator*{\argmax}{arg\,max}
\DeclareMathOperator*{\argmin}{arg\,min}

\newcommand{\E}{\mathbb{E}}
\newcommand{\R}{\mathbb{R}}
\newcommand{\N}{\mathbb{N}}
\newcommand{\F}{\mathcal{F}}
\newcommand{\U}{\mathcal{U}}
\newcommand{\W}{\mathcal{W}}
\newcommand{\G}{\mathcal{G}}
\newcommand{\X}{\mathcal{X}}
\newcommand{\Y}{\mathcal{Y}}
\newcommand{\Z}{\mathcal{Z}}
\newcommand{\bw}{\mathbf{w}}
\newcommand{\bx}{\mathbf{x}}
\newcommand{\by}{\mathbf{y}}
\newcommand{\bz}{\mathbf{z}}
\newcommand{\bv}{\mathbf{v}}
\newcommand{\be}{\mathbf{e}}
\newcommand{\lips}{$L$-Lipschitz~}
\newcommand{\onelips}{$1$-Lipschitz~}

\newcommand{\sumn}{\sum_{j=1}^n}
\newcommand{\summ}{\sum_{i=1}^m}

\newcommand\norm[1]
{\left\lVert#1\right\rVert}

\renewcommand{\eqref}[1]{Eq.~(\ref{#1})}

\title{Initialization-Dependent Sample Complexity\\ of Linear Predictors and Neural Networks}

%

\author{%
  Roey Magen \\
  Weizmann Institute of Science\\
  \texttt{roey.magen@weizmann.ac.il} \\
  \And
   Ohad Shamir \\
   Weizmann Institute of Science \\
   \texttt{ohad.shamir@weizmann.ac.il} \\
}

\begin{document}

\maketitle

\begin{abstract}
   We provide several new results on the sample complexity of vector-valued linear predictors (parameterized by a matrix), and more generally neural networks. Focusing on size-independent bounds, where only the Frobenius norm distance of the parameters from some fixed reference matrix $W_0$ is controlled, we show that the sample complexity behavior can be surprisingly different than what we may expect considering the well-studied setting of scalar-valued linear predictors. This also leads to new sample complexity bounds for feed-forward neural networks, tackling some open questions in the literature, and establishing a new convex linear prediction problem that is provably learnable without uniform convergence.
\end{abstract}

\section{Introduction}

In this paper, we consider the sample complexity of learning function classes, where each function is a composition of one or more transformations given by
\[
\bx~\rightarrow~f(W\bx)~,
\]
where $\bx$ is a vector, $W$ is a parameter matrix, and $f$ is some fixed Lipschitz function. A natural example is vanilla feed-forward neural networks, where each such transformation corresponds to a layer with weight matrix $W$ and some activation function $f$. A second natural example are vector-valued linear predictors (e.g., for multi-class problems), where $W$ is the predictor matrix and $f$ corresponds to some loss function. A special case of the above are scalar-valued linear predictors (composed with some scalar loss or nonlinearity $f$), namely $\bx\rightarrow f(\bw^{\top} \bx)$, whose sample complexity is extremely well-studied. However, we are interested in the more general case of matrix-valued $W$, which (as we shall see) is far less understood.

Clearly, in order for learning to be possible, we must impose some constraints on the size of the function class. One possibility is to bound the number of parameters (i.e., the dimensions of the matrix W), in which case learnability follows from standard VC-dimension or covering number arguments (see \cite{anthony_bartlett_1999}). However, an important thread in statistical learning theory is understanding whether bounds on the number of parameters can be replaced by bounds on the magnitude of the weights -- say, a bound on some norm of $W$. For example, consider the class of scalar-valued linear predictors of the form
\begin{align*}\{\bx \rightarrow \bw^{\top} \bx : \bw,\bx\in \R^d, \norm{\bw}\leq B\} \end{align*}
and inputs $||\bx||\leq 1$, where $||\cdot||$ is the Euclidean norm. For this class, it is well-known that the sample complexity required to achieve excess error $\epsilon$ (w.r.t. Lipschitz losses) scales as $O(B^2/\epsilon^2)$, independent of the number of parameters $d$ (e.g., \cite{bartlett2002rademacher,DBLP:books/daglib/shalev-shwartz2014}). Moreover, the same bound holds when we replace $\bw^{\top} \bx$ by $f(\bw^{\top} \bx)$ for some $1$-Lipschitz function $f$.  Therefore, it is natural to ask whether similar size-independent bounds can be obtained when $W$ is a matrix, as described above. This question is the focus of our paper. 

When studying the matrix case, there are two complicating factors: The first is that there are many possible generalizations of the Euclidean norm for matrices (namely, matrix norms which reduce to the Euclidean norm in the case of vectors), so it is not obvious which one to study. A second is that rather than constraining the norm of $W$, it is increasingly common in recent years to constrain the distance to some fixed reference matrix $W_0$, capturing the standard practice of non-zero random initialization (see, e.g., \cite{bartlett2017spectrally}). Following a line of recent works in the context of neural networks (e.g., \cite{DBLP:journals/corr/vardi-shamir-srebro,daniely2019generalization,DBLP:journals/dg-two-layer-networks}), we will be mainly  interested in the case where we bound the \emph{spectral norm} $||\cdot||$ of $W_0$, and the distance of $W$ from $W_0$ in the \emph{Frobenius norm} $||\cdot||_F$, resulting in function classes of the form
\begin{equation} \label{eq:funcclass}
    \left\{\bx\rightarrow f(W\bx): W \in \R^{n\times d}, ||W-W_0||_F\leq B\right\}.
\end{equation}
for some Lipschitz, possibly non-linear function $f$ and a fixed $W_0$ of bounded spectral norm. This is a natural class to consider, as we know that spectral norm control is necessary (but insufficient) for finite sample complexity guarantees (see, e.g., \cite{DBLP:conf/colt/GolowichRS18}), whereas controlling the (larger) Frobenius norm is sufficient in many cases. Moreover, the Frobenius norm (which is simply the Euclidean norm of all matrix entries) is the natural metric to measure distance from initialization when considering standard gradient methods, and also arises naturally when studying the implicit bias of such methods (see \cite{lyu2019gradient}). As to $W_0$, we note that in the case of scalar-valued linear predictors (where $W,W_0$ are vectors), the sample complexity is not affected \footnote{More precisely, it is an easy exercise to show that the Rademacher complexity of the function class $\{\bx\rightarrow f(\bw^{\top} \bx):\bw\in \R^{d},||\bw-\bw_0||\leq B\}$, for some fixed Lipschitz function $f$, can be upper bounded independent of $\bw_0$.} by $W_0$. This is intuitive,  since the function class corresponds to a ball of radius $B$ in parameter space, and $W_0$ affects the location of the ball but not its size. A similar weak dependence on $W_0$ is also known to occur in other settings that were studied (e.g., \cite{bartlett2017spectrally}).

In this paper, we provide several new contributions on the size-independent sample complexity of this and related function classes, in several directions.

In the first part of the paper (Section \ref{sec-complexity-no-element-wise-nn}), we consider function classes as in \eqref{eq:funcclass}, without further assumptions on $f$ besides being Lipschitz, and assuming $\bx$ has a bounded Euclidean norm. As mentioned above, this is a very natural class, corresponding (for example) to vector-valued linear predictors with generic Lipschitz losses, or neural networks composed of a single layer and some general Lipschitz activation. In this setting, we make the following contributions:
\begin{itemize}[leftmargin=*]
    \item In subsection \ref{subsec-tight-bound-no-init} we study the case of $W_0=0$, and prove that the size-independent sample complexity (up to some accuracy $\epsilon$) is both upper and lower bounded by $2^{\tilde{\Theta}(B^2/\epsilon^2)}$. This is unusual and perhaps surprising, as it implies that this function class does enjoy a finite, size-independent sample complexity bound, but the dependence on the problem parameters $B,\epsilon$ are exponential. This is in  very sharp contrast to the scalar-valued case, where the sample complexity is just $O(B^2/\epsilon^2)$ as described earlier. Moreover, and again perhaps unexpectedly, this sample complexity remains the same even if we consider the much larger function class of \emph{all} bounded Lipschitz functions, composed with all norm-bounded linear functions (as opposed to having a single fixed Lipschitz function).
    \item Building on the result above, we prove a size-independent sample complexity upper bound for deep feed-forward neural networks, which depends only on the Frobenius norm of the first layer, and the product of the spectral norms of the other layers. In particular, it has no dependence whatsoever on the network depth, width or any other matrix norm constraints, unlike previous works in this setting.
    \item In subsection \ref{sub-sec-initialization-dependent lower-bound}, we turn to consider the case of $W_0\neq 0$, and ask if it is possible to achieve similar size-independent sample complexity guarantees. Perhaps unexpectedly, we show that the answer is no,  even for $W_0$ with very small spectral norm. Again, this is in sharp qualitative contrast to the scalar-valued case and other settings in the literature involving a $W_0$ term, where  the choice of $W_0$ does not strongly affect the bounds. 
    \item In subsection \ref{subsec-uniform-convergence-vs-learnability}, we show that the negative result above yields a new construction of a convex linear prediction problem which is learnable (via stochastic gradient descent), but where uniform convergence provably does not hold. This adds to a well-established line of works in statistical learning theory, studying when uniform convergence is provably unnecessary for distribution-free learnability (e.g., \cite{shalev2010learnability,daniely2011multiclass,feldman2016generalization}, see also \cite{nagarajan2019uniform,glasgow2022max} in a somewhat different direction). 
\end{itemize}

In the second part of our paper (Section \ref{sec-element-wise-activation}), we turn to a different and more specific choice of the function $f$, considering one-hidden-layer neural networks with activation applied element-wise:
\begin{align*} 
\bx~\longrightarrow~\mathbf{u}^{\top}\sigma(W\bx)=
\sum_j u_j \sigma(\bw_j^{\top} \bx), 
\end{align*}
with weight matrix $W \in R^{n\times d}$, weight vector $\mathbf{u}\in \R^n$, and a fixed (generally non-linear) Lipschitz activation function $\sigma(\cdot)$. As before, We focus on an Euclidean setting, where $\bx$ and $\mathbf{u}$  has a bounded Euclidean norm and $||W-W_0||_F$ is bounded, for some initialization $W_0$ with bounded spectral norm. In this part, our sample complexity bounds have polynomial dependencies on the norm bounds and on the target accuracy $\epsilon$. Our contributions here are as follows:
\begin{itemize}[leftmargin=*]
    \item We prove a fully size-independent Rademacher complexity bound for this function class, under the assumption that the activation $\sigma(\cdot)$ is smooth. In contrast, earlier results that we are aware of were either not size-independent, or assumed $W_0=0$. Although we do not know whether the smoothness assumption is necessary, we consider this an interesting example of how smoothness can be utilized in the context of sample complexity bounds. 
    \item With $W_0=0$, we show an upper bound on the Rademacher complexity of deep neural networks (more than one layer) that is fully independent of the network width or the input dimension, and for generic element-wise Lipschitz activations. For constant-depth networks, this bound is fully independent of the network size.
\end{itemize}
These two results answer some of the open questions in \cite{DBLP:journals/corr/vardi-shamir-srebro}. We conclude with a discussion of open problems in Section \ref{sec:discussion}. Formal proofs of our results appear in the appendix. 

\subsection{Comparison to Previous Work}

\textbf{Deep neural networks and the Frobenius norm.} A size-independent uniform convergence guarantee, depending on the product of Frobenius norm of all layers, has been established in \cite{neyshabur2015norm} for constant-depth networks, and in \cite{DBLP:conf/colt/GolowichRS18} for arbitrary-depth networks. However, these bounds are specific to element-wise, homogeneous activation functions, whereas we tackle general Lipschitz activations. Bounds based on other norms include \cite{anthony_bartlett_1999,bartlett2017spectrally}, but are potentially more restrictive than the Frobenius norm, or not independent of the width. All previous bounds of this type (that we are aware of) strongly depend on various norms of all layers, which can be arbitrarily larger than the spectral norm in a size-independent setting (such as the Frobenius norm and the $(1,2)$-norm), or make strong assumptions on the activation function.

\textbf{Rademacher complexity of vector-valued functions.} \cite{maurer2016vector} showed a contraction inequality for Rademacher averages that extended to Lipschitz functions with vector-valued domains. As an application, he showed an upper bound on the Rademacher complexity of vector-valued linear predictors. However, his bound depends polynomially on the number of parameters (i.e. on $\sqrt{n}$), where we focus on size-independent bounds, which do not depend on the input's dimension or number of parameters. The sample complexity of vector-valued predictors has also been extensively studied in the context of multiclass classification (see for example  \cite{mohri2018foundations}). However, these results generally depend polynomially on the vector dimension (e.g., number of classes), and are not size-independent.

\textbf{Non-zero initialization.} \cite{bartlett2017spectrally} upper bound the sample complexity of neural networks with non-zero initialization, but they used a much stronger assumption than ours: They control the 
$(1,2)$-matrix norm (sum of $L_2$-norms of the columns), and the gap between this norm and the Frobenius norm can be arbitrarily large, depending on the matrix size. \cite{DBLP:journals/dg-two-layer-networks} also recently studied the non-zero initialization case, with element-wise activations and Frobenius norm constraints. However, their results in this case are not size-independent and employ a different proof technique than ours.

\textbf{Non-element-wise activations.} \cite{DBLP:journals/dg-two-layer-networks} provide a fat-shattering lower bound for a general (possibly non-element-wise) Lipchitz activation, which implies that neural networks on $\R^d$ with bounded Frobenius norm and width $n$ can shatter $n$ points with constant margin, assuming that the inputs have norm $\sqrt{d}$ and that $n = O(2^d)$. However, this lower bound does not separate between the input norm bound and the width of the hidden layer. Therefore, their result does not contradict our upper bound (Thm. 2) which implies that it is possible to achieve a size-independent upper bound on the sample complexity, when the  input norm is fixed independent of the network width.

\section{Preliminaries}
\textbf{Notations.} We use bold-face letters to denote vectors, and let $[m]$ be shorthand for $\{1,2,\dots,m\}$. Given a vector $\bx$, $x_j$ denotes its $j$-th coordinate. Given a matrix $W$, $\bw_j$ is its $j$-th row, and $W_{j,i}$ is its entry in row $j$ and column $i$. Let $0_{n \times d}$ denote the zero matrix in $\R^{n \times d}$, and let $I_{d \times d}$ be the $d\times d$ identity matrix. Given a function $\sigma(\cdot)$ on $\R$, we somewhat abuse notation and let $\sigma(\bx)$ (for a vector $\bx$) or $\sigma(M)$ (for a matrix M) denote applying $\sigma(\cdot)$ element-wise. We use standard big-Oh notation, with  $\Theta(\cdot), \Omega(\cdot),O(\cdot)$ hiding constants and $\Tilde{\Theta}(\cdot), \Tilde{\Omega}(\cdot),\Tilde{O}(\cdot)$ hiding constants and factors that are polylogarithmic in the problem parameters. 

We let $\norm{\cdot}$ denote the operator norm: For vectors, it is the Euclidean norm, and for matrices, the spectral
norm (i.e., $\norm{M} = \sup_{x:\norm{\bx}=1} \norm{M\bx}$). $\norm{\cdot}_F$ denotes the Frobenius norm (i.e., $\norm{M}_F = \sqrt{\sum_{i,j} M_{i,j}^2}$). It is well-known that the spectral norm of a matrix is equal to its largest singular value, and that the Frobenius norm is equal to $\sqrt{\sum_i \sigma_i^2}$, where $\sigma_1,\sigma_2, \dots$ are the singular values of the matrix.

When we say that a function $f$ is Lipschitz, we refer to the Euclidean metric unless specified otherwise. We say that $f: \R^d \rightarrow \R$ is $\mu$-smooth if $f$ is continuously differentiable and its gradient $\nabla f$ is $\mu$-Lipschitz. 

Given a metric space $(\X,d)$ and $\epsilon>0$, we say that $N \subseteq \X$ is an $\epsilon-$cover for $\U \subseteq \X$ if for every $x \in U$, there exists $x' \in N$ such that $d(x,x')\leq \epsilon$. We say that $P \subseteq \X$ is an $\epsilon-$packing (or $\epsilon-$seperated), if for every $x,x' \in P$ such that $x \neq x'$ we have that $d(x,x')\geq \epsilon$.

\textbf{Sample Complexity Measures.} In our results and proofs, we consider several standard complexity measures of a  given class of functions, which are well-known to control uniform convergence, and imply upper or lower bounds on the sample complexity:

\begin{itemize}[leftmargin=*]
\item Fat-Shattering dimension: It is well-known that the fat-shattering dimension (at scale $\epsilon$) lower bounds the number of samples needed to learn
in a distribution-free learning setting, up to accuracy $\epsilon$ (see for example \cite{DBLP:books/daglib/anthony-bartlett}). It is formally defined as follows:
\begin{definition}[Fat-Shattering]
A class of functions $\F$ on an input domain $\X$ \emph{shatters} $m$ points $\bx_1,...,\bx_m \in \X$ with margin $\epsilon$, if there exists a number $s$, such that for all $y \in \{0,1\}^m$ we can find some $f \in \F$ such that
\begin{align*} \forall i \in [m], \ \ f(\bx_i) \leq s-\epsilon \ \ \text{if} \ \ y_i = 0 \ \ \text{and} \ \ f(\bx_i)\geq s+\epsilon \ \ \text{if} \ \ y_i=1\end{align*}
The fat-shattering dimension of F (at scale $\epsilon$) is the cardinality $m$ of the largest set of points in $\X$ for which
the above holds.
\end{definition}
Thus, by proving the existence of a large set of points shattered by the function class, we get lower bounds on the fat-shattering dimension, which translate to lower bounds on the sample complexity.
\item Rademacher Complexity: This measure can be used to obtain upper bounds on the sample complexity: Indeed, the number of inputs $m$ required to make the Rademacher complexity of a function class $\F$ smaller than some
$\epsilon$ is generally an upper bound on the number of samples required to learn $\F$ up to accuracy $\epsilon$ (see \cite{DBLP:books/daglib/shalev-shwartz2014}).
\begin{definition}[Rademacher complexity] Given a class of functions $\F$on a domain $\X$, its Rademacher complexity is defined as
$R_m(\F) = \sup_{\{\bx_i\}_{i=1}^m\subseteq \X} \E_{\epsilon} \left[\sup_{f\in \F} \frac{1}{m} \summ \epsilon_i f_i(\bx_i) \right]~, $
where $\epsilon = (\epsilon_1,..., \epsilon_m)$ is uniformly distributed on $\{-1,+1\}^m$.
\end{definition}
\item Covering Numbers: This is a central tool in the analysis of the complexity of classes of functions (see, e.g., \cite{DBLP:books/daglib/anthony-bartlett}), which we use in our proofs. 
\begin{definition}[Covering Number]
Given any class of functions $\F$ from $\X$ to $\Y$, a metric $d$ over functions from $\X$ to $\Y$, and $\epsilon > 0$, we let the covering number
$N(\F,d,\epsilon)$ denote the minimal number $n$ of functions $f_1, f_2, . . . , f_n$ from $\X$ to $\Y$, such that for all $f\in \F$, there exists some $f_i$ with $d(f_i, f) \leq \epsilon$. In this case we also say that $\{f_1, f_2, . . . , f_n\}$ is an $\epsilon$-cover for $\F$.
\end{definition}
In particular, we will consider covering numbers with respect to the empirical $L_2$ metric defined as
$d_m(f, f') = \sqrt{\frac{1}{m} \summ ||f(\bx_i)-f'(\bx_i)||^2} $ for some fixed set of inputs $\bx_1,\ldots,\bx_m$. 
In addition, if $\{f_1, f_2, . . . , f_n\} \subseteq \F$, then we say that this cover is \emph{proper}. It is well known that the distinction between proper and improper covers is minor, in the sense that the proper $\epsilon$-covering number is sandwiched between the improper $\epsilon$-covering number and the improper $\frac{\epsilon}{2}-$covering number (see the appendix for a formal proof):

\begin{observation} \label{proper-cover}
 Let $\F$ be a class of functions. Then the proper $\epsilon$-covering number for $\F$ is at least $N(\F,d,\epsilon)$ and at most $N(\F,d,\frac{\epsilon}{2})$.
\end{observation}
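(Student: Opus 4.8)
The plan is to prove the two inequalities separately, both by direct manipulation of covers.

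For the lower bound, I would observe that any proper $\epsilon$-cover is, in particular, an $\epsilon$-cover in the (improper) sense of the definition, since its members are functions from $\X$ to $\Y$ that merely happen to lie in $\F$. Hence the minimal size of a proper $\epsilon$-cover is at least $N(\F,d,\epsilon)$, the minimal size of an arbitrary $\epsilon$-cover. This direction requires nothing beyond unpacking the definitions.

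For the upper bound, I would start from a minimal (improper) $\frac{\epsilon}{2}$-cover $\{g_1,\dots,g_n\}$ of $\F$, where $n=N(\F,d,\frac{\epsilon}{2})$. For each index $i$ such that some element of $\F$ lies within distance $\frac{\epsilon}{2}$ of $g_i$, I would fix one such representative $f_i\in\F$ with $d(f_i,g_i)\le\frac{\epsilon}{2}$ (and simply discard the indices $i$ for which no such $f_i$ exists). Then $\{f_i\}$ is a proper $\epsilon$-cover of $\F$: given any $f\in\F$, there is some $g_i$ with $d(f,g_i)\le\frac{\epsilon}{2}$, and by the triangle inequality for the metric $d$ we get $d(f,f_i)\le d(f,g_i)+d(g_i,f_i)\le\frac{\epsilon}{2}+\frac{\epsilon}{2}=\epsilon$. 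Since we produced at most $n$ functions $f_i$, the proper $\epsilon$-covering number is at most $N(\F,d,\frac{\epsilon}{2})$.

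The argument is entirely routine; the only point requiring a sliver of care — and the closest thing to an ``obstacle'' — is the possibility that some center $g_i$ has no element of $\F$ within distance $\frac{\epsilon}{2}$, in which case it contributes no representative but also covers no point of $\F$, so dropping it is harmless and the count is unaffected. I do not anticipate any genuine difficulty beyond this bookkeeping.
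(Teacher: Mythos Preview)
Your proposal is correct and essentially identical to the paper's proof: both directions are handled exactly as you describe, via the definition for the lower bound and via the triangle-inequality replacement trick for the upper bound. The only cosmetic difference is that the paper disposes of ``useless'' centers $g_i$ by arguing they cannot exist (removing one would contradict minimality of the cover), whereas you simply discard them; both treatments are fine.
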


 \end{itemize}

\section{Linear Predictors and Neural Networks with  General Activations} \label{sec-complexity-no-element-wise-nn}

We begin by considering the following simple matrix-parameterized class of functions on $\mathcal{X}= \{\bx\in \R^d: ||\bx|| \leq 1\}$:

\[
\F_{B,n,d}^{f,W_0}:= \left\{\bx\rightarrow f(W\bx): W \in \R^{n\times d}, ||W-W_0||_F\leq B\right\}~,
\]
where $f$ is assumed to be some fixed \lips function, and $W_0$ is a fixed matrix in $\R^{n\times d}$ with a bounded spectral norm. As discussed in the introduction, this can be interpreted as a class of vector-valued linear predictors composed with some Lipschitz loss function, or alternatively as a generic model of one-hidden-layer neural networks with a generic Lipschitz activation function. Moreover, $W_0$ denotes an initialization/reference point which may or may not be $0$. 

In this section, we study the sample complexity of this class (via its fat-shattering dimension for lower bounds, and Rademacher complexity for upper bounds). Our focus is on size-independent bounds, which do not depend on the input dimension $d$ or the matrix size/network width $n$. Nevertheless, to understand the effect of these parameters, we explicitly state the conditions on $d,n$ necessary for the bounds to hold. 

\begin{remark} \label{remark-wlog-lipchitz-and-domain}
Enforcing $f$ to be Lipschitz and the domain $\mathcal{X}$ to be bounded is known to be necessary for meaningful size-independent bounds, even in the case of scalar-valued linear predictors $\bx\mapsto f(\bw^{\top} \bx)$ (e.g., \cite{DBLP:books/daglib/shalev-shwartz2014}). For simplicity, we mostly focus on the case of $\mathcal{X}$ being the Euclidean unit ball, but this is without much loss of generality: For example, if we consider the domain $\{\bx\in \R^d: ||\bx|| \leq b_x\}$ in Euclidean space for some $b_x \geq 0$, we can embed $b_x$ into the weight constraints, and analyze instead the class $\F_{b_x B,n,d}^{f, b_x W_0}$ over the Euclidean unit ball $\{\bx\in \R^d: ||\bx|| \leq 1\}$.
\end{remark}

\subsection{Size-Independent Sample Complexity Bounds with $W_0=0$} \label{subsec-tight-bound-no-init}

First, we study the case of initialization at zero (i.e. $W_0 = 0_{n \times d}$). Our first lower bound shows that the size-independent fat-shattering dimension of $\F_{B,n,d}^{f,W_0}$ (at scale $\epsilon$) is at least exponential in $B^2/\epsilon^2$:

\begin{theorem} \label{thm-lower-bound-lipchitz-no-initialization}
For any $B,L \geq 1$ and $\epsilon\in (0,1]$ s.t. $\frac{L^2B^2}{128\epsilon^2}\geq 20$, there exists large enough $d = \Theta(L^2B^2 / \epsilon^2),n = \exp(\Theta(L^2B^2 / \epsilon^2))$ and an \lips function $f:\R^n \rightarrow \R$ for which $\F_{B,n,d}^{f,W_0=0}$ can shatter 
\begin{align*}\exp(cL^2B^2/\epsilon^2) \end{align*}
points from $\{\bx\in \R^d: ||\bx||\leq 1\}$ with margin $\epsilon$, where $c>0$ is a universal constant.
\end{theorem}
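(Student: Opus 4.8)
The plan is to exhibit an explicit set of $m = \exp(cL^2B^2/\epsilon^2)$ input points in the unit ball of $\R^d$, together with a single Lipschitz function $f$, such that every sign pattern $y\in\{0,1\}^m$ is realized by some matrix $W$ with $\|W\|_F \le B$. The natural choice of inputs is a collection of (scaled) nearly-orthogonal unit vectors. Since we only control the Frobenius norm of $W$, the key quantity governing how many ``independent'' coordinates $W$ can meaningfully act on is $B^2/\epsilon^2$: to push the value at a point above/below the threshold by $\epsilon$ while spending only $O(\epsilon^2)$ of the squared Frobenius budget per point, we can afford roughly $B^2/\epsilon^2$ points \emph{if} they are orthogonal. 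To get exponentially many points we instead use the Johnson--Lindenstrauss / probabilistic phenomenon that in dimension $d = \Theta(L^2B^2/\epsilon^2)$ there exist $\exp(\Theta(d))$ unit vectors that are pairwise \emph{almost} orthogonal (inner products bounded by a small constant); this is where $d$ and the count $m$ are tied together.

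The construction of $f$ and $W$ goes roughly as follows. Pick $m$ almost-orthogonal unit vectors $\bx_1,\dots,\bx_m\in\R^d$. Index the $n = 2^m$ (or $\exp(\Theta(m))$) output coordinates by subsets / sign patterns; more economically, let each output coordinate correspond to one target pattern $y$, and for pattern $y$ define a row vector $\bv_y := \frac{\alpha}{m}\sum_{i} (2y_i-1)\bx_i$ for a suitable scale $\alpha = \Theta(\epsilon)$. Then $\langle \bv_y,\bx_j\rangle = \frac{\alpha}{m}\big((2y_j-1) + \sum_{i\ne j}(2y_i-1)\langle \bx_i,\bx_j\rangle\big)$, and the almost-orthogonality makes the cross terms a lower-order perturbation, so $\langle \bv_y,\bx_j\rangle$ has the sign of $2y_j-1$ with magnitude $\Theta(\alpha/m)$. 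To realize pattern $y$ we take $W$ to be the matrix whose $y$-th row is a scaled-up copy $\frac{m}{\alpha}\cdot(\text{something})$ — careful: scaling the single row up to make the coordinate value order $1$ costs Frobenius norm $\Theta(1/\|\bv_y\|_2)$ per unit value; since $\|\bv_y\|_2 = \Theta(\alpha/\sqrt m)$, a single active row has norm $\Theta(\sqrt m/\alpha) = \Theta(\sqrt m/\epsilon)$, which is too large. The right fix is to not amplify: keep $W$ with exactly one nonzero row equal to $\frac{B}{\|\bv_y\|_2}\bv_y$, so $\|W\|_F = B$, giving coordinate value $B\langle \bv_y,\bx_j\rangle/\|\bv_y\|_2 = \Theta(B\alpha/(m\cdot \alpha/\sqrt m)) = \Theta(B/\sqrt m)$ with the correct sign. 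Finally choose $f:\R^n\to\R$ to be an $L$-Lipschitz function that reads off the (unique) large-in-absolute-value coordinate and outputs its sign scaled appropriately — e.g. $f(\bz) = \frac{s}{2} + L\cdot\mathrm{clip}(\sum_k \phi(z_k))$ for a bump-like $1$-Lipschitz $\phi$ supported near the active scale, so that $f(W\bx_j) \ge s+\epsilon$ when $y_j=1$ and $\le s-\epsilon$ when $y_j=0$. Working out the constants forces $B/\sqrt m \gtrsim \epsilon/L$, i.e. $m \lesssim L^2B^2/\epsilon^2$ per active row — so instead of one row per pattern we should let each pattern activate $\Theta(m)$ rows, each handling one coordinate, distributing the $B$-budget; then each of the $m$ coordinates gets value $\Theta(B/\sqrt{m}\cdot\sqrt{?})$ — this accounting is exactly the delicate part.

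Concretely, the cleanest route: take $n = m$, let the matrix $W$ for pattern $y$ be \emph{diagonal-like}, with row $j$ equal to $\frac{B}{\sqrt m}(2y_j-1)\bx_j$ (assuming the $\bx_j$ are exactly orthonormal — achievable with $d = m$, but we want $d \ll m$, so here is where we must instead use almost-orthogonal vectors and absorb the error). Then $(W\bx_j)_j = \frac{B}{\sqrt m}(2y_j-1) + (\text{small})$, $\|W\|_F = B$, each output coordinate $(W\bx_j)_k$ for $k\ne j$ is small by almost-orthogonality, and $f$ just needs to map the $j$-th coordinate's sign to $\pm(s\pm\epsilon)$. This works provided $\frac{B}{\sqrt m}\cdot L \gtrsim \epsilon$, i.e. $m \lesssim L^2B^2/\epsilon^2$ — giving only a \emph{polynomial}, not exponential, number of points. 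The exponential gain must therefore come from a genuinely different mechanism: let each of the $m$ shattered points be a \emph{sparse combination} or use that $f$ can be wildly non-linear, so that $W$ need only separate the $\exp(cB^2/\epsilon^2)$ points into a bounded number of ``buckets'' in $\R^n$ that $f$ then resolves. The main obstacle — and the crux of the whole theorem — is precisely this: reconciling the Frobenius budget $B$, which na\"ively allows only $O(B^2/\epsilon^2)$ independent ``well-separated'' directions, with an \emph{exponential} number of shattered points. I expect the resolution to be a coding-theoretic construction: take $\bx_1,\dots,\bx_m$ to be (normalized) columns of a matrix with good incoherence (e.g. a random $\pm1$ matrix or a code), let $W$ for target pattern $y$ be the rank-one matrix $\bv\,\bu^\top$ where $\bu = \frac{1}{\sqrt m}\sum_i (2y_i-1)\bx_i$ normalized — no: I would instead build $W$ so that $W\bx_i$ lands in a small neighborhood of $(2y_i-1)\be_{k(i)}$ for a fixed hash $k:[m]\to[n]$, and use a \emph{Lipschitz} $f$ that is an $\epsilon$-margin threshold on a coordinate-wise max. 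The budget analysis then reads: $\sum_i \|W\bx_i\|^2$ is controlled by $\|W\|_F^2\cdot(\text{something like }\max_i\|\bx_i\|^2 + \text{coherence terms})$, and incoherence is what lets $m$ be exponential while $B$ stays bounded. I would carry out the steps in this order: (1) fix parameters and the incoherent point set; (2) define $f$ as an explicit $L$-Lipschitz ``sign-reading'' function with an $\epsilon$-margin; (3) for each pattern $y$, construct $W_y$ with $\|W_y\|_F\le B$ and verify $f(W_y\bx_i)$ lies on the correct side with margin $\epsilon$, using the incoherence bound to control cross-interference; (4) verify the side conditions $d=\Theta(L^2B^2/\epsilon^2)$, $n = \exp(\Theta(L^2B^2/\epsilon^2))$. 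Step (3), and in particular the Frobenius-budget accounting under incoherence, is where essentially all the difficulty lies; the hypothesis $\frac{L^2B^2}{128\epsilon^2}\ge 20$ and the constant $128$ in the statement are artifacts of making that accounting go through.
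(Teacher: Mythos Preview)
Your proposal does not converge to a proof: you cycle through several explicit constructions (single active row, diagonal-like $W$ with $n=m$, rank-one plus hashing), each of which you correctly diagnose as giving at most $O(L^2B^2/\epsilon^2)$ shattered points, and you end by declaring that ``Step (3)\dots is where essentially all the difficulty lies'' without resolving it. The common thread in all your attempts is that you try to build a \emph{specific} Lipschitz $f$ (a sign-reader, a coordinate max, a sum of bumps) and then force $W_y\bx_i$ to land in a configuration that this particular $f$ can decode. Under a Frobenius budget this really does cap you at polynomially many points, so the approach is a dead end.

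The paper's argument avoids this entirely by exploiting two facts you do not use. First, $n$ is allowed to be \emph{exponentially} large: with $d=\Theta(L^2B^2/\epsilon^2)$ and $n=\exp(\Theta(d))$, there is enough room in $\R^n$ for a huge collection of points to be mutually well separated. Second, $f$ is not fixed in advance --- it is part of the construction --- so one only needs the images $\{W_y\bx_i\}$ to be pairwise far apart, after which \emph{any} desired values $\pm\epsilon$ can be interpolated by an $L$-Lipschitz $f$ (a McShane/Kirszbraun-type extension from a separated finite set). Concretely, the paper invokes a lemma (from Daniely--Granot) producing, for $d\ge 20$, unit vectors $\bx_1,\dots,\bx_m\in\R^d$ and matrices $W_1,\dots,W_{2^m}\in\R^{n\times d}$ with $m=\Theta(n)=\exp(\Theta(d))$, $\|W_s\|_F^2\le 2d$, and $\|W_s\bx_i-W_t\bx_j\|\ge 1/4$ whenever $(s,i)\ne(t,j)$. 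Rescaling each $W_s$ by $8\epsilon/L$ makes $\|W_s\|_F\le B$ once $d=\lfloor L^2B^2/(128\epsilon^2)\rfloor$ (this is exactly where the constant $128$ and the condition $\ge 20$ come from --- not from any incoherence accounting), while the pairwise separation becomes $2\epsilon/L$, just enough for an $L$-Lipschitz $f$ to assign $\pm\epsilon$ freely. That shatters $m=\exp(\Theta(L^2B^2/\epsilon^2))$ points. In short: stop trying to engineer $f$; instead, use the exponential width $n$ and a probabilistic/packing argument to make all $m\cdot 2^m$ image points separated, and let $f$ be whatever Lipschitz interpolant you need.
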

The proof is directly based on the proof technique of Theorem 3 in  \cite{DBLP:journals/dg-two-layer-networks}, and differs from them mainly in that we focus on the dependence on $B,\epsilon$ (whereas they considered the dependence on $n,d$). %
The main idea is to use the probabilistic method to show the existence of $\bx_1,...,\bx_m\in \R^d$ and $W_1,...,W_{2^m} \in \R^{n \times d}$ for $m = \Theta(B^2 / \epsilon^2)$, with the property that every two different vectors from $\{W_y \bx_i : i\in m, y\in [2^m]\}$ are far enough from each other. We then construct an \lips function $f$ which assigns arbitrary outputs to all these points, resulting in a shattering as we range over $W_1,\ldots W_{2^m}$.

We now turn to our more novel contribution, which shows that the bound above is nearly tight, in the sense that we can upper bound the Rademacher complexity of the function class by a similar quantity. In fact, and perhaps surprisingly, a much stronger statement holds: A similar quantity upper bounds the complexity of the much larger class of \emph{all} $L$-Lipschitz function $f$ on $\R^n$, composed with all norm-bounded linear functions from $\R^d$ to $\R^n$:

\begin{theorem} \label{thm-upper-bound-lipchitz-no-initialization}
For any $L,B \geq 1$ and $\epsilon\in (0,1]$  s.t. $\frac{LB}{\epsilon}\geq 1$, let $\Psi_{L,a,n}$ be the class of all \lips functions from $\{\bx\in\R^n: ||\bx||\leq B\}$ to $\R$, which equal some fixed $a\in \R$ at $\mathbf{0}$.
 Let $\W_{B,n}$ be the class of linear functions from $\R^d$ to $\R^n$ over the domain $\{\bx \in \R^d: ||\bx|| \leq 1\}$ with Frobenius norm at most $B$, namely
\begin{align*}\W_{B,n} := \{\bx \rightarrow W\bx: W\in \R^{n \times d}, ||W||_F \leq B\}.\end{align*} 
Then the Rademacher complexity of $\Psi_{a,L,n} \circ \W_{B,n} := \{\psi\circ g : \psi \in \Psi_{L,a,n}, g \in \W_{B,n}\}$ on $m$ inputs from $\{\bx\in \R^d: \norm{\bx}\leq 1\}$ is at most $\epsilon$, if 
$m \geq \left(\frac{LB}{\epsilon}\right)^{\frac{cL^2B^2}{\epsilon^2}} $
 for some universal constant $c>0$. 
\end{theorem}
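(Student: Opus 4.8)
The plan is to bound the Rademacher complexity of $\Psi_{a,L,n}\circ\W_{B,n}$ via a covering-number argument, using the standard fact that if a function class has an $\epsilon$-cover (in the empirical $L_2$ metric) of size $N$, and all functions are uniformly bounded by $O(LB)$, then its Rademacher complexity on $m$ points is $\tilde O\!\left(\frac{LB\sqrt{\log N}}{\sqrt m}\right)$ (Dudley's entropy integral, or a one-shot discretization bound). So it suffices to show that $\log N\!\left(\Psi_{a,L,n}\circ\W_{B,n}, d_m, \Theta(\epsilon)\right) = \tilde O(L^2B^2/\epsilon^2)\cdot\log(LB/\epsilon)$, since then requiring $m\gtrsim (LB/\epsilon)^{cL^2B^2/\epsilon^2}$ forces the Rademacher complexity below $\epsilon$. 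The exponent in the sample bound is exactly the logarithm of the covering number, which is the key quantity to control.

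The covering number bound itself proceeds in two steps. First, fix inputs $\bx_1,\dots,\bx_m$ in the unit ball of $\R^d$. For $g=W\cdot\in\W_{B,n}$, the relevant data is the tuple $(W\bx_1,\dots,W\bx_m)\in(\R^n)^m$, and since $\|W\|_F\le B$ and $\|\bx_i\|\le 1$, each $W\bx_i$ lies in the Euclidean ball of radius $B$ in $\R^n$ — but more importantly, all $m$ of them lie in an $m$-dimensional subspace (the span of the rows of $W$ restricted to $\mathrm{span}\{\bx_i\}$), so the effective dimension is $\min(n,m)\le m$. Thus the set $\{(W\bx_1,\dots,W\bx_m): \|W\|_F\le B\}$ is covered, in the product metric, by $(O(B\sqrt m/\delta))^{m}$ points — this is the $\delta$-covering number of a radius-$B\sqrt m$ ball in $\R^{m}$. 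Call this finite set $\{V^{(1)},\dots,V^{(T)}\}$ with $T=(O(B\sqrt m/\delta))^{m}$, where $V^{(t)}=(\bv^{(t)}_1,\dots,\bv^{(t)}_m)$.

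Second, we cover the Lipschitz functions. Composing $\psi\in\Psi_{L,a,n}$ with $g$ only sees the values $\psi(\bv^{(t)}_i)$ near a representative tuple $V^{(t)}$, so it suffices, for each $t$, to discretize the possible value-vectors $(\psi(\bv^{(t)}_1),\dots,\psi(\bv^{(t)}_m))\in\R^m$. The constraint that $\psi$ is $L$-Lipschitz on a set of $m$ points with $\psi(\mathbf 0)=a$ means this value vector lies in a bounded region (coordinates within $LB+|a|$ of each other), but crucially the Lipschitz constraints between the points do not reduce the dimension in general — so naively this costs another $(O(LB/\delta))^{m}$ factor. Multiplying, $\log N \lesssim m\log(LB/\delta)$ at resolution $\delta$; choosing $\delta=\Theta(\epsilon)$ gives $\log N = O(m\log(LB/\epsilon))$, which is \emph{circular} if we plug into the Rademacher bound directly. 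The resolution is that we do \emph{not} need to cover at the fine resolution $\epsilon$ at the level of all $m$ points simultaneously; instead we should cover the $g$-part at a coarse resolution and then use the Lipschitz property to argue that the composed functions, evaluated at the actual (non-representative) points, only move by $L\delta$. More precisely: pick $\delta=\Theta(\epsilon/L)$ for the $\W$-cover so that $d_m(\psi\circ g,\psi\circ g')\le L\delta=\Theta(\epsilon)$ whenever $g,g'$ map to the same representative; then we need $T=(O(BL/\epsilon))^{\min(n,m)}=(O(BL/\epsilon))^{m}$ — but now the point is that once $g$ is pinned to one of $T$ representatives, the function $\psi\circ g$ is determined by $\psi$ restricted to a \emph{fixed finite set of $m$ points in a radius-$B$ ball}, and the $L$-Lipschitz functions on such a set, discretized to resolution $\epsilon$, number at most... here is where the real work lies.

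\textbf{Main obstacle.} The crux is obtaining the right exponent: a black-box count of $L$-Lipschitz functions on $m$ points gives $(LB/\epsilon)^{m}$, and combined with the $\W$-cover gives $\log N = \Theta(m\log(LB/\epsilon))$, so the Rademacher bound becomes $\tilde O(LB\sqrt{\log(LB/\epsilon)})$ — dimension-free but \emph{not} small, and no choice of $m$ drives it down. The fix must exploit that we only need the \emph{composed} class to have small covering number, and that the images $W\bx_i$ live in a ball of radius only $B$ in $\R^n$ while the number of \emph{distinct} representative images is what matters — specifically, one covers the Lipschitz functions on the \emph{entire} ball $\{\|\bx\|\le B\}\subset\R^n$ at resolution $\epsilon$, whose log-covering number is $\Theta((B/\epsilon)^n)$... still exponential in $n$. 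The genuinely correct route, matching the exponent $cL^2B^2/\epsilon^2$ in the theorem, is to bound the covering number of $\W_{B,n}$ \emph{itself} in the $d_m$-metric after observing that the output is a single real number: $\psi\circ g$ is scalar-valued and $L$-Lipschitz as a function of $\bx$ with "effective norm" $LB$, so one should instead cover $\{\bx\mapsto\psi(W\bx)\}$ directly as a class of $\le LB$-Lipschitz real functions on the unit ball of $\R^d$ — but that has log-covering number $\Theta((LB/\epsilon)^d)$, exponential in $d$, which is again not dimension-free. So the actual argument must be subtler: project onto a low-dimensional random subspace (Johnson–Lindenstrauss), reducing effective dimension to $\tilde\Theta(L^2B^2/\epsilon^2)$ while preserving all pairwise distances $\|W\bx_i-W\bx_j\|$ up to $\epsilon/L$; \emph{then} the Lipschitz functions on the projected $m$ points, which now live in $\tilde\Theta(L^2B^2/\epsilon^2)$ dimensions within a radius-$B$ ball, have log-covering number $\tilde O\!\left(\frac{L^2B^2}{\epsilon^2}\log(LB/\epsilon)\right)$, giving exactly the claimed exponent. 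Verifying that the JL projection simultaneously preserves (i) all $\binom{m}{2}$ pairwise distances among the images \emph{and} (ii) each image's distance to the origin, uniformly over all $W$ with $\|W\|_F\le B$ — equivalently, preserves the metric structure on the $m+1$ points $\{0, W\bx_1,\dots,W\bx_m\}$ — and then extending a Lipschitz function from the projected points back to one on $\R^n$ (via the Kirszbraun/McShane extension) without blowing up the Lipschitz constant, is the technical heart of the proof and the step I expect to require the most care.
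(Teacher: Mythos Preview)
Your high-level strategy---reduce the effective output dimension to some $k\approx L^2B^2/\epsilon^2$ and then bound the covering number of $L$-Lipschitz functions on a radius-$B$ ball in $\R^k$---matches the paper's. But the reduction tool you reach for (Johnson--Lindenstrauss) cannot do the job, and your covering-number estimate for the Lipschitz part is off by an exponent.

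First, the step you flag as needing ``the most care'' is actually impossible: no fixed linear map $P:\R^n\to\R^k$ with $k<n$ can be an additive near-isometry on $\{W\bx_i\}$ uniformly over all $W$ with $\|W\|_F\le B$, because already for rank-one $W$ the single vector $W(\bx_1-\bx_2)$ sweeps out an entire sphere in $\R^n$, and any norm-$\Theta(B)$ vector in $\ker P$ witnesses failure. You could instead use a $W$-dependent projection $P_W$ and land in the fixed class $\Psi_{L,a,k}\circ\W_{O(B),k}$; but the JL dimension is $k=\Theta\!\big((L^2B^2/\epsilon^2)\log m\big)$, with an unavoidable $\log m$ from the union bound over the $m$ image points. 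Second, the log-covering number of $L$-Lipschitz real functions on a radius-$B$ ball in $\R^k$ is $\Theta\big((LB/\epsilon)^k\big)$---exponential in $k$, not the $\tilde O(k\log(LB/\epsilon))$ you wrote (indeed, your estimate would give a polynomial sample bound, contradicting the matching lower bound in Theorem~\ref{thm-lower-bound-lipchitz-no-initialization}). Combining these, $\log N$ becomes $(LB/\epsilon)^{\Theta((L^2B^2/\epsilon^2)\log m)}=m^{\Theta((L^2B^2/\epsilon^2)\log(LB/\epsilon))}$, so the Dudley bound \emph{grows} with $m$ once the exponent exceeds $1$: no sample size suffices.

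The paper's fix is to exploit the Frobenius constraint directly via SVD truncation rather than random projection. Since $\|W\|_F^2=\sum_i\sigma_i^2\le B^2$, at most $r:=\lceil B^2/\epsilon^2\rceil$ singular values of $W$ exceed $\epsilon$; zeroing the rest yields a rank-$r$ matrix $\tilde W_\epsilon$ with $\|(W-\tilde W_\epsilon)\bx\|\le\epsilon$ for all $\|\bx\|\le1$. Writing $\tilde W_\epsilon=U_1S_1V_1^\top$ with $U_1\in\R^{n\times r}$ having orthonormal columns, one has $f(\tilde W_\epsilon\bx)=g(S_1V_1^\top\bx)$ where $g=f(U_1\,\cdot\,)$ is $L$-Lipschitz on $\R^r$ and $\|S_1V_1^\top\|_F\le B$. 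This places every function in $\Psi_{L,a,n}\circ\W_{B,n}$ within $L\epsilon$ (in $d_m$) of a function in the fixed low-dimensional class $\Psi_{L,a,r}\circ\W_{B,r}$, with $r$ \emph{independent of $m$}---the crucial difference from JL. From there, Lemma~\ref{lemma-covering-all-lipchitz-functions} plus a standard cover of $\W_{B,r}$ give $\log N\le(LB/\epsilon)^{cL^2B^2/\epsilon^2}$, and Dudley finishes the proof.
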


Since $\F_{B,n,d}^{f,W_0=0} \subseteq \Psi_{L,a,n} \circ \W_{B,n}$ for any fixed $f$, the Rademacher complexity of the latter upper bounds the Rademacher complexity of the former. Thus, we get the following corollary:
\begin{corollary}
Let $f:\R^n \rightarrow \R$ be a fixed \lips function. Then the Rademacher complexity of $\F_{B,n,d}^{f,W_0=0}$ on $m$ inputs from $\{\bx\in \R^d: \norm{\bx}\leq 1\}$ is at most $\epsilon$, if 
$m \geq \left(\frac{LB}{\epsilon}\right)^{\frac{cL^2B^2}{\epsilon^2}}$
 for some universal constant $c>0$.
\end{corollary}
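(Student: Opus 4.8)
The Corollary is immediate from \thmref{thm-upper-bound-lipchitz-no-initialization}: a fixed \lips $f:\R^n\to\R$ restricted to $\{\bx:\norm{\bx}\le B\}$ belongs to $\Psi_{L,f(\mathbf 0),n}$, and for $\norm{\bx}\le 1$, $\norm{W}_F\le B$ we have $\norm{W\bx}\le B$, so $\F_{B,n,d}^{f,W_0=0}\subseteq\Psi_{L,f(\mathbf 0),n}\circ\W_{B,n}$, and Rademacher complexity is monotone under inclusion. So the work is in the Theorem, and the plan is to reduce it to a covering-number estimate. Since $\Psi_{L,a,n}\circ\W_{B,n}$ and $\Psi_{L,0,n}\circ\W_{B,n}$ differ only by the additive constant $a$, and $R_m$ is unchanged when a fixed constant is added to every function (the extra term is $a\cdot\frac1m\summ\epsilon_i$, of zero expectation), we may take $a=0$; then every $h=\psi\circ g$ obeys $|h(\bx)|=|\psi(W\bx)-\psi(\mathbf 0)|\le L\norm{W}_F\norm{\bx}\le LB$ on the unit ball. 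Writing $\mathcal H:=\Psi_{L,0,n}\circ\W_{B,n}$, by Massart's lemma applied to an $\epsilon$-cover (taken proper via Observation~\ref{proper-cover}, so the covering functions are again bounded by $LB$) we have $R_m(\mathcal H)\le\epsilon+LB\sqrt{2\log N(\mathcal H,d_m,\epsilon)/m}$ for every sample of unit vectors; hence it suffices to prove $\log N(\mathcal H,d_m,\epsilon)\le(LB/\epsilon)^{O(L^2B^2/\epsilon^2)}$ (up to lower-order terms).

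The crucial step, which removes all dependence on $n$, is a singular-value truncation. Given $h(\bx)=\psi(W\bx)$ with $\norm{W}_F\le B$, let $\tilde W$ be the best rank-$k$ approximation of $W$, with $k:=\min\{n,\lceil 8L^2B^2/\epsilon^2\rceil\}$. Since $\norm{W}_F\le B$, the $(k{+}1)$-st singular value of $W$ is at most $B/\sqrt{k+1}\le\epsilon/(2L)$, so $\norm{(W-\tilde W)\bx_i}\le\norm{W-\tilde W}\,\norm{\bx_i}\le\epsilon/(2L)$ and therefore $|h(\bx_i)-\psi(\tilde W\bx_i)|\le L\norm{(W-\tilde W)\bx_i}\le\epsilon/2$ for every $i$. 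Now write $\tilde W=UV^\top$ with $U\in\R^{n\times k}$ having orthonormal columns and $\norm{V}_F=\norm{\tilde W}_F\le B$; then $\psi(\tilde W\bx)=\phi(V^\top\bx)$, where $\phi(\mathbf z):=\psi(U\mathbf z)$ is \lips on $\{\mathbf z\in\R^k:\norm{\mathbf z}\le B\}$ (because $U$ is an isometry there) with $\phi(\mathbf 0)=\psi(\mathbf 0)=0$, and $\bx\mapsto V^\top\bx$ has Frobenius norm $\le B$. Hence $\psi\circ\tilde W\in\mathcal H_k:=\Psi_{L,0,k}\circ\W_{B,k}$, so an $(\epsilon/2)$-cover of $\mathcal H_k$ in $d_m$ gives an $\epsilon$-cover of $\mathcal H$, and it remains to cover $\mathcal H_k$ with $k=O(L^2B^2/\epsilon^2)$ free of $n$ and $d$.

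To cover $\mathcal H_k$ I would cover its two ingredients separately. For $\W_{B,k}$ on the sample, $d_m(g,g')^2=\sum_{j=1}^{k}\frac1m\summ\big((\bw_j-\bw_j')^\top\bx_i\big)^2$ decomposes coordinate-wise, so covering each of the $k$ norm-bounded scalar linear classes $\{\bx\mapsto\bw_j^\top\bx:\norm{\bw_j}\le B\}$ at $d_m$-scale $\epsilon/(4L\sqrt{k})$ via the standard dimension-free bound for such classes (log-size $O(\tfrac{B^2}{\delta^2}\,\mathrm{polylog}(m,B/\delta))$ at scale $\delta$) yields a $d_m$-cover of $\W_{B,k}$ at scale $\epsilon/(4L)$ of log-size $\mathrm{poly}(LB/\epsilon)\cdot\mathrm{polylog}(m)$. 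For $\Psi_{L,0,k}$, an $\ell_\infty$-net of $\{\mathbf z\in\R^k:\norm{\mathbf z}\le B\}$ at scale $\epsilon/(4L)$ (of size $(CLB/\epsilon)^{k}$) plus a discretization to precision $\epsilon/4$ of the attained values (which lie in $[-LB,LB]$) gives a sup-norm $(\epsilon/4)$-cover of $\Psi_{L,0,k}$ of log-size $(CLB/\epsilon)^{k}\log(LB/\epsilon)=(LB/\epsilon)^{O(L^2B^2/\epsilon^2)}$, taken proper (Observation~\ref{proper-cover}) so its elements $\phi'$ are again \lips. For $h=\phi\circ g\in\mathcal H_k$ and the matched $\phi',g'$ we get $|\phi(g\bx_i)-\phi'(g'\bx_i)|\le|\phi(g\bx_i)-\phi'(g\bx_i)|+L\norm{g\bx_i-g'\bx_i}\le\epsilon/4+L\norm{g\bx_i-g'\bx_i}$, whence (Minkowski) $d_m(h,\phi'\circ g')\le\epsilon/4+L\cdot\epsilon/(4L)=\epsilon/2$. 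Thus $\log N(\mathcal H_k,d_m,\epsilon/2)\le(LB/\epsilon)^{O(L^2B^2/\epsilon^2)}+\mathrm{poly}(LB/\epsilon)\cdot\mathrm{polylog}(m)$, and the same bound holds for $\mathcal H$ at scale $\epsilon$ after enlarging the constant in the exponent.

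Feeding this into $R_m(\mathcal H)\le\epsilon+LB\sqrt{2\log N(\mathcal H,d_m,\epsilon)/m}$: the $\mathrm{poly}(LB/\epsilon)\,\mathrm{polylog}(m)$ part under the root drops below $\epsilon$ once $m$ is polynomial in $LB/\epsilon$, while the dominant term is $\le\epsilon$ as soon as $m\ge\frac{2L^2B^2}{\epsilon^2}(LB/\epsilon)^{O(L^2B^2/\epsilon^2)}$; since $LB/\epsilon\ge1$ the prefactor absorbs into the exponent, yielding $m\ge(LB/\epsilon)^{cL^2B^2/\epsilon^2}$ (rescaling $\epsilon$ by a fixed constant at the outset gives exactly $R_m\le\epsilon$). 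The one genuinely delicate point is the truncation step: taken separately, the norm-bounded linear maps into $\R^n$ and the \lips functions on a ball in $\R^n$ each have covering numbers that explode with $n$ (the Lipschitz piece like $\exp((LB/\epsilon)^n)$), as does their composition handled naively; the Frobenius bound rescues us by forcing the singular-value tail of $W$ to decay, so an $O(L^2B^2/\epsilon^2)$-rank approximation costs only $\epsilon/L$ in the argument of $\psi$, and — the key trick — the isometric left factor $U$ can be folded into $\psi$ without changing its Lipschitz constant, collapsing the effective dimension of the Lipschitz part from $n$ to $O(L^2B^2/\epsilon^2)$. Everything else is routine covering-number bookkeeping.
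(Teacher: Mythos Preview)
Your proposal is correct and follows essentially the same route as the paper: the corollary is immediate from the theorem by inclusion, and your proof of the theorem uses the same key idea---SVD truncation of $W$ to rank $O(L^2B^2/\epsilon^2)$, then absorbing the orthonormal left factor $U$ into the Lipschitz function so the effective ambient dimension drops from $n$ to $O(L^2B^2/\epsilon^2)$---followed by the same product-cover argument (a grid cover of $\Psi_{L,0,k}$ in sup-norm combined with a dimension-free cover of $\W_{B,k}$, exactly the content of the paper's \lemref{lemma-covering-all-lipchitz-functions} and \lemref{lemma-covering-linear-predictors-high-dimenssional}). The only cosmetic difference is that you convert covering numbers to Rademacher complexity via a one-step Massart bound, whereas the paper uses the Dudley integral; since the covering estimate here is already of the form $(LB/\epsilon)^{O(L^2B^2/\epsilon^2)}$ at a single scale, both routes give the same conclusion.
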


Comparing the corollary and Theorem \ref{thm-lower-bound-lipchitz-no-initialization}, we see that the sample complexity of Lipschitz functions composed with matrix linear ones is $\exp \left(\tilde{\Theta}(L^2B^2/\epsilon^2)\right)$ (regardless of whether the Lipschitz function is fixed in advance or not). On the one hand, it implies that the complexity of this class is very large (exponential) as a function of $L,B$ and $\epsilon$. On the other hand, it implies that for any fixed $L,B,\epsilon$, it is finite completely independent of the number of parameters. The exponential dependence on the problem parameters is rather unusual, and in sharp contrast to the case of scalar-valued predictors (that is, functions of the form $\bx\mapsto f(\bw^{\top} \bx)$ where $||\bw-\bw_0||\leq B$ and $f$ is \lips), for which the sample complexity is just $O(L^2B^2/\epsilon^2)$. 

The key ideas in the proof of Theorem \ref{thm-upper-bound-lipchitz-no-initialization} can be roughly sketched as follows: First, we show that due to the Frobenius norm constraints, every function $\bx \mapsto f(W\bx)$ in our class can be approximated (up to some $\epsilon$) by a function of the form $\bx \mapsto f(\Tilde{W}_\epsilon \bx)$, where the rank of  $\Tilde{W}_\epsilon$  is at most $B^2/\epsilon^2$. In other words, this approximating function can be written as $f(UV\bx)$, where $V$ maps to $\R^{B^2/\epsilon^2}$. Equivalently, this can be written as $g(V\bx)$, where $g(z)=f(Uz)$ over $\R^{B^2/\epsilon^2}$. This reduces the problem to bounding the complexity of the function class which is the composition of all linear functions to $\R^{B^2/\epsilon^2}$, and all Lipschitz functions over $\R^{B^2/\epsilon^2}$, which we perform through covering numbers and the following technical result:
\begin{lemma} \label{lemma-covering-all-lipchitz-functions}
Let $\F$ be a class of functions from Euclidean space to $\{\bx\in\R^r: ||\bx||\leq B\}$. Let $\Psi_{L,a}$ be the class of all \lips functions from $\{\bx\in\R^r: ||\bx||\leq B\}$ to $\R$, which equal some fixed $a\in \R$ at $\mathbf{0}$. Letting $\Psi_{L,a} \circ \F := \{\psi\circ f : \psi \in \Psi_{L,a}, f \in \F\}$, its covering number satisfies
\begin{align*}\log N(\Psi_{L,a} \circ \F,d_m,\epsilon) \leq \left(1+\frac{8BL}{\epsilon}\right)^r\cdot \log \frac{8BL}{\epsilon} + \log N( \F,d_m,\frac{\epsilon}{4L}). \end{align*}
\end{lemma}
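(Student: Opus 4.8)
The plan is to bound $N(\Psi_{L,a}\circ\F,d_m,\epsilon)$ by a product of two covers — a $d_m$-cover of $\F$ at a finer scale, and a sup-norm cover over the ball of the outer class $\Psi_{L,a}$ — and then read off the two terms on the right-hand side. Fix the $m$ inputs and let $f_1,\dots,f_N$ be a $d_m$-cover of $\F$ at scale $\tfrac{\epsilon}{4L}$, so $N=N(\F,d_m,\tfrac{\epsilon}{4L})$. First I would note we may assume each $f_i$ maps into $\{\bx\in\R^r:\norm{\bx}\le B\}$: composing $f_i$ with the (1-Lipschitz) Euclidean projection onto that ball does not increase its $d_m$-distance to any $f\in\F$, since every $f\in\F$ already maps into the ball. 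Now for any $\psi\in\Psi_{L,a}$ and any $f\in\F$ with $d_m(f,f_i)\le\tfrac{\epsilon}{4L}$, Lipschitzness of $\psi$ gives $d_m(\psi\circ f,\psi\circ f_i)\le L\,d_m(f,f_i)\le\tfrac{\epsilon}{4}$; and since every $f_i(\bx_j)$ lies in the ball, $d_m(\psi\circ f_i,\tilde\psi\circ f_i)\le\norm{\psi-\tilde\psi}_\infty$ for any competitor $\tilde\psi$. Hence, replacing $\psi$ by a $\tilde\psi$ from a fixed finite family that is within $\tfrac{3\epsilon}{4}$ of $\psi$ in sup-norm over the ball makes $\tilde\psi\circ f_i$ within $\epsilon$ of $\psi\circ f$, so $\log N(\Psi_{L,a}\circ\F,d_m,\epsilon)\le\log N(\F,d_m,\tfrac{\epsilon}{4L})+\log N(\Psi_{L,a},\norm{\cdot}_\infty,\tfrac{3\epsilon}{4})$.

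It remains to bound the sup-norm covering number of $\Psi_{L,a}$ over the radius-$B$ ball in $\R^r$. Take an $\tfrac{\epsilon}{4L}$-net $z_1=\mathbf 0,z_2,\dots,z_M$ of the ball; by the standard volumetric estimate $M\le(1+\tfrac{8BL}{\epsilon})^r$, which is already the desired first factor. Any $\psi\in\Psi_{L,a}$ is determined up to $\tfrac{\epsilon}{4}$ in sup-norm by its values at the net points (for any $\bx$ the nearest net point $z$ obeys $|\psi(\bx)-\psi(z)|\le L\cdot\tfrac{\epsilon}{4L}$), so it suffices to discretize the vector $(\psi(z_1),\dots,\psi(z_M))$ — and, crucially, to do so in a way that costs only $O(1)$ per net point rather than $\Theta(\log(BL/\epsilon))$, which is what independently rounding each coordinate into the a priori range $[a-BL,a+BL]$ would give. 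The device is to fix a spanning tree of the net rooted at $z_1=\mathbf 0$, with edges joining net points at distance $O(\tfrac{\epsilon}{L})$ (such a connected graph exists because the $\tfrac{\epsilon}{4L}$-balls cover the connected set), and to encode $\psi$ by the \emph{fixed} root value $\psi(\mathbf 0)=a$ — this is exactly where the hypothesis $\psi(\mathbf 0)=a$ is used — together with, for each tree edge, the value of the child rounded to a grid of size $\tfrac{\epsilon}{4}$ given the already-rounded parent value. Since $|\psi(z_{\mathrm{child}})-\psi(z_{\mathrm{parent}})|\le L\cdot O(\tfrac{\epsilon}{L})=O(\epsilon)$, each child admits only $O(1)$ rounded values relative to its parent, and this constant is at most $\tfrac{8B}{\epsilon}$ whenever $B/\epsilon\gtrsim 1$ (as holds under the theorem's hypotheses). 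Thus the number of encodings, and hence $N(\Psi_{L,a},\norm{\cdot}_\infty,\tfrac{3\epsilon}{4})$, is at most $(\tfrac{8B}{\epsilon})^{M}$, giving the second term $(1+\tfrac{8BL}{\epsilon})^r\log\tfrac{8B}{\epsilon}$. Combining with the display from the first paragraph proves the lemma.

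The main obstacle is the bookkeeping of three interacting scales in the second step: the scale $\tfrac{\epsilon}{4L}$ for the $\F$-cover, the net scale for the ball (which must be exactly $\tfrac{\epsilon}{4L}$ to reproduce the factor $(1+\tfrac{8BL}{\epsilon})^r$), and the value-grid scale — chosen so that (i) the per-net-point cost stays $O(1)$, which forces the spanning-tree (incremental) encoding over naive rounding and is what removes the stray factor of $L$ inside the logarithm, and (ii) the errors compose correctly, i.e. the $\tfrac{\epsilon}{4}$ from the $\F$-cover plus the net-interpolation error plus the rounding error total at most $\epsilon$. Once these scales are pinned down, everything else — the volumetric net bound, the Lipschitz triangle inequalities, handling improper covers of $\F$ via the projection step, and verifying the $O(1)\le\tfrac{8B}{\epsilon}$ inequality — is routine.
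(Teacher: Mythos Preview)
Your proof is correct, but the key step differs from the paper's. Both arguments split into a $d_m$-cover of $\F$ at scale $\tfrac{\epsilon}{4L}$ plus a sup-norm cover of $\Psi_{L,a}$, and both net the radius-$B$ ball at scale $\tfrac{\epsilon}{4L}$ (the paper handles the improper $\F$-cover by passing to a proper one via Observation~\ref{proper-cover}, rather than your projection trick). The difference is in how $\psi$ is discretized at the net points: the paper simply rounds each value $\psi(z)$ \emph{independently} to a fixed grid $N_y$ of step $\tfrac{\epsilon}{4}$ over the range of $\psi$, giving $|N_y|^{|N_x|}$ codewords. Your spanning-tree encoding instead exploits the Lipschitz constraint along short edges to get only $O(1)$ grid choices per node. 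This is genuinely sharper: the paper's proof asserts $\psi(\bx)\in[a-B,a+B]$, which for general $L$ should read $[a-LB,a+LB]$, so naive rounding done correctly would yield $\log\tfrac{8LB}{\epsilon}$ rather than the stated $\log\tfrac{8B}{\epsilon}$; your tree argument is what actually delivers the $L$-free logarithm. Since the lemma is only invoked with $L=1$ in Theorem~\ref{thm-upper-bound-lipchitz-no-initialization}, the discrepancy is harmless downstream, but your approach buys the lemma exactly as stated for general $L$, at the price of the extra spanning-tree machinery and the mild side condition $B/\epsilon\gtrsim 1$ that you flag.
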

The proof for this Lemma is an extension of Theorem 4 from \citealt{DBLP:conf/colt/GolowichRS18}, which considered the case $r=1$.  We emphasize that the exponential dependence on $r$ arises from the covering of the class $\Psi_{L,a}$, which is achieved by covering its domain $\{x\in \R^r: \norm{x} \leq B\}$ by a set of points $N_x$ of size $(1+ BL/\epsilon)^r$ and covering its range $[a-LB,a+LB]$ by a set $N_y$ of size $2LB / \epsilon$. As we will show, this implies that the set of all functions from $N_x$ to $N_y$ is a cover for $\Psi_{L,a}$.



\textbf{Application to Deep Neural Networks.} Theorem \ref{thm-upper-bound-lipchitz-no-initialization} which we established above can also be used to study other types of predictor classes. In what follows, we show an application to deep neural networks, establishing a size/dimension-independent sample complexity bound that depends \emph{only} on the Frobenius norm of the first layer, and the spectral norms of the other layers (albeit exponentially). This is surprising, since all previous bounds of this type we are aware of strongly depend on various norms of all layers, which can be arbitrarily larger than the spectral norm in a size-independent setting (such as the Frobenius norm), or made strong assumptions on the activation function (e.g., \cite{neyshabur2015norm,neyshabur2017pac,bartlett2017spectrally,DBLP:conf/colt/GolowichRS18,du2018power,daniely2019generalization,DBLP:journals/corr/vardi-shamir-srebro}).

Formally, we consider scalar-valued depth-$k$ ``neural networks'' of the form
\begin{align*}
   \F_k := \left\{x\rightarrow \bw_kf_{k-1}(W_{k-1}f_{k-2}(...f_1(W_1\bx)))~~:~~||\bw_k||\leq S_k~,~\forall j~||W_j||\leq S_j~,~||W_1||_F\leq B\right\} 
\end{align*}
where each $W_j$ is a parameter matrix of some arbitrary dimensions, $\bw_k$ is a vector and each $f_j$ is some fixed $1$-Lipschitz\footnote{This is without loss of generality, since if $f_j$ is $L_j$-Lipschitz, we can rescale it by $1/L_j$ and multiply $S_{j+1}$ by $L_{j}$.} function satisfying $f_j(\mathbf{0}) = 0$. This is a rather relaxed definition for neural networks, as we do not assume anything about the activation functions $f_j$, except that it is Lipschitz. To analyze this function class, we consider $\F_k$ as a subset of the class 
\begin{align*}
     \Bigl\{x \rightarrow f(W\bx):\norm{W}_F \leq B~,~f: \R^n \rightarrow \R~ \text{is $L$-Lipschitz} \Bigr\}~,
\end{align*}
where $L=\prod_{j=2}^{k} S_j$  (as this clearly upper bounds the Lipschitz constant of $z\mapsto \bw_k f_{k-1}(W_{k-1}f_{k-2}(\ldots f_1(z)\ldots)$). By applying Theorem \ref{thm-upper-bound-lipchitz-no-initialization} (with the same conditions) we have
\begin{corollary}\label{corr-neural-net-exponential}
For any $B,L \geq 1$ and $\epsilon\in (0,1]$  s.t. $\frac{B}{\epsilon}\geq 1$, we have that the Rademacher complexity of $\F_k$ on $m$ inputs from $\{\bx\in \R^d: \norm{\bx}\leq 1\}$ is at most $\epsilon$, if 
\begin{align*}m \geq \left(\frac{LB}{\epsilon}\right)^{\frac{cL^2B^2}{\epsilon^2}} \end{align*}
    where $L := \prod_{j=2}^{k}S_j$ and $c>0$ is some universal constant.
\end{corollary}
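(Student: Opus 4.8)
The plan is to realize $\F_k$ as a subset of the class $\Psi_{L,0,n}\circ\W_{B,n}$ from \thmref{thm-upper-bound-lipchitz-no-initialization} (taking $a=0$), and then conclude from that theorem together with the monotonicity of Rademacher complexity under set inclusion. Concretely, fix a function in $\F_k$, determined by matrices $W_1,\dots,W_{k-1},\bw_k$ and activations $f_1,\dots,f_{k-1}$, and write it as $\bx\mapsto \phi(W_1\bx)$, where $\phi(z):=\bw_k f_{k-1}(W_{k-1}f_{k-2}(\ldots f_1(z)\ldots))$ collects all layers past the first. Then $\bx\mapsto W_1\bx$ lies in $\W_{B,n}$ since $\norm{W_1}_F\le B$, and it remains to check that the restriction of $\phi$ to the Euclidean ball of radius $B$ in $\R^n$ belongs to $\Psi_{L,0,n}$.

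First I would verify the Lipschitz bound: $\phi$ is a composition of the $1$-Lipschitz maps $f_1,\dots,f_{k-1}$ interleaved with the linear maps $W_2,\dots,W_{k-1},\bw_k$, each of which has Lipschitz constant equal to its spectral norm, i.e.\ at most $S_2,\dots,S_k$ respectively. Multiplying these constants shows that $\phi$ is $L$-Lipschitz on all of $\R^n$ with $L=\prod_{j=2}^{k}S_j$, hence in particular on the radius-$B$ ball. Moreover $\phi(\mathbf 0)=0$, because every $f_j$ vanishes at $\mathbf 0$ and linear maps preserve $\mathbf 0$; thus $\phi$ takes the value $a=0$ at the origin. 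Finally, for $\norm{\bx}\le 1$ we have $\norm{W_1\bx}\le\norm{W_1}\,\norm{\bx}\le\norm{W_1}_F\le B$, so on the relevant domain the function $\bx\mapsto\phi(W_1\bx)$ only ever evaluates $\phi$ inside the radius-$B$ ball, and therefore coincides with an element of $\Psi_{L,0,n}\circ\W_{B,n}$. This establishes $\F_k\subseteq\Psi_{L,0,n}\circ\W_{B,n}$ as function classes on $\{\bx\in\R^d:\norm{\bx}\le 1\}$.

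To conclude, note that the hypotheses of \thmref{thm-upper-bound-lipchitz-no-initialization} are met under the stated assumptions: $L\ge 1$ and $B\ge 1$, and $\frac{LB}{\epsilon}=L\cdot\frac{B}{\epsilon}\ge 1$ since $L\ge 1$ and $\frac{B}{\epsilon}\ge 1$. Hence the Rademacher complexity of $\Psi_{L,0,n}\circ\W_{B,n}$ on $m$ inputs from the unit ball is at most $\epsilon$ whenever $m\ge\bigl(\frac{LB}{\epsilon}\bigr)^{cL^2B^2/\epsilon^2}$, and since $R_m(\cdot)$ is monotone under inclusion, $R_m(\F_k)\le R_m(\Psi_{L,0,n}\circ\W_{B,n})\le\epsilon$ under the same condition on $m$.

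This argument has no real obstacle beyond \thmref{thm-upper-bound-lipchitz-no-initialization} itself; the only points requiring care are the bookkeeping of the Lipschitz constant of the tail network $\phi$ (including the harmless choice to fold the output ``vector'' $\bw_k$ into $\phi$, which contributes the factor $S_k=\norm{\bw_k}$) and checking that the first layer maps the input ball into the radius-$B$ domain on which $\Psi_{L,0,n}$ is defined, so that the containment in $\Psi_{L,0,n}\circ\W_{B,n}$ is an exact inclusion rather than merely an approximation.
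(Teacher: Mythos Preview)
Your proposal is correct and follows essentially the same approach as the paper: embed $\F_k$ into the class from \thmref{thm-upper-bound-lipchitz-no-initialization} by grouping all layers past the first into a single $L$-Lipschitz function with $L=\prod_{j=2}^{k}S_j$, and then invoke that theorem together with monotonicity of Rademacher complexity. Your write-up is in fact more careful than the paper's, explicitly verifying $\phi(\mathbf 0)=0$, the range condition $\norm{W_1\bx}\le B$, and the hypothesis $LB/\epsilon\ge 1$.
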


Of course, the bound has a bad dependence on the norm of the weights, the Lipschitz parameter and $\epsilon$. On the other hand, it is finite for any fixed choice of these parameters, fully independent of the network depth, width, nor on any matrix norm other than the spectral norms, and the Frobenius norm of the first layer only. We note that in the size-independent setting, controlling the product of the spectral norms is both necessary and not sufficient for finite sample complexity bounds (see discussion in \cite{DBLP:journals/corr/vardi-shamir-srebro}). The bound above is achieved only by controlling in addition the Frobenius norm of the first layer.

\subsection{No Finite Sample Complexity with  $W_0\neq 0$}\label{sub-sec-initialization-dependent lower-bound}

In subsection \ref{subsec-tight-bound-no-init}, we showed size-independent sample complexity bounds when the initialization/reference matrix $W_0$ is zero. Therefore, it is natural to ask if it is possible to achieve similar size-independent bounds with non-zero $W_0$.  In this subsection we show that perhaps surprisingly, the answer is negative: Even for very small non-zero $W_0$, it is impossible to control the sample complexity of $\F_{B,n,d}^{f,W_0}$ independent of the size/dimension parameters $d,n$. Formally, we have the following theorem:
\begin{theorem} \label{thm-lower-bound-initialization-dependent}
For any $m \in \N$ and $\epsilon \in (0,\frac{1}{4}]$, there exists  $d = m+1,n = 2m$, $W_0 \in \R^{n \times d}$ with $\norm{W_0} = 2\sqrt{2} \cdot \epsilon$ and a function $f:\R^n \rightarrow \R$ which is \onelips, for which $\F_{B=1,n,d}^{f,W_0}$ can shatter $m$
points from $\{\bx\in \R^d: ||\bx||\leq 1\}$ with margin $\epsilon$.
\end{theorem}

The theorem strengthens the lower bound of \cite{DBLP:journals/dg-two-layer-networks} and the previous subsection, which only considered the $W_0=0$ case. We emphasize that the result holds already when $||W_0||$ is very small (equal to $2\sqrt{2} \cdot \epsilon$). Moreover, the proof technique can be used to show a similar result  even if we allow for functions Lipschitz w.r.t. the infinity norm (and not just the Euclidean norm as we have done so far), at the cost of a higher required value of $n$. This is of interest, since non-element-wise activations used in practice (such as variants of the max function) are Lipschitz with respect to that norm, and some previous work utilized such stronger Lipschitz constraints to obtain sample complexity guarantees (e.g., \cite{daniely2019generalization}). 

Interestingly, the proof of the theorem is simpler than the $W_0=0$ case, and involves a direct non-probabilistic construction. It can be intuitively described as follows: We choose a fixed set of vectors $\bx_1,\ldots,\bx_m$ and a matrix $W_0$ (essentially the identity matrix with some padding) so that $W_0 \bx_i$ encodes the index $i$. For any choice of target values $\by\in \{\pm \epsilon\}^m$, we define a matrix $W'_{\by}$ (which is all zeros except the values of a half column that are located in a strategic location), so that $W'_{\by} \bx_i$ encodes the entire vector $\by$. Letting $W_{\by}=W'_{\by}+W_0$, we get a matrix of bounded distance to $W_0$, so that $W_{\by} \bx_i$ encodes both $i$ and $\by$. Thus, we just need $f$ to be the fixed function that given an encoding for $\by$ and $i$, returns $y_i$, hence $\bx\mapsto f(W_{\by}\bx)$ shatters the set of points.

\subsection{Vector-valued Linear Predictors are Learnable without Uniform Convergence}
\label{subsec-uniform-convergence-vs-learnability}

The class $\F^{f,W_0}_{B,n,d}$, which we considered in the previous subsection, is closely related to the natural class of matrix-valued linear predictors ($\bx\mapsto W \bx$) with bounded Frobenius distance from initialization, composed with some Lipschitz loss function $\ell$. We can  formally define this class as
\[
\G^{\ell,W_0}_{B,n,d}:=\left\{(\bx,y)\rightarrow \ell(W\bx;y)~:~ W\in \R^{n\times d},\norm{W-W_0}_F\leq B\right\}~.
\]
For example, standard multiclass linear predictors fall into this form. Note that when $y$ is fixed, this is nothing more than the class $\F^{\ell_y,W_0}_{B,n,d}$ where $\ell_y(z)=\ell(z;y)$. The question of learnability here boils down to the question of whether, given an i.i.d. sample $\{\bx_i,y_i\}_{i=1}^{m}$ from an unknown distribution, we can approximately minimize $\E_{(\bx,y)}[\ell(W\bx,y)]$ arbitrarily well over all $W:\norm{W-W_0}\leq B$, provided that $m$ is large enough. 

For multiclass linear predictors, it is natural to consider the case where the loss $\ell$ is also convex in its first argument. In this case, we can easily establish that the class $\G^{\ell,W_0}_{B,n,d}$ is learnable with respect to inputs of bounded Euclidean norm, regardless of the size/dimension parameters $n,d$. This is because for each $(\bx,y)$, the function $W\mapsto \ell(W\bx;y)$ is convex and Lipschitz in $W$, and the domain $\{W:||W-W_0||_F\leq B\}$ is bounded. Therefore, we can approximately minimize $\E_{(\bx,y)}[\ell(W\bx,y)]$ by applying stochastic gradient descent (SGD) over the sequence of examples $\{\bx_i,y_i\}_{i=1}^{m}$. This is a consequence of well-known results (see for example \cite{DBLP:books/daglib/shalev-shwartz2014}), and is formalized as follows:

\begin{theorem} \label{thm-sgd}
Suppose that for any $y$, the function $\ell(.,y)$ is convex and $L$-Lipschitz. For any $B>0$ and fixed matrix $W_0$, there exists a randomized algorithm (namely stochastic gradient descent) with the following property: For any distribution over $(\bx,y)$ such that $||\bx||\leq 1$ with probability $1$, given an i.i.d. sample $\{(\bx_i,y_i)\}_{i=1}^{m}$, the algorithm returns a matrix $\hat{W}$ such that $||\hat{W}-W_0||_F\leq B$ and
\[
\E_{\hat{W}}\left[\E_{(\bx,y)}[\ell(\hat{W}\bx;y)]-\min_{W:\norm{W-W_0}_F\leq B}\E_{(\bx,y)}[\ell(W\bx;y)]\right]~\leq~\frac{BL}{\sqrt{m}}.
\]
Thus, the number of samples $m$ required to make the above less than $\epsilon$ is at most $\frac{B^2 L^2}{\epsilon^2}$.
\end{theorem}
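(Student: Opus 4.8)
The plan is to reduce Theorem~\ref{thm-sgd} to the textbook convergence guarantee for stochastic gradient descent on a convex, Lipschitz objective over a bounded convex domain (e.g., the bound in \cite{DBLP:books/daglib/shalev-shwartz2014}). The randomized algorithm is \emph{projected SGD} over the domain $\mathcal{W} := \{W \in \R^{n\times d} : \norm{W-W_0}_F \leq B\}$, initialized at $W_1 = W_0$: at step $t$ it reads the $t$-th sample $(\bx_t,y_t)$, computes a subgradient $\mathbf{g}_t \in \partial_z \ell(z;y_t)\big|_{z = W_t\bx_t}$, performs the update $W_{t+1} = \Pi_{\mathcal{W}}\bigl(W_t - \eta\,\mathbf{g}_t\bx_t^\top\bigr)$ where $\Pi_{\mathcal{W}}$ is the Frobenius (Euclidean) projection onto $\mathcal{W}$, and finally outputs the averaged iterate $\hat{W} = \frac{1}{m}\sum_{t=1}^m W_t$. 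Here the step size is fixed to $\eta = \frac{B}{L\sqrt{m}}$.

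First I would verify the three hypotheses needed to invoke the SGD bound. \textbf{(i) Convexity:} for fixed $(\bx,y)$, the map $W \mapsto \ell(W\bx;y)$ is a composition of the convex function $\ell(\cdot;y)$ with the linear map $W\mapsto W\bx$, hence convex; therefore $F(W) := \E_{(\bx,y)}[\ell(W\bx;y)]$ is convex, and $\mathcal{W}$ is convex. \textbf{(ii) Bounded stochastic subgradients:} by the chain rule, $\mathbf{g}_t\bx_t^\top$ is a subgradient of $W\mapsto \ell(W\bx_t;y_t)$ at $W_t$, and since $\ell(\cdot;y_t)$ is $L$-Lipschitz, $\norm{\mathbf{g}_t}\leq L$; as $\mathbf{g}_t\bx_t^\top$ is rank one, $\norm{\mathbf{g}_t\bx_t^\top}_F = \norm{\mathbf{g}_t}\,\norm{\bx_t} \leq L$ using $\norm{\bx_t}\leq 1$ almost surely; moreover, since the samples are i.i.d.\ and each is used once, $\E[\mathbf{g}_t\bx_t^\top \mid W_t]$ is a subgradient of $F$ at $W_t$. \textbf{(iii) Bounded domain:} letting $W^\star \in \argmin_{W\in\mathcal{W}} F(W)$ (which exists since $F$ is continuous and $\mathcal{W}$ compact), we have $\norm{W_1 - W^\star}_F \leq B$ because $W_1 = W_0$ is the center of $\mathcal{W}$.

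Plugging these into the standard analysis for the averaged iterate gives
\[
\E_{\hat{W}}[F(\hat{W})] - F(W^\star) ~\leq~ \frac{\norm{W_1 - W^\star}_F^2}{2\eta m} + \frac{\eta L^2}{2} ~\leq~ \frac{B^2}{2\eta m} + \frac{\eta L^2}{2} ~=~ \frac{BL}{\sqrt{m}},
\]
where the last equality uses $\eta = \frac{B}{L\sqrt{m}}$. Since every iterate lies in $\mathcal{W}$ and $\mathcal{W}$ is convex, $\hat{W}\in\mathcal{W}$, i.e.\ $\norm{\hat{W}-W_0}_F \leq B$, as required; and demanding the right-hand side above to be at most $\epsilon$ yields the claimed bound $m \leq \frac{B^2L^2}{\epsilon^2}$.

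There is essentially no substantial obstacle here: the statement is a direct instantiation of a standard SGD guarantee. The only point requiring care is the bookkeeping in step (ii) --- confirming that the natural stochastic object $\mathbf{g}_t\bx_t^\top$ is genuinely a subgradient and has Frobenius norm at most $L$ --- which relies on the input bound $\norm{\bx}\leq 1$ and on the fact that we measure distances in parameter space with the Frobenius norm, matching the constraint $\norm{W-W_0}_F \leq B$ that defines the class.
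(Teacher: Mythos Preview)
Your proposal is correct and follows essentially the same route as the paper: both reduce the statement to the standard projected-SGD guarantee (the paper cites Corollary~14.12 and Lemma~14.1 of \cite{DBLP:books/daglib/shalev-shwartz2014}), initializing at $W_0$ and projecting onto the Frobenius ball $\{W:\norm{W-W_0}_F\leq B\}$. If anything, your step~(ii) is more explicit than the paper's sketch in verifying that the stochastic subgradient $\mathbf{g}_t\bx_t^\top$ has Frobenius norm at most $L$ via the rank-one identity $\norm{\mathbf{g}_t\bx_t^\top}_F=\norm{\mathbf{g}_t}\,\norm{\bx_t}$.
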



Perhaps unexpectedly, we now turn to show that this positive learnability result is \emph{not} due to uniform convergence: Namely, we can learn this class, but not because the empirical average and expected value of $\ell(W\bx;y)$ are close uniformly over all $W:\norm{W-W_0}\leq B$. Indeed, that would have required that a uniform convergence measure such as the fat-shattering dimension of our class would be bounded. However, this turns out to be false: The class $\G^{\ell,W_0}_{B,n,d}$ can shatter arbitrarily many points of norm $\leq 1$, and at any scale $\epsilon \leq 1$, for some small $W_0$ and provided that $n,d$ are large enough\footnote{This precludes uniform convergence, since it implies that for any $m$, we can find a set of $2m$ points $\{\bx_i,y_i\}_{i=1}^{2m}$, such that if we sample $m$ points with replacement from a uniform distribution over this set, then there is always some $W$ in the class so that the average value of $\ell(W\bx;y)$ over the sample and in expectation differs by a constant independent of $m$. The fact that the fat-shattering dimension is unbounded does not contradict learnability here, since our goal is to minimize the expectation of $\ell(W\bx;y)$, rather than view it as predicted values which are then composed with some other loss.}. In the previous section, we already showed such a result for the class $\F^{f,W_0}_{B,n,d}$, which equals $\G^{\ell,W_0}_{B,n,d}$ when $y$ is fixed and $f(W\bx)=\ell(W\bx;y)$. Thus, it is enough to prove that the same impossibility result (Theorem \ref{thm-lower-bound-initialization-dependent}) holds even if $f$ is a convex function. This is indeed true using a slightly different construction:
\begin{theorem} \label{thm-lower-bound-initialization-dependent-convex}
For any $m \in \N$ and $\epsilon\in (0,\frac{1}{4}]$, there exists large enough $d = \Theta(m),n = \Theta(\exp(m))$, $W_0 \in \R^{n \times d}$ with $\norm{W_0} = 4\sqrt{2} \cdot \epsilon$ and a \textbf{convex} function $f:\R^n \rightarrow \R$ which is \onelips with respect to the infinity norm (and hence also with respect to the Euclidean norm), for which $\F_{B=1,n,d}^{f,W_0}$ can shatter $m$
points from $\{\bx\in \R^d: ||\bx||\leq 1\}$ with margin $\epsilon$.
\end{theorem}


Overall, we see that the problem of learning vector-valued linear predictors, composed with some convex Lipschitz loss (as defined above), is possible using a certain algorithm, but without having uniform convergence. This connects to a line of work establishing learning problems which are provably learnable without uniform convergence (such as \cite{shalev2010learnability,feldman2016generalization}). However, whether these papers considered synthetic constructions, we consider an arguably more natural class of linear predictors of bounded Frobenius norm, composed with a convex Lipschitz loss over stochastic inputs of bounded Euclidean norm. In any case, this provides another example for when learnability can be achieved without uniform convergence.

\section{Neural Networks with Element-Wise Lipschitz Activation} \label{sec-element-wise-activation}

In section \ref{sec-complexity-no-element-wise-nn} we studied the complexity of functions of the form $\bx\mapsto f(W\bx)$ (or possibly deeper neural networks) where nothing is assumed about $f$ besides Lipschitz continuity. In this section, we consider more specifically functions which are applied element-wise, as is common in the neural networks literature. Specifically, we will consider the following hypothesis class of scalar-valued, one-hidden-layer neural networks of width $n$
on inputs in $\{\bx\in \R^d: ||\bx||\leq b_x\}$, where $\sigma(\cdot)$ is a Lipschitz function on $\R$ applied element-wise, and where we only bound the norms as follows:
\[
\F_{b,B,n,d}^{\sigma,W_0}:= \left\{\bx\rightarrow \mathbf{u}^{\top}\sigma(W\bx):\mathbf{u}\in\R^n,W\in \R^{n\times d},||\mathbf{u}||\leq b, ||W-W_0||_F\leq B\right\}~,
\]
where $\mathbf{u}^{\top}\sigma(W\bx)=\sum_j u_j\sigma(\bw_j^\top\bx)$. We note that we could have also considered a more general version, where $\mathbf{u}$ is also initialization-dependent: Namely, where the constraint $||\mathbf{u}||\leq b$ is replaced by $||\mathbf{u}-\mathbf{u}_0||\leq b$ for some fixed $\mathbf{u}_0$. However, this extension is rather trivial, since for vectors $\mathbf{u}$ there is no distinction between the Frobenius and spectral norms. Thus, to consider $\mathbf{u}$ in some ball of radius $b$ around some $\mathbf{u}_0$, we might as well consider the function class displayed above with the looser constraint $||\mathbf{u}||\leq b+||\mathbf{u}_0||$. This does not lose much tightness, since such a dependence on $||\mathbf{u}_0||$ is also necessary (see remark \ref{remark:uu_0} below). 

The sample complexity of $\F_{b,B,n,d}^{\sigma,W_0}$ was first studied in the case of $W_0=0$, with works such as \cite{neyshabur2015norm,neyshabur2017pac,du2018power,DBLP:conf/colt/GolowichRS18,daniely2019generalization} proving bounds for specific families of the activation $\sigma(\cdot)$ (e.g., homogeneous or quadratic). For general Lipschitz $\sigma(\cdot)$ and $W_0=0$, \cite{DBLP:journals/corr/vardi-shamir-srebro} 
proved that the Rademacher complexity of $\F_{b,B,n,d}^{\sigma,W_0=0}$ for any $L$-Lipschitz $\sigma(\cdot)$ is at most $\epsilon$, if the number of samples is 
     $\Tilde{O}\left(\left(\frac{bBb_xL}{\epsilon}\right)^2\right).$%
 They left the case of $W_0\neq 0$ as an open question. In a recent preprint, \cite{DBLP:journals/dg-two-layer-networks} used an innovative technique to prove a bound in this case, but not a fully size-independent one (there remains a logarithmic dependence on the network width $n$ and the input dimension $d$). In what follows, we prove a bound which handles the $W_0\neq 0$ case and is fully size-independent, under the assumption that $\sigma(\cdot)$ is smooth. The proof (which is somewhat involved) involves techniques different from both previous papers, and  may be of independent interest. 

\begin{theorem}\label{thm-smooth-activation}
Suppose $\sigma(\cdot)$ (as function on $\R$) is \lips, $\mu$-smooth (i.e, $\sigma'(\cdot)$ is $\mu$-Lipchitz)  and $\sigma(0)=0$. Then for any $b,B,n,D,\epsilon>0$ such that $Bb_x \geq 2$, and any $W_0$ such that $||W_0||\leq B_0$, the Rademacher complexity of $\F_{b,B,n,d}^{\sigma,W_0}$ on m inputs from $\{\bx\in\R^d:||\bx||\leq b_x\}$ is at most $\epsilon$, if
    $m\geq \frac{1}{\epsilon^2}\cdot\Tilde{O}\left( \left( 1+bb_x(LB_0+(\mu+L)B(1+B_0b_x)\right)^2\right)$. 
\end{theorem}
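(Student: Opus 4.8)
The plan is to linearize $\sigma$ around $W_0\bx$ and use smoothness to control the linearization error, reducing everything to a handful of dimension-free quantities. Write $W=W_0+V$ with $\norm{V}_F\le B$, fix an input $\bx$, and set $a_j:=\bw_{0,j}^\top\bx$, $s_j:=\bv_j^\top\bx$. Since $\sigma(0)=0$, $\sigma$ is \lips, and $\sigma'$ is $\mu$-Lipschitz, a two-term Taylor expansion of $\sigma$ at $a_j$ together with a one-term expansion of $\sigma'$ at $0$ give $\sigma(a_j+s_j)=\sigma(a_j)+\sigma'(0)s_j+\tau_j(\bx)s_j+\rho_j(\bx)$, where $\tau_j(\bx):=\sigma'(a_j)-\sigma'(0)$ satisfies $\norm{(\tau_j(\bx))_j}=\norm{\sigma'(W_0\bx)-\sigma'(0)\mathbf{1}}\le\mu\norm{W_0\bx}\le\mu B_0 b_x$, and $\rho_j(\bx)$ satisfies both $|\rho_j(\bx)|\le\tfrac{\mu}{2}s_j^2$ and $|\rho_j(\bx)|\le 2L|s_j|$. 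Summing against $\mathbf{u}$ gives the decomposition $\mathbf{u}^\top\sigma(W\bx)=T_1+T_2+T_3+T_4$ with $T_1=\mathbf{u}^\top\sigma(W_0\bx)$, $T_2=\sigma'(0)\,\mathbf{u}^\top V\bx$, $T_3=\sum_j u_j\tau_j(\bx)(\bv_j^\top\bx)$, $T_4=\sum_j u_j\rho_j(\bx)$, and by sub-additivity of Rademacher complexity it suffices to bound the complexity of each of the four corresponding function classes.

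Three of the four terms reduce to elementary linear classes. The $T_1$-class is linear in $\mathbf{u}$ with the \emph{fixed} feature $\sigma(W_0\bx)$ of norm $\le L\norm{W_0\bx}\le LB_0 b_x$, so its Rademacher complexity is $\le bLB_0 b_x/\sqrt m$. Writing $\mathbf{u}^\top V\bx=\langle V,\mathbf{u}\bx^\top\rangle_F$ and optimizing over $V$ first, the $T_2$-class has complexity $\tfrac{|\sigma'(0)|B}{m}\,\E_\epsilon\sup_{\norm{\mathbf{u}}\le b}\norm{\mathbf{u}(\summ\epsilon_i\bx_i)^\top}_F=\tfrac{LBb}{m}\,\E_\epsilon\norm{\summ\epsilon_i\bx_i}\le LBbb_x/\sqrt m$. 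For $T_3$, absorb the hidden weights into the rows of $V$: let $M$ have rows $M_j=u_j\bv_j$, so that $\norm{M}_F=\sqrt{\sum_j u_j^2\norm{\bv_j}^2}\le\norm{\mathbf{u}}_\infty\norm{V}_F\le bB$ and $T_3=\sum_j\tau_j(\bx)(M_j^\top\bx)=\langle M,\boldsymbol{\tau}(\bx)\bx^\top\rangle_F$, where $\boldsymbol{\tau}(\bx)\bx^\top$ is \emph{fixed} (it does not depend on $\mathbf{u}$ or $V$) and has Frobenius norm $\norm{\boldsymbol{\tau}(\bx)}\norm{\bx}\le\mu B_0 b_x\cdot b_x$ — here $\mu$-smoothness and the spectral bound $\norm{W_0}\le B_0$ enter. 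Thus the $T_3$-class embeds into $\{\bx\mapsto\langle M,\boldsymbol{\tau}(\bx)\bx^\top\rangle_F:\norm{M}_F\le bB\}$, with Rademacher complexity $\le\tfrac{bB}{m}\sqrt{\summ\norm{\boldsymbol{\tau}(\bx_i)\bx_i^\top}_F^2}\le\mu bb_x^2 BB_0/\sqrt m$, dimension-free and matching the $\mu B_0$-part of the claimed bound.

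The remainder $T_4=\sum_j u_j\rho_j(\bx)$ is the crux, and the absorb-into-$M$ trick fails because each $\rho_j$ is a genuinely nonlinear function of $\bv_j$; the naive magnitude bounds are dimension-free but lose an extra power of $B$, whereas covering $(\mathbf{u},W)$ directly costs $n,d$. The key structural observation is that $\rho_j(\bx)$ depends on $V$ only through $V\bx$: one has $T_4=\mathbf{u}^\top\psi_\bx(V\bx)$, where $\psi_\bx$ is the coordinatewise map $(\psi_\bx(\mathbf{z}))_j=\sigma(a_j+z_j)-\sigma(a_j)-\sigma'(a_j)z_j$, which vanishes with vanishing gradient at $\mathbf{0}$, is $2L$-Lipschitz on $\{\norm{\mathbf{z}}\le Bb_x\}$, and satisfies $|(\psi_\bx(\mathbf{z}))_j|\le\tfrac{\mu}{2}z_j^2$. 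The plan is then: (i) truncate $V$ to effective rank $r=\Tilde{O}(B^2 b_x^2/\eta^2)$ by discarding small singular values, which perturbs each $V\bx_i$ — hence each $\psi_{\bx_i}(V\bx_i)$ — by $O(L\eta)$, reducing to a class with a dimension-free parameterization; (ii) cover this low-rank class together with the $\norm{\mathbf{u}}\le b$ ball at scale $\eta$, invoking the dimension-free covering machinery behind \lemref{lemma-covering-all-lipchitz-functions}/\thmref{thm-upper-bound-lipchitz-no-initialization} and the quadratic control $|(\psi_\bx(\mathbf{z}))_j|\le\tfrac{\mu}{2}z_j^2$ to keep the covering number free of $n,d$ and its scale proportional to $(\mu+L)$ times low powers of $B,b_x$; (iii) bound the Rademacher complexity of the cover by a Massart-type union bound, add the $O(\eta)$ approximation error, and optimize $\eta$ (equivalently $r$) to balance the two, producing the $(\mu+L)B$-type contribution. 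This step — obtaining a dimension-free covering number for the second-order remainder class whose \emph{scale} is only linear in $B$ — is the main obstacle, and it is precisely where smoothness (through $|(\psi_\bx(\mathbf{z}))_j|\le\tfrac{\mu}{2}z_j^2$) is essential.

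Adding the four Rademacher bounds, absorbing polylogarithmic factors into $\Tilde{O}$, and using $Bb_x\ge 2$ to absorb the lower-order pieces gives $R_m(\F_{b,B,n,d}^{\sigma,W_0})\le\Tilde{O}\big((1+bb_x(LB_0+(\mu+L)B(1+B_0 b_x)))/\sqrt m\big)$, which is at most $\epsilon$ once $m\ge\tfrac{1}{\epsilon^2}\Tilde{O}\big((1+bb_x(LB_0+(\mu+L)B(1+B_0 b_x)))^2\big)$; the claim then follows from the standard Rademacher-to-sample-complexity implication. It remains open whether the smoothness assumption is genuinely needed, or whether $L$-Lipschitzness of $\sigma$ alone already yields a fully size-independent bound here.
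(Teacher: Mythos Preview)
Your decomposition $T_1+T_2+T_3+T_4$ via Taylor expansion is a genuinely different route from the paper's, and the first three terms are handled cleanly. The paper instead pulls $\mathbf{u}$ out by Cauchy--Schwarz, subtracts only $\sigma(W_0\bx_i)$ (your $T_1$), and then treats the entire difference $\sigma((W+W_0)\bx_i)-\sigma(W_0\bx_i)$ in one shot: it reparametrizes each row of $W$ as $v_j\bw_j$ with $\norm{\bw_j}\le 1$, $\norm{\bv}\le Bb_x$, uses a decoupling trick (introducing an independent $\bv'$) to collapse the sum over $j$ to a single row, and arrives at the Rademacher complexity of a class $\psi\circ(\F_1,\F_2,\F_3)$ where $\psi(\alpha,y,v)=(\sigma(v\alpha+y)-\sigma(y))/v$ is a function of only \emph{three} real arguments. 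Smoothness enters solely through showing that this fixed three-variable $\psi$ is $O(\mu b_x^2+L)$-Lipschitz; after that, Dudley plus standard dimension-free covers for the three scalar classes $\F_1,\F_2,\F_3$ finish the proof with polynomial rates.

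Your $T_4$ step has a genuine gap. Low-rank truncation of $V$ controls only the ``short'' dimension of $V$: writing $\tilde V=U_1 S_1 V_1^\top$ with $U_1\in\R^{n\times r}$, you still have $T_4(\bx)=\sum_{j=1}^n u_j\bigl[\sigma(a_j+U_{1,j}^\top\mathbf{y})-\sigma(a_j)-\sigma'(a_j)U_{1,j}^\top\mathbf{y}\bigr]$, which depends on the $n$-dimensional parameters $\mathbf{u}$ and $U_1$ through a coordinate-wise nonlinearity whose centers $a_j=\bw_{0,j}^\top\bx$ differ across $j$ and across inputs. The machinery of \lemref{lemma-covering-all-lipchitz-functions} and \thmref{thm-upper-bound-lipchitz-no-initialization} applies to compositions $\Psi\circ\F$ with $\F$ mapping into $\R^r$; here the outer map (as a function of $\mathbf{y}\in\R^r$) is not fixed but ranges over a class indexed by $(\mathbf{u},U_1)\in\R^n\times\R^{n\times r}$, so either you cover those directly (dimension-dependent) or you cover all $2Lb$-Lipschitz functions on $\R^r$ (covering number exponential in $r$, hence sample complexity exponential in $B^2b_x^2/\epsilon^2$, not polynomial). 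The quadratic control $|(\psi_\bx(\mathbf{z}))_j|\le\tfrac{\mu}{2}z_j^2$ gives the uniform bound $|T_4|\le\tfrac{\mu}{2}b\,B^2b_x^2$ you already flagged as losing a factor of $B$, but it does not by itself produce a polynomial-size, dimension-free cover of the $T_4$ class. To close the gap you would essentially need the paper's row-reparametrization $v_j\bw_j$ and single-row reduction, at which point the Taylor split into $T_2,T_3,T_4$ is no longer needed.
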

Thus, we get a sample complexity bounds that depend on the norm parameters $b,b_x,B_0$, the Lipschitz parameter $L$, and the smoothness parameter $\mu$, but is fully independent of the size parameters $n,d$. Note that for simplicity, the bound as written above hides some factors logarithmic in $m,B,L,b_x$ -- see the proof in the appendix for the precise expression.

We note that if $W_0=0$, we can take $B_0=0$ in the bound above, in which case the sample complexity scales as $\tilde{O}(((\mu+L)bb_xB)^2/\epsilon^2)$. This is is the same as in \cite{DBLP:journals/corr/vardi-shamir-srebro}  
(see above) up to the dependence on the smoothness parameter $\mu$.


\begin{remark}\label{remark:uu_0}
    The upper bound on Theorem \ref{thm-smooth-activation} depends quadratically on the spectral norm of $W_0$ (i.e., $B_0$). This dependence is necessary in general. Indeed, even by taking the activation function $\sigma(\cdot)$ to be the identity, $B=0$ and $b = 1$ we get that our function class contains the class of scalar-valued linear predictors
    $\{\bx \rightarrow \mathbf{v}^{\top}\bx: \bx,\mathbf{v} \in \R^d, ||\mathbf{v}||\leq B_0 \}$. For this class, it is well known that the number of samples should be $\Theta(\frac{B_0^2}{\epsilon^2})$, to ensure that the Rademacher complexity of that class is at most $\epsilon$. 
\end{remark}

\subsection{Bounds for Deep Networks with Lipschitz Activations}

As a final contribution, we consider the case of possibly deep neural networks, when $W_0=0$ and the activations are Lipschitz and element-wise. Specifically, given the domain $\{\bx \in \R^d : ||\bx|| \leq b_x\}$ in Euclidean space, we consider the class of scalar-valued neural networks of the form
\begin{align*}\bx\rightarrow \bw_k^{\top}\sigma_{k-1}(W_{k-1}\sigma_{k-2}(...\sigma_1((W_1\bx))) \end{align*}
where $\bw_k$ is a vector (i.e. the output of the function is in $\R$) with $||\bw_k|| \leq b$, 
each $W_j$ is a parameter matrix s.t. $ ||W_j||_F \leq B_j, \ ||W_j|| \leq S_j$ and each $\sigma_j(\cdot)$ (as a function on $\R$) is an $L$-Lipschitz function applied element-wise, satisfying $\sigma_j(0) = 0$. Let $\F_{k,\{S_j\},\{B_j\}}^{\{\sigma_j\}}$ be the class of neural networks as above. 
\cite{DBLP:journals/corr/vardi-shamir-srebro} proved a sample complexity guarantee for $k=2$ (one-hidden-layer neural networks), and left the case of higher depths as an open problem. The theorem below addresses this problem, using a combination of their technique and a “peeling” argument to reduce the complexity bound of networks of depth $k$ to those of depth $(k-1)$. The resulting bound is fully independent of the network width (although strongly depends on the network depth), and is the first of this type (to the best of our knowledge) that handles general Lipschitz activations under Frobenius norm constraints. 

\begin{theorem} \label{thm-upper-bound-rademacher-higher-depth}
For any $\epsilon, b > 0,\{B_j\}_{j=1}^{k-1}, \{S_j\}_{j=1}^{k-1},L$ with $S_1,...,S_{k-1},L \geq 1$, the Rademacher complexity of $\F_{k,\{S_j\},\{B_j\}}^{\{\sigma_j\}}$ on $m$ inputs from $\{\bx \in \R^d: ||\bx||\leq b_x\}$
is at most $\epsilon$, if
\[m \geq \frac{c\left(kL^{k-1}bR_{k-2}  \log^{\frac{3(k-1)}{2}}(m) \cdot \prod_{i=1}^{k-1}  B_i~ \right)^2}{\epsilon^2}~,
 \]
where $R_{k-2} = b_xL^{k-2}\prod_{i=1}^{k-2} S_i$, $R_0 = b_x$ and $c>0$ is a universal constant.
\end{theorem}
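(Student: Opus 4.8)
The plan is to bound the Rademacher complexity $R_m$ of $\F_{k,\{S_j\},\{B_j\}}^{\{\sigma_j\}}$ directly and then read off the threshold on $m$, since Rademacher complexity upper-bounds sample complexity. Concretely I would prove
\[
R_m\bigl(\F_{k,\{S_j\},\{B_j\}}^{\{\sigma_j\}}\bigr)~\leq~\frac{c'\,k\,L^{k-1}\,b\,R_{k-2}\,\log^{3(k-1)/2}(m)\,\prod_{i=1}^{k-1}B_i}{\sqrt m}
\]
for a universal constant $c'$; solving $R_m\leq\epsilon$ for $m$ then yields the theorem, the mildly self-referential $\log(m)$ factor being absorbed in the standard way since it is only polylogarithmic. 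The engine is an induction on the depth $k$ that peels off the outermost layer, using the \cite{DBLP:journals/corr/vardi-shamir-srebro} one-hidden-layer technique at \emph{each} step rather than only at the bottom.

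For the inductive step, write a depth-$k$ network as $\bx\mapsto\bw_k^\top\sigma_{k-1}(W_{k-1}\,g(\bx))$, where $g$ is the vector-valued depth-$(k-1)$ subnetwork feeding layer $k-1$. A routine propagation of the constraints $\|W_j\|\leq S_j$, $\sigma_j(0)=0$ and $\sigma_j$ $L$-Lipschitz gives a uniform bound $\|g(\bx)\|\leq R_{k-2}=b_xL^{k-2}\prod_{i=1}^{k-2}S_i$ on the domain (and $R_0=b_x$). Hence ``$\bw_k^\top\sigma_{k-1}(W_{k-1}\,\cdot\,)$'' is exactly a one-hidden-layer network --- top weight of norm $\leq b$, hidden matrix of Frobenius norm $\leq B_{k-1}$, $L$-Lipschitz element-wise activation $\sigma_{k-1}$ --- acting on inputs of Euclidean norm $\leq R_{k-2}$, which is precisely the \cite{DBLP:journals/corr/vardi-shamir-srebro} setting. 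The key lemma I would establish is a recursive, composition-aware version of their bound: $R_m$ of the depth-$k$ class is at most a factor of order $L\,B_{k-1}\,\log^{3/2}(m)$ times a complexity term for the subnetwork $g$, with the radius $R_{k-2}$ entering once. Iterating down to the linear first layer (whose contribution is at most $B_1 b_x/\sqrt m$ by Cauchy--Schwarz) accumulates, up to universal constants, the product $\prod_{i=1}^{k-1}(L B_i)$, the extra $L^{k-2}\prod_{i=1}^{k-2}S_i$ coming from the growth of the domain radius through the lower layers, the depth factor $k$, and one factor of $\log^{3/2}(m)$ per layer, matching the displayed bound. Note that each $\sigma_j$ is a \emph{fixed} activation, so nonlinearity is handled purely by ordinary Ledoux--Talagrand contraction, which costs nothing beyond the factor $L$.

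The main obstacle is that the \cite{DBLP:journals/corr/vardi-shamir-srebro} argument is stated for $\bx\mapsto\mathbf u^\top\sigma(W\bx)$ with $\bx$ ranging over a \emph{fixed} point set, whereas in the peeling step the inputs $g(\bx_1),\dots,g(\bx_m)$ to the hidden layer are themselves outputs of a subnetwork $g$ that \emph{varies} over a class of unbounded width; freezing $g$ and union-bounding over its class would reintroduce the very width dependence we are trying to avoid. I would handle this by opening up their proof --- which proceeds via a sparsification (Maurey-type) of the hidden-layer combination together with an $\epsilon$-cover of the Frobenius ball of the hidden matrix --- and running it on the composed object, simultaneously covering the (inductively controlled) subnetwork class and then converting the resulting covering-number bound into a Rademacher bound via Dudley's entropy integral. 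The delicate bookkeeping is to verify that the empirical $L_2$ radius $\sqrt{\tfrac1m\sum_i\|g(\bx_i)\|^2}\leq R_{k-2}$ and the inductive complexity of $g$ enter the entropy estimate additively, so that each level contributes exactly one factor of $\log^{3/2}(m)$ rather than compounding; everything else (the norm propagation, the contraction, and solving the final inequality for $m$) is routine.
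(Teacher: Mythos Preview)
Your overall architecture---peel off one layer at a time, establish a recursion of the shape $O(LB_l)\bigl(R_{l-1}/\sqrt m+\log^{3/2}(m)\cdot(\text{level-}(l{-}1)\text{ term})\bigr)$, and unroll down to $\E\bigl\|\tfrac1m\sum_i\epsilon_i\bx_i\bigr\|\leq b_x/\sqrt m$---is exactly what the paper does. Where you diverge is in the machinery for a single peeling step. The paper does \emph{not} go through covering numbers and Dudley on the composed class. It takes the vector-norm quantity $A_l:=\E\sup_{f\in\F_l}\bigl\|\tfrac1m\sum_i\epsilon_i f(\bx_i)\bigr\|$ as the inductive object and reduces $A_l$ to $A_{l-1}$ in two moves. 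First, writing each row of $W_l$ as $v_j\bw_j$ with $\|\bw_j\|\leq1$, $\sum_jv_j^2\leq B_l^2$, and decoupling the outer weights from the inner $v_j$, collapses the $\ell_2$ over rows to a supremum over a \emph{single} row; this is the width-killing step. Second, the resulting scalar map $\psi_v(z):=\sigma_l(vz)/v$ is $L$-Lipschitz uniformly in $v$, so one is left with the Rademacher complexity of $\Psi_L\circ\{\bw^\top g:\|\bw\|\leq1,\,g\in\F_{l-1}\}$ where $\Psi_L$ is \emph{all} scalar $L$-Lipschitz functions through the origin. For that composition the paper simply invokes Theorem~4 of \cite{DBLP:conf/colt/GolowichRS18}, which directly yields $cL\bigl(R_{l-1}/\sqrt m+\log^{3/2}(m)\,R_m(H)\bigr)$ with $R_m(H)\leq A_{l-1}$---no entropy integral to manage at each layer.

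Two places where your plan, as written, may not close. First, the remark that ``each $\sigma_j$ is fixed, so ordinary Ledoux--Talagrand contraction suffices'' points in the wrong direction: the difficulty is not a class of activations but the \emph{vector-valued} layer output, where coordinatewise contraction reintroduces the width; note that after the paper's reduction the effective activation is in fact the $v$-indexed family $\psi_v$, not a single fixed $\sigma$. Second, your description of the one-layer step as ``Maurey-type sparsification plus an $\epsilon$-cover of the Frobenius ball of the hidden matrix'' is not what is used here, and a direct cover of a Frobenius ball in $\R^{n\times d}$ has log-cardinality scaling with $nd$. If you pursue the covering/Dudley route you would essentially be re-deriving the Golowich--Rakhlin--Shamir composition theorem in situ, and you would still need the row-decomposition reduction to a \emph{scalar} inner class before covering; without that reduction the width dependence you flag as the main obstacle reappears in the entropy estimate, and the ``additive rather than compounding'' bookkeeping you mention does not obviously go through.
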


We note that for $k=2$, this reduces to the bound of \cite{DBLP:journals/corr/vardi-shamir-srebro}
 for one-hidden-layer neural networks. 

\section{Discussion and Open Problems}\label{sec:discussion}

In this paper, we provided several new results on the sample complexity of vector-valued linear predictors and feed-forward neural networks, focusing on size-independent bounds and constraining the distance from some reference matrix. The paper leaves open quite a few avenues for future research. For example, in Section \ref{sec-complexity-no-element-wise-nn}, we studied the sample complexity of $\F^{f,W_0}_{B,n,d}$ when $n,d$ are unrestricted. Can we get a full picture of the sample complexity when $n,d$ are also controlled? Even more specifically, can the lower bounds in the section be obtained for any smaller values of $d,n$? As to the results in Section \ref{sec-element-wise-activation}, is our Rademacher complexity bound for $\F_{b,B,n,d}^{\sigma,W_0}$ (one-hidden-layer networks and smooth activations) the tightest possible, or can it be improved? Also, can we generalize the result to arbitrary Lipschitz activations? In addition, what is the sample complexity of such networks when $n,d$ are also controlled? In a different direction, it would be very interesting to extend the results of this section to deeper networks and non-zero $W_0$.

\bibliographystyle{plainnat}
\bibliography{initialization_dependent_sample_complexity}

\begin{thebibliography}{22}
\providecommand{\natexlab}[1]{#1}
\providecommand{\url}[1]{\texttt{#1}}
\expandafter\ifx\csname urlstyle\endcsname\relax
  \providecommand{\doi}[1]{doi: #1}\else
  \providecommand{\doi}{doi: \begingroup \urlstyle{rm}\Url}\fi

\bibitem[Anthony and Bartlett(1999)]{anthony_bartlett_1999}
Martin Anthony and Peter~L. Bartlett.
\newblock \emph{Vapnik-Chervonenkis Dimension Bounds for Neural Networks}, page
  108–130.
\newblock Cambridge University Press, 1999.
\newblock \doi{10.1017/CBO9780511624216.009}.

\bibitem[Anthony and Bartlett(2002)]{DBLP:books/daglib/anthony-bartlett}
Martin Anthony and Peter~L. Bartlett.
\newblock \emph{Neural Network Learning - Theoretical Foundations}.
\newblock Cambridge University Press, 2002.
\newblock ISBN 978-0-521-57353-5.

\bibitem[Bartlett and Mendelson(2002)]{bartlett2002rademacher}
Peter~L Bartlett and Shahar Mendelson.
\newblock Rademacher and gaussian complexities: Risk bounds and structural
  results.
\newblock \emph{Journal of Machine Learning Research}, 3\penalty0
  (Nov):\penalty0 463--482, 2002.

\bibitem[Bartlett et~al.(2017)Bartlett, Foster, and
  Telgarsky]{bartlett2017spectrally}
Peter~L Bartlett, Dylan~J Foster, and Matus~J Telgarsky.
\newblock Spectrally-normalized margin bounds for neural networks.
\newblock \emph{Advances in neural information processing systems}, 30, 2017.

\bibitem[Daniely and Granot(2019)]{daniely2019generalization}
Amit Daniely and Elad Granot.
\newblock Generalization bounds for neural networks via approximate description
  length.
\newblock \emph{Advances in Neural Information Processing Systems}, 32, 2019.

\bibitem[Daniely and Granot(2022)]{DBLP:journals/dg-two-layer-networks}
Amit Daniely and Elad Granot.
\newblock On the sample complexity of two-layer networks: Lipschitz vs.
  element-wise lipschitz activation.
\newblock \emph{CoRR}, abs/2211.09634, 2022.
\newblock \doi{10.48550/arXiv.2211.09634}.

\bibitem[Daniely et~al.(2011)Daniely, Sabato, Ben-David, and
  Shalev-Shwartz]{daniely2011multiclass}
Amit Daniely, Sivan Sabato, Shai Ben-David, and Shai Shalev-Shwartz.
\newblock Multiclass learnability and the erm principle.
\newblock In \emph{Proceedings of the 24th Annual Conference on Learning
  Theory}, pages 207--232. JMLR Workshop and Conference Proceedings, 2011.

\bibitem[Du and Lee(2018)]{du2018power}
Simon Du and Jason Lee.
\newblock On the power of over-parametrization in neural networks with
  quadratic activation.
\newblock In \emph{International conference on machine learning}, pages
  1329--1338. PMLR, 2018.

\bibitem[Feldman(2016)]{feldman2016generalization}
Vitaly Feldman.
\newblock Generalization of erm in stochastic convex optimization: The
  dimension strikes back.
\newblock \emph{Advances in Neural Information Processing Systems}, 29, 2016.

\bibitem[Glasgow et~al.(2022)Glasgow, Wei, Wootters, and Ma]{glasgow2022max}
Margalit Glasgow, Colin Wei, Mary Wootters, and Tengyu Ma.
\newblock Max-margin works while large margin fails: Generalization without
  uniform convergence.
\newblock In \emph{The Eleventh International Conference on Learning
  Representations}, 2022.

\bibitem[Golowich et~al.(2018)Golowich, Rakhlin, and
  Shamir]{DBLP:conf/colt/GolowichRS18}
Noah Golowich, Alexander Rakhlin, and Ohad Shamir.
\newblock Size-independent sample complexity of neural networks.
\newblock In \emph{Conference On Learning Theory, {COLT} 2018}, volume~75 of
  \emph{Proceedings of Machine Learning Research}, pages 297--299. {PMLR},
  2018.

\bibitem[Kakade et~al.()Kakade, Sridharan, and
  Tewari]{DBLP:conf/nips/KakadeST08-covering-linear-predictors}
Sham~M. Kakade, Karthik Sridharan, and Ambuj Tewari.
\newblock On the complexity of linear prediction: Risk bounds, margin bounds,
  and regularization.
\newblock In \emph{Advances in Neural Information Processing Systems 21, 2008}.

\bibitem[Lyu and Li(2019)]{lyu2019gradient}
Kaifeng Lyu and Jian Li.
\newblock Gradient descent maximizes the margin of homogeneous neural networks.
\newblock \emph{arXiv preprint arXiv:1906.05890}, 2019.

\bibitem[Maurer(2016)]{maurer2016vector}
Andreas Maurer.
\newblock A vector-contraction inequality for rademacher complexities.
\newblock In \emph{Algorithmic Learning Theory: 27th International Conference,
  ALT 2016, Bari, Italy, October 19-21, 2016, Proceedings 27}, pages 3--17.
  Springer, 2016.

\bibitem[Mohri et~al.(2018)Mohri, Rostamizadeh, and
  Talwalkar]{mohri2018foundations}
Mehryar Mohri, Afshin Rostamizadeh, and Ameet Talwalkar.
\newblock \emph{Foundations of machine learning}.
\newblock MIT press, 2018.

\bibitem[Nagarajan and Kolter(2019)]{nagarajan2019uniform}
Vaishnavh Nagarajan and J~Zico Kolter.
\newblock Uniform convergence may be unable to explain generalization in deep
  learning.
\newblock \emph{Advances in Neural Information Processing Systems}, 32, 2019.

\bibitem[Neyshabur et~al.(2015)Neyshabur, Tomioka, and
  Srebro]{neyshabur2015norm}
Behnam Neyshabur, Ryota Tomioka, and Nathan Srebro.
\newblock Norm-based capacity control in neural networks.
\newblock In \emph{Conference on learning theory}, pages 1376--1401. PMLR,
  2015.

\bibitem[Neyshabur et~al.(2017)Neyshabur, Bhojanapalli, and
  Srebro]{neyshabur2017pac}
Behnam Neyshabur, Srinadh Bhojanapalli, and Nathan Srebro.
\newblock A pac-bayesian approach to spectrally-normalized margin bounds for
  neural networks.
\newblock \emph{arXiv preprint arXiv:1707.09564}, 2017.

\bibitem[Shalev{-}Shwartz and
  Ben{-}David(2014)]{DBLP:books/daglib/shalev-shwartz2014}
Shai Shalev{-}Shwartz and Shai Ben{-}David.
\newblock \emph{Understanding Machine Learning - From Theory to Algorithms}.
\newblock Cambridge University Press, 2014.
\newblock ISBN 978-1-10-705713-5.

\bibitem[Shalev-Shwartz et~al.(2010)Shalev-Shwartz, Shamir, Srebro, and
  Sridharan]{shalev2010learnability}
Shai Shalev-Shwartz, Ohad Shamir, Nathan Srebro, and Karthik Sridharan.
\newblock Learnability, stability and uniform convergence.
\newblock \emph{The Journal of Machine Learning Research}, 11:\penalty0
  2635--2670, 2010.

\bibitem[Srebro and Sridharan()]{dudley_integral}
Nathan Srebro and Karthik Sridharan.
\newblock Note on refined dudley integral covering number bound.
\newblock URL \url{https://www.cs.cornell.edu/~sridharan/dudley.pdf}.

\bibitem[Vardi et~al.(2022)Vardi, Shamir, and
  Srebro]{DBLP:journals/corr/vardi-shamir-srebro}
Gal Vardi, Ohad Shamir, and Nathan Srebro.
\newblock The sample complexity of one-hidden-layer neural networks.
\newblock \emph{CoRR}, abs/2202.06233, 2022.

\end{thebibliography}

\newpage
\appendix
\section{Proofs} \label{sec:proofs}

\subsection{Proof of Observation \ref{proper-cover}}
    The first part follows immediately from the definition. For the second part, let $k :=  N(\F,d,\frac{\epsilon}{2})$ and $N = \{g_1,...,g_k\}$ be an $\frac{\epsilon}{2}$-cover for $\F$. For every $g_i \in N$, let $f_i\in \F$ be some function such that $d(f_i,g_i) \leq \frac{\epsilon}{2}$: there must exist $f_i$ with this property, otherwise we can remove $g_i$ from $N$ and still get an $\frac{\epsilon}{2}$-cover for $\F$,  contradicting the minimality property of the covering number.
    For every $f \in \F$, let $g_i \in N$ be the function with $d(g_i,f) \leq \frac{\epsilon}{2}$, then by the triangle inequality $d(f_i,f) \leq d(f_i,g_i) + d(g_i,f) \leq \epsilon$. Therefore $\{f_1,...,f_k\}$ is a proper $\epsilon$-cover for $\F$.

\subsection{Proof of Theorem \ref{thm-lower-bound-lipchitz-no-initialization}}
We use the following lemmas from \cite{DBLP:journals/dg-two-layer-networks}:
\begin{lemma} \label{point-far-apart}
For $d \geq 20$, there exists a set of vectors $\bx_1,...,\bx_m \in \R^d$ and a set of matrices $W_1,...,W_{2^m}\in \R^{n \times d}$ with $n = \Theta(\exp(d))$ that have the following properties:
\begin{enumerate}
    \item $||\bx_i||=1$, for each $i\in [m]$
    \item $m = \frac{n}{10}$
    \item $||W_s||_F^2 \leq 2d$, for each $s \in [2^m]$
    \item $||W_s\bx_i- W_t\bx_j|| \geq \frac{1}{4}$, for each $i,j\in [m]$ and $s,t\in[2^m]$ s.t. $(s,i) \neq (t,j)$
\end{enumerate}
\end{lemma}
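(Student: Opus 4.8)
The plan is to construct both the vectors $\bx_i$ and the matrices $W_s$ by the probabilistic method. Fix $n = \Theta(\exp(d))$ (with a small enough constant in the exponent, as justified below) divisible by $10$, set $m := n/10$ so that Property~2 holds by construction, draw $\bx_1,\dots,\bx_m$ independently and uniformly from the unit sphere $S^{d-1}\subseteq\R^d$ so that Property~1 holds by construction, and draw $W_1,\dots,W_{2^m}$ independently, each with i.i.d.\ $N(0,1/n)$ entries and independent of the $\bx_i$. It then suffices to show that with positive probability Properties~3 and~4 also hold simultaneously, so that a realization yields the lemma.

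For Property~3, note that for each fixed $s$ the quantity $\|W_s\|_F^2$ is $1/n$ times a chi-squared variable with $nd$ degrees of freedom, hence has mean $d$; a standard chi-squared upper-tail estimate gives $\Pr[\|W_s\|_F^2 > 2d] \le \exp(-c_1 nd)$ for a universal $c_1>0$. A union bound over the $2^m$ matrices, using $m = n/10$ and $d\ge 20$, shows that some $W_s$ violates Property~3 with probability at most $2^m\exp(-c_1 nd) = \exp(\tfrac{\ln 2}{10}n - c_1 nd) = o(1)$.

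The heart of the argument is Property~4. First, since two independent uniform points on $S^{d-1}$ have inner product concentrated near $0$, $\Pr[\|\bx_i-\bx_j\|^2 < 1] \le \exp(-c_2 d)$ for a universal $c_2>0$; union-bounding over the $\binom{m}{2}$ pairs, with probability $1 - m^2\exp(-c_2 d) = 1-o(1)$ all pairs satisfy $\|\bx_i - \bx_j\|^2 \ge 1$ (this is the point at which the constant hidden in $n = \Theta(\exp(d))$ must be taken small relative to $c_2$). Condition on the $\bx_i$ having this separation and fix $(s,i)\ne(t,j)$. If $s=t$ (so $i\ne j$) then $W_s\bx_i - W_s\bx_j = W_s(\bx_i-\bx_j)$ has, conditionally, i.i.d.\ $N(0,\|\bx_i-\bx_j\|^2/n)$ coordinates, so $\|W_s(\bx_i-\bx_j)\|^2$ equals $\tfrac{\|\bx_i-\bx_j\|^2}{n}\ge\tfrac1n$ times a $\chi^2_n$ variable $Z$, and the event $\{\|W_s\bx_i - W_s\bx_j\| < \tfrac14\}$ is contained in $\{Z < n/16\}$, which has probability at most $\exp(-c_3 n)$. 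If $s\ne t$ then $W_s\bx_i$ and $W_t\bx_j$ are independent with i.i.d.\ $N(0,1/n)$ coordinates, so their difference has i.i.d.\ $N(0,2/n)$ coordinates and $\{\|W_s\bx_i - W_t\bx_j\| < \tfrac14\}$ is contained in $\{Z < n/32\}$ (for a chi-squared $Z$ with $n$ degrees of freedom), again of probability at most $\exp(-c_3 n)$. Union-bounding over the at most $(2^m m)^2 = 4^m m^2$ ordered pairs, Property~4 fails with probability at most $4^m m^2\exp(-c_3 n) = \exp(\tfrac{2\ln 2}{10}n + 2\ln m - c_3 n)$, which is $o(1)$ as long as $c_3 > \tfrac{2\ln 2}{10}$. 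Since all three failure probabilities are $o(1)$, for $d\ge 20$ their sum is below $1$ and the lemma follows.

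The one genuinely delicate point — everything else being routine Gaussian/chi-squared concentration — is the bookkeeping in Property~4: because $m = n/10$, the number of ``bad pairs'' is itself exponential in $n$ (namely $4^m = \exp(\Theta(n))$), so the union bound closes only if the chi-squared lower-tail constant $c_3$ strictly exceeds $\tfrac{2\ln 2}{10}\approx 0.14$. This is precisely why the shattering margin enters as the small number $\tfrac14$ (rather than the typical value $\approx 1$ of $\|W_s\bx_i\|$) and why the ratio $m/n$ is taken as small as $\tfrac1{10}$: both create slack in the exponent. Checking, via Laurent--Massart–type estimates, that $\Pr[Z < n/16]$ and $\Pr[Z < n/32]$ are bounded by $\exp(-c_3 n)$ with $c_3 \approx 0.2 > 0.14$, and similarly that $n = \Theta(\exp(d))$ can be kept small enough in the exponent to afford the $\Theta(m^2)$ union bound for the $\bx_i$-separation, is the only real work. (Alternatively, one may take the $\bx_i$ deterministically as a binary code of relative Hamming distance $\ge\tfrac14$ rescaled to the sphere — such a code has $\exp(\Omega(d))$ codewords by Gilbert--Varshamov and makes $\|\bx_i-\bx_j\|\ge 1$ automatic, eliminating the $\bx_i$-separation step altogether.)
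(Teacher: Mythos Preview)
The paper does not actually prove this lemma: it is quoted verbatim as a result from \cite{DBLP:journals/dg-two-layer-networks} (see the sentence ``We use the following lemmas from \cite{DBLP:journals/dg-two-layer-networks}'' immediately preceding it), so there is no in-paper argument to compare against. Your probabilistic-method sketch --- i.i.d.\ Gaussian matrices with variance $1/n$, well-separated unit vectors, chi-squared concentration for Properties~3 and~4, and a union bound --- is the standard route and is essentially certainly what the cited source does as well; the identification of the only delicate point (that the number of pairs in Property~4 is itself $\exp(\Theta(n))$, forcing the chi-squared lower-tail rate to beat $\tfrac{2\ln 2}{10}$) is exactly right.

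One small caveat worth flagging: your primary choice of the $\bx_i$ as uniform points on $S^{d-1}$ forces the exponent in $n=\exp(c'd)$ to satisfy $c'<c_2/2$ with $c_2\approx\tfrac{1}{2}\ln\tfrac{4}{3}\approx 0.14$, which at the low end $d=20$ leaves $n$ (and hence $m=n/10$) essentially trivial. So the random-sphere version establishes the lemma only asymptotically in $d$, not literally for all $d\ge 20$. The Gilbert--Varshamov alternative you mention in passing is therefore not merely cosmetic: taking the $\bx_i$ from a $\{\pm 1/\sqrt{d}\}^d$ code of relative distance $\ge 1/4$ removes the $\bx_i$-separation union bound entirely, frees the exponent $c'$, and makes the constants close already at $d=20$. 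If you want the statement exactly as written (with the explicit threshold $d\ge 20$), that is the version to present as the main construction rather than as a parenthetical.
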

\begin{lemma} \label{lipchitz-function-on-point-far-apart}
Let $\bx_1,\dots ,\bx_m$ be a finite set of different points in some metric space $(\X,d)$,
such that for each $i \neq j \in [m]$, $d(\bx_i,\bx_j) \geq \alpha$. Let further be $p_1,\dots,p_m \in \R$ any set of points. Then there exists an \lips function, $f: \X \rightarrow \R$, where
\begin{align*} L = \frac{2}{\alpha} \min_{C\geq 0}\max_{i\in[m]} |p_i-C|,\end{align*}
such that for each $i\in[m]$, $f(\bx_i) = p_i$.
\end{lemma}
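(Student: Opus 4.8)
The plan is to exhibit $f$ explicitly, via the classical McShane-type Lipschitz interpolation, with the free additive constant chosen so that the Lipschitz constant matches the stated value. First I would fix a minimizer $C^* \in \argmin_{C \ge 0} \max_{i \in [m]} |p_i - C|$; this minimum is attained, since $C \mapsto \max_i |p_i - C|$ is continuous, convex and tends to $+\infty$, so it suffices to minimize over the compact interval $[0, \max_i|p_i|]$. Set $M := \max_{i\in[m]} |p_i - C^*|$, so that the target constant is exactly $L = 2M/\alpha$, and $|p_i - C^*| \le M$ for every $i$.

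Next I would define the candidate function
\[
f(\bx) := \min_{i \in [m]} \bigl( p_i + L\, d(\bx, \bx_i) \bigr).
\]
Each map $\bx \mapsto d(\bx,\bx_i)$ is $1$-Lipschitz by the triangle inequality, hence each $\bx \mapsto p_i + L\,d(\bx,\bx_i)$ is $L$-Lipschitz, and a pointwise minimum of finitely many $L$-Lipschitz functions is again $L$-Lipschitz. So $f$ is $L$-Lipschitz, as required.

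It remains to check the interpolation property $f(\bx_j) = p_j$ for every $j \in [m]$. The term $i = j$ in the minimum equals $p_j$, so $f(\bx_j) \le p_j$. Conversely, for $i \ne j$ we have $d(\bx_i, \bx_j) \ge \alpha$, hence the $i$-th term is at least $p_i + L\alpha = p_i + 2M \ge (C^* - M) + 2M = C^* + M \ge p_j$, where we used $p_i \ge C^* - M$ and $p_j \le C^* + M$. Thus every term of the minimum is at least $p_j$, so $f(\bx_j) \ge p_j$, and therefore $f(\bx_j) = p_j$.

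I do not expect a genuine obstacle here; the only delicate point is that the precise value $L = 2M/\alpha$ (rather than anything smaller) is exactly what forces all the off-diagonal terms $p_i + L\,d(\bx_i,\bx_j)$ to dominate the diagonal value $p_j$ — this is where the min–max structure of the Lipschitz constant is used, together with the separation hypothesis $d(\bx_i,\bx_j)\ge\alpha$. (If one additionally wants a bounded interpolant, one may replace $f$ by $\min\{f,\, C^*+M\}$, which is still $L$-Lipschitz and still equals $p_j$ at each $\bx_j$, since $p_j \le C^*+M$.)
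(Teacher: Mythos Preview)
Your proof is correct: the McShane-type interpolation $f(\bx)=\min_{i}\bigl(p_i+L\,d(\bx,\bx_i)\bigr)$ is $L$-Lipschitz as a minimum of finitely many $L$-Lipschitz functions, and your chain of inequalities $p_i + L\alpha \ge (C^*-M)+2M \ge p_j$ cleanly verifies interpolation using exactly the separation assumption and the definition of $M$.

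As for comparison: the paper does not actually prove this lemma. It is quoted, together with the preceding lemma, from \cite{DBLP:journals/dg-two-layer-networks} and used as a black box. Your argument therefore supplies what the paper omits, via the standard Lipschitz-extension construction; the only thing worth noting is that your parenthetical about truncating to $[C^*-M,\,C^*+M]$ is optional for the stated lemma but matches how the result is applied downstream (where the target values are $\pm\epsilon$).
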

\begin{proof}[Proof of Theorem \ref{thm-lower-bound-lipchitz-no-initialization}]

Let $B \geq 1$, $\epsilon \leq 1$, and $d$ that will be chosen later. Let $\bx_1,...,\bx_m\in \R^{d}$ and $W_1',...,W_{2^m}'\in \R^{n \times d}$ be the vectors and metrices that are defined in Lemma \ref{point-far-apart}. Note that $n = \Theta(e^d)$ and $m = \Theta(n)$.

Let $W_s = \frac{8\epsilon}{L}\cdot W_s'$ for each $s\in [2^m]$.  By property 3 from that Lemma we have that $||W_s||_F^2 \leq 128d\epsilon^2/L^2$, for each $s \in [2^m]$. Thus, by setting $d := \lfloor \frac{L^2B^2}{128\epsilon^2} \rfloor$, we get that $||W_s||_F \leq B$. Moreover, we have that $m = \Theta(n) = 2^{\Theta(L^2B^2/\epsilon^2)}$.  By property 4 from that lemma, we have that the set $Q =\{W_s\bx_i : i \in [m], s \in [2^m]\}$ contains $m2^m$ different elements such that for each pair $W_s\bx_i \neq W_t\bx_j$ we have
\begin{align*} ||W_s\bx_i - W_t\bx_j|| = \frac{8\epsilon}{L} \cdot ||W_s'\bx_i - W_t'\bx_j|| \geq \frac{2\epsilon}{L}.\end{align*}
Order the elements of the set $2^{[m]}$ as $S_1,\dots,S_{2^m}$ in some arbitrary order, and define
the function $g : [m] \times [2^m] \rightarrow \{\pm \epsilon\}$ as:
\begin{align*}g(k,i) = \begin{cases}
\epsilon & i\in S_k\\
-\epsilon & i \notin S_k\\
\end{cases}.\end{align*}
Apply Lemma \ref{lipchitz-function-on-point-far-apart} with the Euclidean metric space, to get an \lips
function, $f:\R^n \rightarrow \R$, such that for all $i \in [m], s \in [2^m]$,
 \begin{align*} f(W_s\bx_i) = g(k,i). \end{align*}
Moreover, we have that $\bx \rightarrow f(W_s\bx)$ belongs to $\F_{B,n,d}^{f,W_0=0}$ for each $s\in[2^m]$. Therefore, $\F_{B,n,d}^{f,W_0=0}$ can shatter $m = \Theta(n) = \exp \left(\Theta(L^2B^2/\epsilon^2)\right)$ points from $\{\bx\in \R^{d}: ||\bx||\leq 1\}$ with margin $\epsilon$.
\end{proof}

\subsection{Proof of Lemma \ref{lemma-covering-all-lipchitz-functions}}

Define the $L_{\infty}$ distance
\begin{align*}d_{\infty}(g,g') = \sup_{z \in \Z} \norm{f(z)-f'(z)}, \end{align*}
where $\Z$ is the domain of $g$ and $g'$. Let $\U_B = \{\bx\in\R^r: ||\bx||\leq B\}$ and fix some $\epsilon > 0$.
By Observation \ref{net-for-R^d} there exists a set  $N_x \subseteq \R^r$ with $|N_x|=(1+\frac{8BL}{\epsilon})^r$ s.t. for every $\bx \in\U_B$, there exists $\bx' \in N_x$ with
\begin{align*}||\bx-\bx'||\leq \frac{\epsilon}{4L}~. \end{align*}
Note that $\psi(\bx) \in [a-BL,a+BL]$ for any $\psi \in \Psi_{L,a}, \bx \in \U_B$. For simplicity, we assume that $a=0$ (the proof for any other $a$ is essentially the same).  Let $ N_y := \{0, \pm \frac{\epsilon}{4}, \pm \frac{\epsilon}{2},\dots,\pm \lfloor BL/4\epsilon \rfloor 4\epsilon\} $ be $\frac{\epsilon}{4}$-cover for the closed interval $[-BL,BL]$.
Given any $\psi \in \Psi_{L,a=0}$ and $\bx \in \U_B$, construct $\psi': N_x \rightarrow N_y$ as follows: let $\bx'$ be the point in $N_x$ that is nearest to $\bx$, and let $\psi'(\bx)$ be the point in $N_y$ that is nearest to $\psi(\bx')$. We get that
\begin{align*}
   &|\psi(\bx) - \psi'(\bx)| \leq |\psi(\bx) - \psi(\bx')| + |\psi(\bx') - \psi'(\bx')| + |\psi'(\bx') - \psi'(\bx)| \\
   & \leq L \cdot ||\bx-\bx'|| + \frac{\epsilon}{4} + 0 \leq \frac{\epsilon}{2}.
\end{align*}
The number of such functions is at most 
\begin{align*}|N_y|^{|N_x|}, \end{align*}
therefore
\begin{align*}\log N(\Psi_{L,a=0} ,d_{\infty},\frac{\epsilon}{2}) = |N_x| \log |N_y| = \left(1+\frac{8BL}{\epsilon}\right)^r\cdot \log \frac{8BL}{\epsilon}.\end{align*}
Next, we argue that
\begin{align*}\log N(\Psi_{L,a=0}\circ \F,d_m,\epsilon) \leq \log N(\Psi_{L,a=0},d_{\infty},\frac{\epsilon}{2}) + \log N(\F,d_m,\frac{\epsilon}{4L}),\end{align*}
from which the result will follow.
To see this, pick any $\psi \in \Psi_{L,a=0}$ and $g\in \F$. By observation \ref{proper-cover} there exists a proper $\frac{\epsilon}{2L}$-cover for $\F$ of size $N(\F,d_m,\frac{\epsilon}{4L})$. Let $\psi', g'$ be the respective closest functions in the cover of $\Psi_{L,a=0}$ and the proper cover of $\F$ (at scale $\frac{\epsilon}{2}$ and $\frac{\epsilon}{2L}$ respectively). Since $g'$ belongs to proper cover e.g. $g'\in \F$, its range is $\U_B$. By the triangle inequality and since $\psi$ is \lips, we have
\begin{align*}
   &d_m(\psi g, \psi' g') \leq d_m(\psi g, \psi g') + d_m(\psi g', \psi' g')  \leq d_m(\psi g, \psi g') + d_{\infty}(\psi g', \psi' g') \leq \\
   &L\cdot d_m( g,  g') + d_{\infty}(\psi g', \psi' g') \leq \frac{\epsilon}{2} + \frac{\epsilon}{2} = \epsilon
\end{align*}

\subsection{Proof of Theorem \ref{thm-upper-bound-lipchitz-no-initialization}}
\begin{observation} \label{net-for-R^d}
Let $\U_B := \{\bx\in\R^r: ||\bx||\leq B\}$ and $\epsilon > 0$.
There is a set $N\subseteq \U_B$ with $(B/\epsilon)^r \leq |N| \leq \left(1+\frac{2B}{\epsilon}\right)^r$ such that: 
\begin{enumerate}
    \item For every $\bx \in \U_B$, there exists a $\bx' \in N$ with $||\bx-\bx'||\leq \epsilon $ ($N$ is an $\epsilon$-cover for $\U_B$).
    \item For every $\bx,\by \in N$, we have that $\norm{\bx - \by} \geq \epsilon$ ($N$ is an $\epsilon$-packing for $\U_B$). 
\end{enumerate}
\end{observation}
\begin{proof}
    This is a well-known volume argument. Let $\U_B := \{\bx\in\R^r: ||\bx||\leq B\}$ and fix some $\epsilon > 0$. Choose $N$ to be a maximal $\epsilon$-packing ($\epsilon$-seperated) subset of $\U_B$. In other words, $N \subseteq \U_B$ is such that $||\bx-\mathbf{y}||\geq \epsilon$ for all $\bx,\mathbf{y} \in N, \bx\neq \mathbf{y}$ and no subset of $\U_B$ has this property. The maximality property implies that for every $\bx \in \U_B$, there exists $\bx' \in N$ with $||\bx-\bx'||\leq \epsilon $ (i.e. $N$ is an $\epsilon$-cover for $\U_B$). Otherwise there would exist $\bx \in \U_B$
 that is at least $\epsilon$-far from all points in $N$. Thus $N \cup \{\bx\}$ would still be an $\epsilon$-separated set, contradicting the maximality property. Moreover, the separation property implies via the triangle inequality that the balls of
radius $\frac{\epsilon}{2}$ centered at the points in $N$ are disjoint(up to null set). On the other hand, all such balls lie in $(B+\epsilon)B_2^r$ where $B_2^r$ denotes the unit Euclidean ball centered at the origin. Comparing the volume gives $vol\left(\frac{\epsilon}{2}B_2^r\right) \cdot |N| \leq vol\left(\left(B+\frac{\epsilon}{2}\right)B_2^r\right)$ and $|N| \cdot vol(\epsilon B_2^r) \geq vol(B \cdot B_2^r)$. Since $vol(cB_2^r) = c^r vol(B_2^r)$ for all $c\geq 0$ we have
\begin{align*}
&|N| \leq  \frac{vol\left(\left(B+\frac{\epsilon}{2}\right)B_2^r\right)}{vol\left(\frac{\epsilon}{2}B_2^r\right)} = \left(\frac{B+\frac{\epsilon}{2}}{\frac{\epsilon}{2}}\right)^r, \\
&|N| \geq \frac{ vol(B \cdot B_2^r)}{vol(\epsilon \cdot B_2^r)} = \left(\frac{B}{\epsilon}\right)^r.
\end{align*}
We conclude that $\left(\frac{B}{\epsilon}\right)^r \leq |N|\leq(1+\frac{2B}{\epsilon})^r$, as required.
\end{proof}

The next lemma is shown in Corollary 9 in \citet{DBLP:conf/nips/KakadeST08-covering-linear-predictors}.
\begin{lemma}\label{lemma-covering-linear-predictors}
Let $\W = \{\bx \rightarrow \langle \bw,\bx \rangle : \norm{\bw} \leq B\}$ be the class of norm-bounded linear predictors over inputs from $\{\bx : \norm{\bx}\leq 1\}$. Then for every $\epsilon>0$
\begin{align*} \log N(\W,d_m,\epsilon) \leq \frac{cB^2}{\epsilon^2}, \end{align*}
for some universal constant $c>0$.
\end{lemma}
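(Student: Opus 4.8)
The plan is the standard reduction of the covering number to a covering-number estimate for ellipsoids. \textbf{Step 1 (scaling).} Multiplying $\bw$ by $B$ rescales every value $\langle\bw,\bx_i\rangle$, and hence every $d_m$-distance, by exactly $B$; therefore $N(\W,d_m,\epsilon)=N(\W_1,d_m,\epsilon/B)$, where $\W_1$ denotes the same class with the constraint $\norm{\bw}\le 1$. So it suffices to show $\log N(\W_1,d_m,\delta)\le c/\delta^2$ for every $\delta>0$, and then set $\delta=\epsilon/B$.

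\textbf{Step 2 (reformulation as ellipsoid covering).} For $\bw,\bw'$ with $\norm{\bw},\norm{\bw'}\le1$ and the fixed inputs $\bx_1,\dots,\bx_m$,
\[
d_m(\langle\bw,\cdot\rangle,\langle\bw',\cdot\rangle)^2=\frac1m\summ\langle\bw-\bw',\bx_i\rangle^2=(\bw-\bw')^\top\Sigma(\bw-\bw'),\qquad \Sigma:=\frac1m\summ\bx_i\bx_i^\top .
\]
Thus $d_m$ restricted to $\W_1$ is exactly the seminorm $\bw\mapsto\norm{\Sigma^{1/2}\bw}$ on the Euclidean unit ball of $\R^d$, so $N(\W_1,d_m,\delta)$ equals the Euclidean $\delta$-covering number of the ellipsoid $\mathcal{E}:=\Sigma^{1/2}\{\bw\in\R^d:\norm{\bw}\le1\}$ (choosing one representative per fiber of $\Sigma^{1/2}$). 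If $\lambda_1\ge\lambda_2\ge\cdots\ge0$ are the eigenvalues of $\Sigma$ with eigenvectors $\bv_1,\bv_2,\dots$, then $\mathcal{E}$ has semi-axes $\sqrt{\lambda_1},\sqrt{\lambda_2},\dots$, and crucially $\sum_j\lambda_j=\mathrm{tr}(\Sigma)=\frac1m\summ\norm{\bx_i}^2\le1$.

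\textbf{Step 3 (covering the ellipsoid).} It remains to bound $\log N(\mathcal{E},\norm{\cdot},\delta)$ for an ellipsoid with $\sum_j\lambda_j\le1$. An elementary estimate goes as follows: let $k=|\{j:\lambda_j>\delta^2\}|$, so that $k\delta^2<\sum_{j\le k}\lambda_j\le1$ and hence $k<1/\delta^2$; the projection of $\mathcal{E}$ onto the remaining axes has Euclidean norm at most $\delta$ (each such coordinate contributes at most $\delta^2\langle\bw,\bv_j\rangle^2$ to the squared norm, and $\norm{\bw}\le1$), so it can be absorbed into the cover, while the projection onto the top $k$ axes lies in the ball of radius $\sqrt{\lambda_1}\le1$ in $\R^k$, which by Observation~\ref{net-for-R^d} has a $\delta$-net of size $(1+2/\delta)^k$. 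Combining yields a $2\delta$-net and the slightly weaker bound $\log N(\W,d_m,\epsilon)\le\tilde{O}(B^2/\epsilon^2)$.

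\textbf{Removing the logarithmic factor; the main obstacle.} To reach the clean $cB^2/\epsilon^2$ of the statement one replaces the crude truncation by Maurey's empirical method applied to $\mathcal{E}$: writing a point $p=\sum_j a_j\sqrt{\lambda_j}\bv_j$ with $\sum_j a_j^2\le1$ as $p=\E_{J\sim\pi}[\mathbf{g}_J]$ with $\pi_j\propto\lambda_j$ and $\mathbf{g}_j$ a (data-dependent) multiple of $\bv_j$, one checks $\E\norm{\mathbf{g}_J}^2=(\sum_j\lambda_j)(\sum_j a_j^2)\le1$, so $p$ lies within $\delta$ of an average of $O(1/\delta^2)$ of the $\mathbf{g}_j$; a dyadic grouping of the coordinates according to the magnitude of $\lambda_j$ (so that, at each scale, the coefficients entering these averages range over a bounded interval and can be discretized cheaply) followed by summation of the resulting geometric series gives $\log N(\mathcal{E},\norm{\cdot},\delta)\le c/\delta^2$. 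This ellipsoid-entropy estimate is exactly the content of Corollary~9 of \citet{DBLP:conf/nips/KakadeST08-covering-linear-predictors}, which we invoke directly; the only genuinely nontrivial point is the dyadic bookkeeping that eliminates the $\log(1/\delta)$ factor, all remaining steps being the routine reductions above.
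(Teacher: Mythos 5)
The paper's entire proof of this lemma is a citation to Corollary 9 of \citet{DBLP:conf/nips/KakadeST08-covering-linear-predictors}, and since your argument ultimately rests on invoking that very same corollary for the sharp bound, you are taking essentially the same route. Your preliminary reduction to ellipsoid covering and the truncation argument are correct but on their own yield only the weaker $\tilde{O}(B^2/\epsilon^2)$ bound; the clean $cB^2/\epsilon^2$ still comes from the cited result (your Maurey sketch being an optional, and only sketched, way to reprove it).
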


\begin{lemma} \label{lemma-covering-linear-predictors-high-dimenssional}
    Let 
    \begin{align*}\F = \{\bx \rightarrow W\bx: W\in \R^{r \times d}, ||W||_F \leq B\}\end{align*}
    be class of functions over inputs from $\{\bx : \norm{\bx}\leq 1\}$.
    Then for every $\epsilon>0$,
    \begin{align*}\log N(\F,d_m,\epsilon) \leq \frac{cr^2B^2}{\epsilon^2}, \end{align*}
for some universal constant $c>0$.
\end{lemma}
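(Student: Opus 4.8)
The plan is to reduce to the scalar case treated in Lemma~\ref{lemma-covering-linear-predictors}, coordinate by coordinate. Write any $W\in\R^{r\times d}$ with $\norm{W}_F\leq B$ in terms of its rows $\bw_1,\dots,\bw_r$, so that $W\bx=(\langle\bw_1,\bx\rangle,\dots,\langle\bw_r,\bx\rangle)$, and observe that each row satisfies $\norm{\bw_j}\leq\norm{W}_F\leq B$. Hence the $j$-th coordinate of any function in $\F$ ranges over the scalar class $\W=\{\bx\mapsto\langle\bw,\bx\rangle:\norm{\bw}\leq B\}$ from Lemma~\ref{lemma-covering-linear-predictors}, over the same input domain $\{\bx:\norm{\bx}\leq 1\}$.

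Fix the $m$ inputs $\bx_1,\dots,\bx_m$ defining $d_m$. Apply Lemma~\ref{lemma-covering-linear-predictors} to $\W$ at scale $\epsilon/\sqrt{r}$ (with respect to these same inputs) to obtain an $(\epsilon/\sqrt r)$-cover $C$ of $\W$ with $\log|C|\leq cB^2/(\epsilon/\sqrt r)^2=crB^2/\epsilon^2$. Form the product family $C^r:=\{(g_1,\dots,g_r):g_j\in C\}$, where each tuple is regarded as a function into $\R^r$; then $|C^r|=|C|^r$, so $\log|C^r|\leq cr^2B^2/\epsilon^2$. It remains to verify that $C^r$ is an $\epsilon$-cover of $\F$ in $d_m$. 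Given $W\in\F$, for each $j$ pick $g_j\in C$ with $d_m(g_j,\langle\bw_j,\cdot\rangle)\leq\epsilon/\sqrt r$ (scalar metric), and set $g:=(g_1,\dots,g_r)$. Then
\begin{align*}
d_m(g,W\cdot)^2 &= \frac{1}{m}\summ \norm{g(\bx_i)-W\bx_i}^2 = \sum_{j=1}^r \frac{1}{m}\summ \bigl(g_j(\bx_i)-\langle\bw_j,\bx_i\rangle\bigr)^2 \\
&= \sum_{j=1}^r d_m\bigl(g_j,\langle\bw_j,\cdot\rangle\bigr)^2 \leq r\cdot\frac{\epsilon^2}{r} = \epsilon^2,
\end{align*}
so $N(\F,d_m,\epsilon)\leq|C^r|$ and $\log N(\F,d_m,\epsilon)\leq cr^2B^2/\epsilon^2$, as required.

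There is essentially no serious obstacle here: the argument is a routine coordinate-wise decomposition combined with the fact that the log of a product cover is the sum of the logs. The only point to be mildly careful about is that the Euclidean norm on $\R^r$ splits as a sum of squared coordinates, which forces us to cover each coordinate class at the reduced accuracy $\epsilon/\sqrt r$; this is exactly what turns the single-coordinate bound $crB^2/\epsilon^2$ into the stated $cr^2B^2/\epsilon^2$ after summing over the $r$ coordinates. (One could hope to shave this to $crB^2/\epsilon^2$ by exploiting $\sum_j\norm{\bw_j}^2\leq B^2$ and allocating the accuracy budget non-uniformly across coordinates, but the crude uniform split already suffices for the claimed bound.)
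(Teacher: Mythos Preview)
Your proof is correct and follows essentially the same approach as the paper: both reduce to the scalar covering bound of Lemma~\ref{lemma-covering-linear-predictors} at scale $\epsilon/\sqrt{r}$, take the $r$-fold product cover, and verify it is an $\epsilon$-cover via the coordinate-wise decomposition $d_m(f,f')^2=\sum_j d_m(f_j,f_j')^2$. Your parenthetical remark about non-uniform accuracy allocation is a nice additional observation not present in the paper.
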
 
\begin{proof}
Applying Lemma \ref{lemma-covering-linear-predictors} with $\epsilon' = \frac{\epsilon}{\sqrt{r}}$, we get a class of functions $N_w$ with $|N_w| = 2^{crB^2 / \epsilon^2}$ such that for every function $f_w \in \W :=\{\bx \rightarrow \bw^{\top}\bx: \norm{\bw} \leq B\}$, there exists $f_w' \in N_w$ with
\begin{align*} d_m(f_w,f_w') \leq \frac{\epsilon}{\sqrt{r}}. \end{align*}
Now we are ready to define a cover for $\F$. Let
\begin{align*}N = \{(\bx_1,...,\bx_r) \rightarrow \left(f_1'(\bx_1),...,f_r'(\bx_r)\right): f_j' \in N_w\}. \end{align*}
    Let $f \in \F$ be $f(\bx) = W\bx$. Then
    \begin{align*}f(\bx) = (\bw_1^{\top}\bx,...,\bw_r^{\top}\bx), \end{align*}
    where $\bw_j$ is the $j$-th row of $W$. Define $f_j(\bx) := \bw_j^{\top}\bx$. Since $||W||_F \leq B$, we have that $||\bw_j|| \leq B$, in particular $f_j\in \W$. Remember that $N_w$ is cover for $\W$, so there exists $f_j' \in N_w$ s.t.
    \begin{align*}d_m(f_j,f_j') \leq \frac{\epsilon}{\sqrt{r}}.\end{align*}
    Let $f' \in N$ be the function 
    \begin{align*}f'(\bx) := \left(f_1'(\bx),...,f_r'(\bx)\right).  \end{align*}
    Hence,
    \begin{align*}
        &d_m(f,f')^2 = \frac{1}{m}\summ ||f(\bx_i)-f'(\bx_i)||^2
        = \frac{1}{m}\summ \sum_{j=1}^r ||f_j(\bx_i) - f_j'(\bx_i)||^2 =  \sum_{j=1}^r \summ ||f_j(\bx_i) - f_j'(\bx_i)||^2 \\
        &= \sum_{j=1}^r  d_m(f_j,f_j')^2 \leq \epsilon^2.
    \end{align*}
    Therefore, $N$ is an $\epsilon$-cover for $\F$ with
    \begin{align*}|N| =  |N_w|^r = \left(2^{\frac{crB^2}{\epsilon^2}}\right)^r = 2^{\frac{cr^2B^2}{\epsilon^2}}\end{align*}
    
\end{proof}

\begin{proof}[Proof of Theorem \ref{thm-upper-bound-lipchitz-no-initialization}]
    Assume for now that $L=1$ and
    let $W\in \R^{n \times d}$ be matrix such that $||W||_F\leq B$. The singular value decomposition of $W$ is a factorization of the form $W = USV^{\top}$, where $U\in \R^{n \times n}$ and $V\in \R^{d \times d}$ are orthogonal matrices and $S\in \R^{n \times d}$ is a diagonal matrix with non-negative real numbers on the diagonal. We also can assume W.L.O.G that the upper-left submatrix of $S$ is of the form $diag(\sigma_1,...,\sigma_{\min\{d,n\}})$ and that $\sigma_1\geq \dots \geq \sigma_{\min \{d,n\}}$.  Let $\Tilde{W}_\epsilon := U\Tilde{S}_{\epsilon}V^{\top}$ be a low-rank approximation to $W$.
    where, $\Tilde{S}_{\epsilon}$ is defined by setting all the elements in $S$ that are $\leq \epsilon$ to zero. Formally, let $r\in [\min\{n,d\}]$ be the largest number such that $\sigma_r > \epsilon$ (or $0$ if no such number exists) and partition $U,S$ and $V$ as follows:
    \begin{align*}U = \left[
\begin{array}{cc}
U_1 & U_2
\end{array}\right], 
S = \left[
\begin{array}{cc}
S_1 & 0 \\
0 & S_2
\end{array}\right], 
V = \left[
\begin{array}{cc}
V_1 & V_2 
\end{array}\right]
\end{align*} 
where $U_1 \in \R^{n \times r}, S_1 = diag(\sigma_1,...,\sigma_r)\in \R^{r \times r}$ and $V_1 \in \R^{r \times d}$. We define
\begin{align*} \Tilde{S}_{\epsilon} := \left[
\begin{array}{cc}
S_1 & 0 \\
0 & 0
\end{array}\right],\end{align*}
which means that $\Tilde{W}_{\epsilon} = U_1\Tilde{S}_{\epsilon}V_1^{\top}$. 
The proof strategy consists of two parts:
First, we argue that $\Tilde{W}_{\epsilon}$ approximates $W$ in spectral norm.
Second, we argue that $\Tilde{W}_{\epsilon}$ has a low rank, so it is enough to find a small cover for the class of functions $\Psi_{L=1,a,r}\circ \W_{B,r} $ where $r := rank(\Tilde{W}_{\epsilon})$,
to get a small cover for $\Psi_{L,a,n}\circ \W_{B,n}$. Details follow.
 Let $\bx\in \R^d$ such that $||\bx||\leq 1$. Since $U$ is an orthogonal matrix, we have that $||U\bx||=||\bx||\leq 1$. Moreover, since the spectral norm of a matrix is equal to the largest singular value
 \begin{align*}||S-\Tilde{S}_{\epsilon}|| = ||S_2|| = \sigma_{r+1} \leq \epsilon.\end{align*}
Altogether we have
\begin{align*}||W\bx-\Tilde{W}_\epsilon \bx|| = ||USV^{\top}\bx-U\Tilde{S}_{\epsilon}V^{\top}\bx|| = ||SV^{\top}\bx-\Tilde{S}_{\epsilon}V^{\top}\bx|| \leq ||S-\Tilde{S}_{\epsilon}||\cdot||V^{\top}\bx|| \leq \epsilon \end{align*}
which means that $\Tilde{W}_{\epsilon}$ indeed approximates $W$, in the sense of the spectral norm.
Moreover, 
\begin{align*} B^2 \geq ||W||_F^2 = \sum_{i=1}^{\min{\{d,n\}}} \sigma_i^2 \geq \sum_{i=1}^r \sigma_i^2,\end{align*}
and since $\sigma_i^2 \geq \epsilon^2$ for each $i\leq r$, we have that $r \leq \frac{B^2}{\epsilon^2}$. Thus, we have that $Rank(\Tilde{W}_\epsilon) = r  \leq \frac{B^2}{\epsilon^2}$. If we approximate $W$ up to $\epsilon$, then we can argue that for each $f \in \Psi_{L=1,a,n}$,
\begin{align*}|f(W\bx)-f(\Tilde{W}_{\epsilon}\bx)| \leq ||W\bx - \Tilde{W}_{\epsilon}\bx|| \leq \epsilon. \end{align*}
Define
\begin{align*}\Tilde{\W}_{\epsilon} := \{\Tilde{W}_{\epsilon}: ||W||_F \leq B, W\in \R^{n \times d}\}, \end{align*}
then we have by the triangle inequality that 
\begin{equation}\label{eq-covering-number-approximation}
    \log N(\Psi_{L=1,a,n} \circ \W,d_m,2\epsilon) \leq \log N(\Psi_{L=1,a,n} \circ \Tilde{\W}_{\epsilon},d_m,\epsilon).   
\end{equation}
Therefore, we now turn to find a small cover for
\begin{align*}
    &\Psi_{L=1,a,n} \circ \Tilde{\W}_{\epsilon} \subseteq \\ &\left\{\bx\rightarrow f(W\bx): \bx\in\R^d, W \in \R^{n \times d}, ||W||_F\leq B, Rank(W)\leq r, r = \frac{B^2}{\epsilon^2}, f\in \Psi_{L=1,a,n} \right\}. 
\end{align*}
Remember that by the singular value decomposition, if $Rank(W) = r$, then $W=USV^{\top}$ where $U \in \R^{n \times r}, S = diag(\sigma_1,...,\sigma_r)\in \R^{r \times r}$ and $V \in \R^{r \times d}$. Also, $U$ and $V$ are orthogonal matrices, and $||S||_F\leq B$. Thus, the class from above is equal to 
\begin{align}\label{eq-svd-class}
    \left\{\bx\rightarrow f(USV\bx):U,V \text{ are orthogonal},S=diag(\sigma_1,...,\sigma_r),||S||_F\leq B,  r=\frac{B^2}{\epsilon^2}, f \in \Psi_{L=1,a,n}\right\}.
\end{align}
Now we want to get rid of $U$. 
Observe that
\begin{align*}\{U\bx \rightarrow f(U\bx): U \text{ is orthogonal},  f \in \Psi_{L=1,a,n} \} \subseteq \Psi_{L=1,a,r},\end{align*}
where we remind that $\Psi_{L=1,a,r}$ is the class of all \onelips functions from $\{\bx\in\R^r: ||\bx||\leq B\}$ to $\R$, such that $f(\mathbf{0}) = a$ for some fixed $a\in \R$.
 Moreover, 
\begin{align*}
  &\left\{\bx\rightarrow SV^{\top}\bx: V \text{ is orthogonal},S=diag(\sigma_1,...,\sigma_r),||S||_F\leq B,r=\frac{B^2}{\epsilon^2}\right\} \\  
  &\subseteq \left\{\bx\rightarrow (\sigma_1^{\top}\bv_1^{\top}\bx,...,\sigma_r\bv_r^{\top}\bx): \bv_i\in\R^d,\sigma_i\in \R, ||\bv_i||=1,\sum_{i=1}^{r} \sigma_i^2 = B^2, r=\frac{B^2}{\epsilon^2} \right\},  
\end{align*}
where $\bv_i$ is the $i$-th column of $V$. Combining these observations, we have that the class of functions defined in Equation \ref{eq-svd-class} is a subset of
\begin{align}\label{eq-class-composition}
    &\left\{\bx\rightarrow f(\bw_1^{\top}\bx,...,\bw_r^{\top}\bx):f \in \Psi_{L=1,a,r},\sum_{i=1}^{r} ||\bw_i||^2 = B^2,r=\frac{B^2}{\epsilon^2} \right\} 
    = \Psi_{L=1,a,r} \circ \W_{B,r}.
\end{align}
This class is a composition of all linear functions from $\R^d$ to $\R^r$ of Frobenius norm at most $B$, and all \onelips functions. The covering number of such linear functions analyzed in Lemma \ref{lemma-covering-linear-predictors-high-dimenssional}, and the covering number of such composed classes analyzed in Lemma \ref{lemma-covering-all-lipchitz-functions}. Altogether, we have that the covering number of the class in Eq. \ref{eq-class-composition} is upper bound by
\begin{align*}
    &\log N(\Psi_{L=1,a,r} \circ \W_{B,r},d_m,\epsilon) \leq \left(1+\frac{8B}{\epsilon}\right)^r\cdot \log \left(\frac{8B}{\epsilon}\right) + \log N( \W_{B,r},m,\frac{\epsilon}{4}) \\&=  \left(1+\frac{8B}{\epsilon}\right)^r\cdot \log \left(\frac{8B}{\epsilon}\right) + \frac{cr^2B^2}{\epsilon^2},
\end{align*}  
for some universal constant $c>0$ and $r = \frac{B^2}{\epsilon^2}$. From this point, $c > 0$ represents some universal constant that may change from line to line. Combining with Eq. \ref{eq-covering-number-approximation} and the assumption that $\frac{B}{\epsilon} \geq 1$, we have
\begin{align*}
    \log N(\Psi_{L=1,a,n} \circ \W,d_m,\epsilon) \leq  \left(\frac{B}{\epsilon}\right)^{\frac{cB^2}{\epsilon^2} }.
\end{align*}
Observe that $c \cdot \Psi_{L=1,a,n} =  \Psi_{L=c,ca,n}$ for $c>0$, and hence it is easy to verify that $N(\Psi_{L=1,a,n} \circ \W,d_m,\epsilon) = N(\Psi_{L=c,a,n} \circ \W,d_m,\epsilon/c)$ .
Therefore, for general \lips functions we have
\begin{align*}
    \log N(\Psi_{L,a,n} \circ \W,d_m,\epsilon) \leq  \left(\frac{LB}{\epsilon}\right)^{\frac{cL^2B^2}{\epsilon^2} }.
\end{align*} 
To convert the upper bound on the covering number to an upper bound on the Rademacher complexity, we turn to use the Dudley integral covering number
bound (see \cite{dudley_integral}). In particular, since $g(\bx) \leq LB$ for each $g \in \Psi_{L,a,n} \circ \W_{B,n}$ and $\bx\in \R^d$ with $\norm{\bx} \leq 1$, we have

\begin{align}\label{eq-rademacher-of-compsotion-lipchitz-with-low-dim-linear-predictor}
&R_m(\Psi_{L,a,n} \circ \W_{B,n}) \leq \inf_{\epsilon \geq 0} \left\{4\epsilon + \frac{12}{\sqrt{m}}\int_{\epsilon}^{LB} \sqrt{\log N(\Psi_{L,a,n} \circ \W_{B,n},d_m,\tau)} d\tau \right\} \leq \nonumber\\   
&\inf_{\epsilon \geq 0} \left\{4\epsilon + \frac{12LB}{\sqrt{m}} \sqrt{\log N(\Psi_{L,a,n} \circ \W_{B,n},d_m,\epsilon)}  \right\}  \leq \inf_{\epsilon \geq 0} \left\{\frac{\epsilon}{2} + \frac{12LB}{\sqrt{m}} \sqrt{\log N(\Psi_{L,a,n} \circ \W_{B,n},d_m,\frac{\epsilon}{8})}  \right\} \leq \nonumber \\
& \inf_{\epsilon \in [0,1]} \left\{\frac{\epsilon}{2} + \frac{12LB}{\sqrt{m}} \sqrt{\left(\frac{LB}{\epsilon}\right)^{\frac{cL^2B^2}{\epsilon^2} }}  \right\}. 
\end{align}

 Moreover, there exists a universal constant $c'>0$, that for any $\epsilon \in [0,1]$, if $m\geq \left(\frac{LB}{\epsilon}\right)^{\frac{c'L^2B^2}{\epsilon^2}}$, then
 \begin{align*}
    \frac{12LB}{\sqrt{m}} \sqrt{\left(\frac{LB}{\epsilon}\right)^{\frac{cL^2B^2}{\epsilon^2} }} \leq \frac{\epsilon}{2}. 
 \end{align*}

 Combining with Eq. \ref{eq-rademacher-of-compsotion-lipchitz-with-low-dim-linear-predictor},
 the Rademacher complexity of $\Psi_{L,a,n} \circ \W_{B,n}$ on $m$ inputs from $\{\bx\in \R^d: \norm{\bx}\leq 1\}$ is at most $\epsilon$, if 
$m \geq \left(\frac{LB}{\epsilon}\right)^{\frac{c'L^2B^2}{\epsilon^2}} $. 
\end{proof}

\subsection{Proof of Theorem \ref{thm-lower-bound-initialization-dependent}}

Let $\be_i^d$ denote the indicator vector in $\R^d$ with value one in the $i$-th coordinate, and value zero in the other coordinates. If $d$ is clear from the context, we just use $\be_i$ for simplicity. 
Let $d = m+1$ and $n = 2m$. Let $\bx_1,\dots,\bx_m \in \{0,1\}^{d}$ be defined by
\begin{align*}
    \bx_i = \frac{1}{\sqrt{2}}\cdot (\be_i^{d}+\be_{m+1}^{d}). 
\end{align*}
Let $\epsilon\in (0,\frac{1}{4}]$  and define
\begin{align*}
   W_0= 2\sqrt{2} \cdot \epsilon \cdot \left[
\begin{array}{cc}
\begin{array}{cc}
 I_{m\times m} &  0_{m \times 1}
\end{array} \\ 
0_{m \times (m+1)}
\end{array}\right] \in \R^{n \times d} .
\end{align*}
By observation \ref{net-for-R^d} (part $2$, with $B=1,\epsilon = 1/2$), there exists $\bz_1,\cdots, \bz_{2^m} \in \R^m$ such that $\norm{z_i} \leq 1,\norm{z_i-z_j} \geq 1/2$ for any $i,j \in [2^m], i \neq j$.
For any $\by \in \{\pm \epsilon\}^m$, we associate a different number from $[2^m]$ and we denote this number by $y$.
For any $y\in [2^m]$ define
\begin{align*}
    W_{y}' = \left[
\begin{array}{cc}
0_{n \times m} & 
\begin{array}{cc}
0_{m \times 1} \\
\bz_y
\end{array}
\end{array}\right] \in \R^{n \times d},
\end{align*}
where $\bz_y\in \{0,1\}^{m}$ is a column vector.
Note that 
\begin{align*}
    W_0\bx_i =  2\epsilon \cdot \left[ \begin{array}{cc}
        e_i^d \\
        0_{m \times 1} 
    \end{array} \right] \in R^n, \ \ W_{y}'\bx_i = \frac{1}{\sqrt{2}} \cdot \left[ \begin{array}{cc}
        0_{m \times 1} \\
        \bz_y 
    \end{array} \right] \in R^n.
\end{align*}
Let $W_y = W_0 + W_y'$ for each $y\in [2^m]$, then
\begin{align*}
  W_y\bx_i = \left[ \begin{array}{cc}
      2\epsilon \cdot e_i^d \\
        (1/\sqrt{2})\bz_y 
    \end{array} \right] 
\end{align*}
for all $i\in [m]$. 
Thus, 
\begin{align*}
    ||W_y\bx_i- W_t\bx_j|| = \norm{\left[ \begin{array}{cc}
      2\epsilon (e_i^d - e_j^d) \\
        (1/\sqrt{2})(\bz_y - \bz_t) 
    \end{array} \right] } \geq 2\epsilon
\end{align*}
for each $i,j\in [m]$ and $y,t\in[2^m]$ s.t. $(y,i) \neq (t,j)$. Note that the last inequality holds since $\epsilon < 1/4$.
Apply Lemma \ref{lipchitz-function-on-point-far-apart} with the Euclidean metric space, on the set $Q =\{W_y\bx_i : i \in [m], y \in [2^m]\}$ that contains $m2^m$ different elements, to get a $1$-Lipchitz function $f:\R^n\rightarrow \R$ such that 
\begin{align*}
   f(W_{y}\bx_i) = y_i \ \ \ \forall i \in [m], y\in [2^m],\by\in \{\pm \epsilon \}^m 
\end{align*}
Moreover, note that since $||W_{y}'||_F \leq 1$, we have
\begin{align*}||W_{y}-W_0||_F = ||W_{y}'||_F \leq 1, \ \ ||W_0|| = 2\sqrt{2} \cdot \epsilon \end{align*}
Namely, we have that the function $\bx \rightarrow f(W_{y}\bx)$ belongs to $\F_{B=1,n,d}^{f,W_0}$ with $\norm{W_0} = 2\sqrt{2} \cdot \epsilon$ for each $y \in[2^m]$. Therefore, $\F_{B=1,n,d}^{f,W_0}$ can shatter $m$ points from $\{\bx\in \R^{d}: ||\bx||\leq 1\}$ with margin $\epsilon$.

\subsection{Proof of Theorem \ref{thm-sgd}}
This theorem is an adaption of Corollary 14.12 (page 197) in \cite{DBLP:books/daglib/shalev-shwartz2014}.
The stochastic gradient descent (SGD) algorithm, with projection step and initialization at $W_0$, is describes as Algorithm \ref{alg:sgd} above. Observe that we just describe plain SGD on a convex Lipschitz stochastic optimization problem (over matrices, which is a vector space). The only nonstandard thing is the initialization at $W_0$ and the projection around $W_0$ instead of $\mathbf{0}$. We state the key technical result as Lemma \ref{lemma-gd} (in turn an adaptation of Lemma 14.1 from \cite{DBLP:books/daglib/shalev-shwartz2014}), and sketch its proof for completeness.

\begin{algorithm}
\caption{Stochastic Gradient Descent (SGD), with projection step and initialization at $W_0$}\label{alg:sgd}
\begin{algorithmic}
\State \textbf{parameters:} Scalar $\eta > 0$, integer $T > 0$, vector $W_0$ 
\State \textbf{initialize:} $W^{(1)} = 0$
\For{$t = 1,2,...,T$}
\State Sample $(x_i,y_i)$
\State   pick a subgradient $V_t$ of $\ell(W_tx_i,y_i)$ w.r.t $W_t$
\State update $W^{(t+\frac{1}{2})} = W^{(t)} - \eta V_t$
\State update $W^{(t+1)} = \argmin_{W:\norm{w-w_0}_F \leq B} \norm{W-W^{(t+\frac{1}{2})} }_F$
\Comment{Projection step}
\EndFor
\State \textbf{output:} $\hat{W} = \frac{1}{T} \sum_{t=1}^{\top} W^{(t)}$
\end{algorithmic}
\end{algorithm}

\begin{lemma} \label{lemma-gd}
Let $V_1, \dots, V_T$ be an arbitrary sequence of matrices. Any algorithm with an initialization $W_1 = W_0$ and an update rule of the form
\begin{itemize}
    \item $W^{(t+\frac{1}{2})} = W^{(t)} - \eta V_t$
\item  $W^{(t+1)} = \argmin_{W:\norm{W-W_0}_F \leq B} \norm{W-W^{(t+\frac{1}{2})} }_F$
\end{itemize}
satisfies
\begin{align*}\sum_{t=1}^{\top} \langle W^{(t)} - W^{*},V_t \rangle \leq \frac{\norm{W^* - W_0}_F^2}{2\eta} + \frac{\eta}{2}\sum_{t=1}^{\top} \norm{V_t}_F^2~,\end{align*}
where $\langle U,V\rangle=\sum_{i,j}U_{i,j}V_{i,j}$. 
In particular, for every $B,L > 0$, if for all $t$ we have that $\norm{V_t}_F \leq L$ and if we set $\eta = \sqrt{\frac{B^2}{L^2T}}$,
then for every $W^*$ with $||W^*-W_0||_F \leq B$ we have
\begin{align*}\sum_{t=1}^{\top} \langle W^{(t)} - W^{*},V_t \rangle \leq \frac{BL}{\sqrt{T}} \end{align*}
\end{lemma}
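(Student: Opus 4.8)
The plan is to run the standard telescoping potential-function argument for projected (sub)gradient descent, with the only twist being that the reference point of the projection is $W_0$ rather than the origin. Throughout I use that $\langle U,V\rangle=\sum_{i,j}U_{i,j}V_{i,j}$ is an inner product on matrices whose induced norm is exactly $\|\cdot\|_F$, so that all the usual Euclidean (Hilbert-space) identities and inequalities apply verbatim.

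First I would expand the gradient step using this inner product: $\|W^{(t+1/2)}-W^*\|_F^2=\|W^{(t)}-W^*\|_F^2-2\eta\langle W^{(t)}-W^*,V_t\rangle+\eta^2\|V_t\|_F^2$. Next I would invoke nonexpansiveness of the Euclidean projection onto the convex set $K=\{W:\|W-W_0\|_F\le B\}$, together with the fact that $W^*\in K$ --- this is exactly where the hypothesis $\|W^*-W_0\|_F\le B$ is used --- to conclude $\|W^{(t+1)}-W^*\|_F\le\|W^{(t+1/2)}-W^*\|_F$. Combining the two gives $\|W^{(t+1)}-W^*\|_F^2\le\|W^{(t)}-W^*\|_F^2-2\eta\langle W^{(t)}-W^*,V_t\rangle+\eta^2\|V_t\|_F^2$, which I rearrange into $\langle W^{(t)}-W^*,V_t\rangle\le\frac{1}{2\eta}\bigl(\|W^{(t)}-W^*\|_F^2-\|W^{(t+1)}-W^*\|_F^2\bigr)+\frac{\eta}{2}\|V_t\|_F^2$.

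Then I would sum this over $t=1,\dots,T$: the squared-distance terms telescope, and using $W^{(1)}=W_0$ and $\|W^{(T+1)}-W^*\|_F^2\ge 0$ I obtain $\sum_{t=1}^{T}\langle W^{(t)}-W^*,V_t\rangle\le\frac{\|W^*-W_0\|_F^2}{2\eta}+\frac{\eta}{2}\sum_{t=1}^{T}\|V_t\|_F^2$, which is the first displayed inequality. For the ``in particular'' clause I would substitute $\|W^*-W_0\|_F\le B$ and $\|V_t\|_F\le L$ to bound the right-hand side by $\frac{B^2}{2\eta}+\frac{\eta L^2 T}{2}$, and then plug in $\eta=\sqrt{B^2/(L^2T)}$, which balances the two terms and yields the claimed estimate (namely $BL\sqrt{T}$ for the regret sum, which is what Theorem \ref{thm-sgd} then divides by $T$).

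There is essentially no obstacle: the argument is classical --- it is Lemma 14.1 of \cite{DBLP:books/daglib/shalev-shwartz2014} with the origin shifted to $W_0$ --- and the shift is harmless precisely because projection onto a Frobenius ball centered at $W_0$ is still nonexpansive and still fixes every $W^*$ in that ball. The only points deserving a word of care are (i) checking that the matrix inner-product/Frobenius-norm pair really does behave like a finite-dimensional Euclidean space, so the Pythagorean-type manipulations are legitimate, and (ii) aligning conventions with Algorithm \ref{alg:sgd}, whose initialization step should be read as $W^{(1)}=W_0$ for consistency with the lemma.
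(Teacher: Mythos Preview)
Your proposal is correct and is exactly the approach the paper takes: the paper's own ``proof'' is just a sketch that defers to Lemma~14.1 of \cite{DBLP:books/daglib/shalev-shwartz2014}, notes that the projection step preserves the inequality and that $W^{(1)}=W_0$, and then bounds $\|W^*-W_0\|_F\le B$, $\|V_t\|_F\le L$ and plugs in $\eta$. You have simply written out that telescoping/nonexpansiveness argument in full, and you also correctly observe that the raw regret sum equals $BL\sqrt{T}$ (the displayed $BL/\sqrt{T}$ in the lemma is obtained after dividing by $T$, as the paper's sketch itself says).
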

\textbf{Proof sketch of Lemma \ref{lemma-gd}:} 
Lemma \ref{lemma-gd} is different than Lemma 14.1 in \cite{DBLP:books/daglib/shalev-shwartz2014} in two aspects: first, we add a projection step, 
but still Lemma 14.1 holds (see Subsection 14.4.1 from \cite{DBLP:books/daglib/shalev-shwartz2014}). Second, the initialization is at $W_0$ and not $0$, but this is also not a problem since that at the end of the proof of Lemma 14.1, \cite{DBLP:books/daglib/shalev-shwartz2014} showed that
\begin{align*}\sum_{t=1}^{\top} \langle W^{(t)} - W^{*},V_t \rangle \leq \frac{1}{2\eta}||W^{(1)} - W^*||_F^2 + \frac{\eta}{2}\sum_{t=1}^{\top} ||V_t||_F^2 .\end{align*}
In our case $W^{(1)} = W_0$, this proves the first part of the Lemma. The second part follows by upper bounding
$\norm{W_0 - W^*}_F$ by B, $\norm{V_t}_F$ by $L$ which is true since f is \lips, dividing by T, and plugging in the value of $\eta$. \\ \\
With Lemma \ref{lemma-gd} in hand, the rest
of the analysis follows directly as in Corollary 14.12 \cite{DBLP:books/daglib/shalev-shwartz2014}, only instead of Lemma 14.1 from that book, we use Lemma \ref{lemma-gd}.

\subsection{Proof of Theorem \ref{thm-lower-bound-initialization-dependent-convex}}
\begin{observation} \label{obs-max-of-lipch-is-lipch}
    Let $f_1,...,f_k$ be \lips functions from $\R^d$ to $\R$ with respect to the $L_p$ norm (with $p\in [1,\infty]$), then
    \begin{align*}f(\bx) := \max_{1\leq i \leq k} f_i(\bx) \end{align*}
    is also an \lips function with respect to the $L_p$ norm.
\end{observation}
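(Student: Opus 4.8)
The plan is to verify the Lipschitz inequality directly from the definition. Fix two points $\bx,\by\in\R^d$; I want to show $|f(\bx)-f(\by)|\leq L\,\norm{\bx-\by}_p$, where $f=\max_{1\leq i\leq k} f_i$. By symmetry in $\bx,\by$, it suffices to bound $f(\bx)-f(\by)$ from above (the other direction follows by swapping the roles of $\bx$ and $\by$), so I would assume without loss of generality that $f(\bx)\geq f(\by)$.

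Next, I would introduce an index $i^\ast\in[k]$ achieving the maximum at $\bx$, i.e.\ $f(\bx)=f_{i^\ast}(\bx)$ (such an index exists since the max is over finitely many values). The key two-step chain is then
\begin{align*}
f(\bx)-f(\by) \;=\; f_{i^\ast}(\bx)-f(\by) \;\leq\; f_{i^\ast}(\bx)-f_{i^\ast}(\by) \;\leq\; L\,\norm{\bx-\by}_p~,
\end{align*}
where the first inequality uses that $f(\by)=\max_i f_i(\by)\geq f_{i^\ast}(\by)$, and the second uses that $f_{i^\ast}$ is itself $L$-Lipschitz with respect to $\norm{\cdot}_p$. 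Combining this with the symmetric bound gives $|f(\bx)-f(\by)|\leq L\,\norm{\bx-\by}_p$, which is exactly the claim.

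There is no real obstacle here: the only point requiring a moment's care is that the maximizing index $i^\ast$ must be chosen at the point $\bx$ where we are ``starting'' the estimate, and then the inequality $f(\by)\geq f_{i^\ast}(\by)$ is what lets us pass from $f$ to the single Lipschitz function $f_{i^\ast}$. The argument is insensitive to the choice of norm $p\in[1,\infty]$, since it only ever invokes the Lipschitz property of the individual $f_i$ with respect to that same norm, so nothing special about $L_p$ norms is used.
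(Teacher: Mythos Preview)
Your proof is correct and is essentially the same argument as the paper's. The paper first isolates the intermediate inequality $|\max_i a_i - \max_i b_i| \leq \max_i |a_i - b_i|$ (proving it by the very same index-picking trick you use) and then applies it with $a_i=f_i(\bx)$, $b_i=f_i(\by)$; you simply merge these two steps into one.
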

\begin{proof}
   First, we show a known property about $L_{\infty}$. If $\bx,\mathbf{y} \in \R^k$, then 
   \begin{align}\label{eq-l-infinity-property}
       |\max_{1\leq i \leq k} x_i - \max_{1\leq i \leq k} y_i | \leq \max_{1\leq i \leq k} |x_i-y_i| 
   \end{align}
Indeed, assume that $\max_i x_i > \max_i y_i$. In this case, we have 
\begin{align*}|\max_i x_i - \max_i y_i|=\max_i x_i - \max_i y_i\end{align*} 
let us denote by $i_0$ the index $i_0\in [k] $ such that $x_{i_0}= \max_i x_i$. Then we have
\begin{align*}\max_i x_i - \max_i y_i= x_{i_0} - \max_i y_i \leq x_{i_0} - y_{i_0}\leq \max_i (x_i - y_i) \leq \max_i |x_i - y_i|.
\end{align*}
The case $\max_i y_i\geq \max_i x_i$ is symmetric.
By Eq. \ref{eq-l-infinity-property} and the assumption that $f_i$ is \lips for each $i\in [m]$ we get
\begin{align*}|f(\bx)-f(\mathbf{y})| = |\max_i f_i(\bx) - \max_i f_i(\mathbf{y}) | \leq \max_i |f_i(\bx)-f_i(\mathbf{y})| \leq L||\bx-\mathbf{y}||,\end{align*}
from which the result follows.
\end{proof}
\begin{proof}[Proof of Theorem \ref{thm-lower-bound-initialization-dependent-convex}]
Let $\be_i^d$ denote the indicator vector in $\R^d$ with value one in the $i$-th coordinate, and value zero in the other coordinates. If $d$ is clear from the context, we just use $\be_i$ for simplicity. 
Let $d = m+1$ and $n = 2^m+m$. Let $\bx_1,\dots,\bx_m \in \{0,1\}^{d}$ be defined by $\bx_i = \frac{1}{\sqrt{2}}(\be_i^{d}+\be_{m+1}^{d})$. 
Define
\begin{align*}
   W_0= 4\sqrt{2} \cdot \epsilon \cdot \left[
\begin{array}{cc}
\begin{array}{cc}
 I_{m\times m} &  0_{m \times 1}
\end{array} \\ 
0_{2^m \times (m+1)}
\end{array}\right] \in \R^{n \times d} 
\end{align*}

For any $\by \in \{\pm \epsilon\}^m$, we associate a different number from $[2^m]$, and we denote this number by $y$.
For any $y\in [2^m]$ define
\begin{align*}
    W_{y}' = \left[
\begin{array}{cc}
0_{n \times m} & 
\begin{array}{cc}
0_{m \times 1} \\
\be_y
\end{array}
\end{array}\right]
\end{align*}
where $\be_y\in \{0,1\}^{2^m}$ is a column vector. Note that $W_0\bx_i = 4\epsilon \cdot \be_i^{n}$ and $W_{y}'\bx_i = \frac{1}{\sqrt{2}}\be_{y+m}^{n}$.
Letting $W_y = W_0 + W_y'$, we have that
\begin{align*} W_y\bx_i = 4\epsilon \be_i^n+\frac{1}{\sqrt{2}}\be_{m+y}^n.\end{align*}
Now we are ready to define $f:\R^n \rightarrow \R$,
\begin{align*} f(\bx) := \max_{j\in [m],z\in [2^m]  \text{ s.t.} \bz_j=\epsilon} \left[(0.5 \cdot \be_j^n+0.5  \cdot \be_{m+z}^n)^{\top}\bx,\frac{1}{\sqrt{8}}\right] - (\frac{1}{\sqrt{8}}+\epsilon),\end{align*}
for any $j\in [m], z\in [2^m]$ and $\bx \in \R^n$. We emphasize that $\bz\in \{\pm \epsilon\}^m$ is the vector that is associated with the number $z\in [2^m]$. 
By  H\"{o}lder's inequality, we have that
\begin{align*} |(0.5 \cdot \be_j^n+0.5  \cdot \be_{m+z}^n)^{\top}(\bx-\by)| \leq ||0.5 \cdot \be_j^n+0.5  \cdot \be_{m+z}^n ||_1\cdot||\bx-\by||_{\infty}\leq ||\bx-\by||_{\infty}  ,\end{align*}
for any $\bx,\by\in \R^n$. Therefore, the function $\bx \rightarrow (0.5 \cdot \be_j^n+0.5 \cdot \be_{m+z}^n)^{\top}\bx$ is \onelips with respect to the infinity norm for each $j\in [m], z\in [2^m]$. This implies by Observation \ref{obs-max-of-lipch-is-lipch} that $f$ is also a $1$-Lipchitz function with respect to the infinity norm.  
Since the composition of an affine map, nonnegative weighted sums, maximum, and adding a constant are all operations that preserve convexity, we have that $f$ is a convex function.
Finally, for any $\by \in \{\pm \epsilon\}^m$ and $\bx_i$ we have
\begin{itemize}
    \item If $y_i = \epsilon$, then
    \begin{align*}
  f(W_y\bx_i) &= f(4\epsilon \be_i^n+(1/\sqrt{2})\be_{m+y}^n ) \\
  &= \max_{j\in [m],z\in [2^m]  \text{ s.t.} \bz_j=\epsilon} 
 (0.5\be_j^n+0.5\be_{m+z}^n)^{\top}(4\epsilon \be_i^n+\be_{y+m}^n) - (1/\sqrt{8}+\epsilon) \\
 &= (0.5\be_i^n+0.5\be_{m+y}^n)^{\top}(4\epsilon \be_i^n+(1/\sqrt{2})\be_{m+y}^n)- (1/\sqrt{8}+\epsilon) =\epsilon. 
\end{align*}
\item If $y_i = -\epsilon$ and exists $k \in [m]$ with $y_k = \epsilon$, then
   \begin{align*}
  f(W_y\bx_i) &= f(4\epsilon \be_i^n+(1/\sqrt{2})\be_{m+y}^n ) \\ 
  &=  \max_{j\in [m],z\in [2^m]  \text{ s.t.} \bz_j=\epsilon} 
 (0.5\be_j^n+0.5\be_{m+z}^n)^{\top}(4\epsilon \be_i^n+(1/\sqrt{2})\be_{m+y}^n) - (1/\sqrt{8}+\epsilon) \\
 &=(0.5\be_k^n+0.5\be_{m+y}^n)^{\top}(4\epsilon \be_i^n+(1/\sqrt{2})\be_{m+y}^n) - (1/\sqrt{8}+\epsilon) =  -\epsilon, 
\end{align*}
where in this case, the max is obtained for some $k\in [m]$ with $\by_k = \epsilon$.
\item Otherwise, for every $k$ we have $y_k=-\epsilon$, then 
\begin{align*}f(W_y\bx_i) = f(4\epsilon \be_i^n+(1/\sqrt{2})\be_{m+y}^n) = 1/\sqrt{8} - (1/\sqrt{8} + \epsilon) = -\epsilon. \end{align*}
\end{itemize}
In all cases, we get that
\begin{align*} f(W_y\bx_i) = y_i.\end{align*}
Therefore, $\F_{B=1,n,d}^{f,W_0}$ with $\norm{W_0} = \sqrt{2} \cdot 4\epsilon$ can shatter $m$ points from $\{\bx\in \R^{d}: ||\bx||\leq 1\}$ with margin $\epsilon$. 
\end{proof}

\subsection{Proof of Theorem \ref{thm-smooth-activation}}
\begin{lemma} \label{psi-is-lipchitz}
Let $\sigma: \R \rightarrow \R$ be L-Lipschitz and $\mu$-smooth.
Let $\psi: I \rightarrow R$ (when $I := (I_1,I_2,I_3)\subseteq \R^3$) be the function $(x,y,v) \rightarrow \frac{\sigma(vx+y)-\sigma(y)}{v}$. If $I_1 = [-b_x,b_x]$ and $I_3 = (0,\infty)$, then $\psi$ is $O(\mu b_x^2+L)$-Lipschitz.
\end{lemma}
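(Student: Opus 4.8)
The plan is to eliminate the apparent singularity of $\psi$ at $v\to 0$ by rewriting it as an integral, and then estimate increments coordinate by coordinate. Since $\sigma$ is $\mu$-smooth it is continuously differentiable, so for $v>0$ applying the fundamental theorem of calculus to $s\mapsto\sigma(vs+y)$ and substituting $t=vs$ gives
\[
\psi(x,y,v)=\frac{\sigma(vx+y)-\sigma(y)}{v}=\int_0^x \sigma'(vs+y)\,ds .
\]
This is the crucial step: the right-hand side is manifestly well-behaved in $(x,y,v)$ with no division, so the $1/v^2$ blow-up one would get by differentiating the quotient directly simply does not appear. The only two analytic facts I will use are $|\sigma'|\le L$ everywhere (from $L$-Lipschitzness of $\sigma$) and that $\sigma'$ is $\mu$-Lipschitz (from $\mu$-smoothness); in particular I never need $\sigma$ to be twice differentiable.

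Next I would bound the variation of $\psi$ in each argument, using that $I_1=[-b_x,b_x]$, $I_2$ and $I_3=(0,\infty)$ are intervals so the segments of integration below stay in the domain. For the $x$-coordinate, $\psi(x_1,y,v)-\psi(x_2,y,v)=\int_{x_2}^{x_1}\sigma'(vs+y)\,ds$ has absolute value at most $L\lvert x_1-x_2\rvert$. For $y$, $\psi(x,y_1,v)-\psi(x,y_2,v)=\int_0^x\bigl[\sigma'(vs+y_1)-\sigma'(vs+y_2)\bigr]\,ds$, and $\mu$-Lipschitzness of $\sigma'$ bounds the integrand by $\mu\lvert y_1-y_2\rvert$, giving $\mu\lvert x\rvert\,\lvert y_1-y_2\rvert\le \mu b_x\lvert y_1-y_2\rvert$. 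For $v$, $\psi(x,y,v_1)-\psi(x,y,v_2)=\int_0^x\bigl[\sigma'(v_1 s+y)-\sigma'(v_2 s+y)\bigr]\,ds$, and $\lvert\sigma'(v_1 s+y)-\sigma'(v_2 s+y)\rvert\le \mu\lvert s\rvert\,\lvert v_1-v_2\rvert$, so this is at most $\mu\lvert v_1-v_2\rvert\int_0^{\lvert x\rvert}\lvert s\rvert\,ds=\tfrac12\mu x^2\lvert v_1-v_2\rvert\le \tfrac12\mu b_x^2\,\lvert v_1-v_2\rvert$.

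Finally I would telescope these three one-coordinate-at-a-time bounds via the triangle inequality to obtain
\[
\lvert\psi(x_1,y_1,v_1)-\psi(x_2,y_2,v_2)\rvert\;\le\; L\lvert x_1-x_2\rvert+\mu b_x\lvert y_1-y_2\rvert+\tfrac12\mu b_x^2\lvert v_1-v_2\rvert,
\]
which by Cauchy--Schwarz is at most $\sqrt{L^2+\mu^2 b_x^2+\tfrac14\mu^2 b_x^4}\cdot\norm{(x_1,y_1,v_1)-(x_2,y_2,v_2)}$, i.e.\ $\psi$ is $O(L+\mu b_x+\mu b_x^2)=O(\mu b_x^2+L)$-Lipschitz (the middle term being lower order, e.g.\ $\mu b_x\le\mu b_x^2$ once $b_x\ge1$). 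I do not anticipate any genuine difficulty; the only points requiring care are the ones already flagged: one must pass to the integral representation rather than differentiate the quotient, and one must phrase the $y$- and $v$-estimates through increments of $\sigma'$ rather than through $\sigma''$, since $\sigma$ is assumed only $C^1$.
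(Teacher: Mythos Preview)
Your proof is correct and yields the same coordinate-wise Lipschitz constants as the paper ($L$ in $x$, $\mu b_x$ in $y$, $\tfrac12\mu b_x^2$ in $v$), but the route is genuinely different. The paper works by bounding $\lvert\nabla\psi\rvert$: it computes the three partial derivatives of the quotient directly, and for $\partial\psi/\partial v$ this produces the expression $\frac{vx\sigma'(vx+y)-(\sigma(vx+y)-\sigma(y))}{v^2}$, which the authors then recognise as a first-order Taylor remainder and bound via a separate observation (their Observation~\ref{obs-bound-remainder-taylor-smothness}, itself proved by an integral argument). Your integral representation $\psi(x,y,v)=\int_0^x\sigma'(vs+y)\,ds$ short-circuits all of this: it removes the division by $v$ at the outset, so the $v$-increment is handled by the same one-line estimate as the $y$-increment, and no auxiliary Taylor-remainder lemma is needed. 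The paper's approach is perhaps more ``automatic'' (just differentiate), while yours is more self-contained and makes transparent why no $C^2$ assumption on $\sigma$ is required.
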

\begin{proof}
We show that $\psi$ is Lipschitz in each coordinate, which implies that $\psi$ is Lipschitz. To that end, it is enough to upper bound the norm of the gradient:
\begin{align*} \norm{\nabla \psi} = \sqrt{\left(\frac{d\psi}{dx}\right)^2 + \left(\frac{d\psi}{dy}\right)^2 +
\left(\frac{d\psi}{dv}\right)^2} \leq O\left(\left|\frac{d\psi}{dx}\right|+\left|\frac{d\psi}{dx}\right|+\left|\frac{d\psi}{dx}\right|\right).\end{align*}
Indeed,
\begin{align*}\left|\frac{d\psi}{dx}\right| = \left|\frac{v\sigma'(vx+y)}{v}\right| \leq L.\end{align*}
Since $\sigma(\cdot)$ is $\mu$-smooth, namely $\sigma'(\cdot)$ is $\mu$-Lipschitz we have
\begin{align*} \left|\frac{d\psi}{dy}\right| = \left|\frac{\sigma'(vx+y)-\sigma'(y)}{v} \right| \leq \left|\mu\frac{vx}{v} \right|=\mu\left|x \right| \leq \mu b_x. \end{align*}
Moreover, 
\begin{align*}\left|\frac{d\psi}{dv}\right| = \left|\frac{vx\sigma'(vx+y)-(\sigma(vx+y)-\sigma(y))}{v^2} \right|= \left|\frac{\sigma(y)-(\sigma(vx+y) + \sigma'(vx+y)(-vx) )}{v^2} \right| . \end{align*}
Note that $\sigma(y)-(\sigma(vx+y) + \sigma'(vx+y)(-vx)$ is exactly the reminder between $\sigma(y)$ and the first-order Taylor expansion of $\sigma(y)$ at $vx+y$. In Observation \ref{obs-bound-remainder-taylor-smothness} we analyzed such reminder of a smooth function and thus we can upper bound the above equation by
\begin{align*}  \frac{\mu}{2}\frac{(vx)^2}{v^2} \leq \frac{\mu}{2}b_x^2~,\end{align*}
where c is a middle point between $y$ and $vx+y$. Therefore, $\psi$ is a $O(\mu b_x^2+L)$-Lipchitz function, as required.
\end{proof}
\begin{observation} \label{obs-bound-remainder-taylor-smothness}
   Let $\sigma: \R \rightarrow \R$ be a continuously differentiable function. For all $x,y\in \R$, the first-order Taylor expansion of $\sigma(y)$ at $x$ is define by $\sigma(x) + \sigma'(x)(y-x)$, and the reminder $R_x(y)$ is define by
   \begin{align*}\sigma(y) = \sigma(x) + \sigma'(x)(y-x) + R_x(y).\end{align*}
   If $\sigma(\cdot)$ is a $\mu$-smooth i.e. $\sigma'(\cdot)$ is an \lips function, then 
   \begin{align*}|R_x(y)| \leq \frac{\mu}{2}|y-x|^2\end{align*}
\end{observation}
\begin{proof}
    Since $\sigma'(\cdot)$ is continuous, we have that
    \begin{align*}\int_{0}^{1} \left(\sigma'(x+t(y-x))-\sigma'(x) \right)(y-x)dt = \sigma(y) -\sigma(x) - \sigma'(x)(y-x), \end{align*}
    then we can write
    \begin{align*}\sigma(y) = \sigma(x) + \sigma'(x)(y-x) + \int_{0}^{1} \left(\sigma'(x+t(y-x))-\sigma'(x) \right)(y-x)dt. 
    \end{align*}
    Since $\sigma(\cdot)$ is a $\mu$-smooth
    \begin{align*}
        &|R_x(y)| = \left| \int_{0}^{1} \left(\sigma'(x+t(y-x))-\sigma'(x) \right)(y-x)dt\right| \leq \\
        &\int_{0}^{1}\left| \left(\sigma'(x+t(y-x))-\sigma'(x) \right)(y-x)\right|dt \leq \mu|y-x|^2 \int_{0}^{1} t dt = \frac{\mu}{2}|y-x|^2
    \end{align*}

\end{proof}
\begin{lemma} \label{covering-contraction}
Let $\psi: \R^k \rightarrow R$ be an \lips function. Namely for all $\alpha,\beta \in \R^k$ we have $|\psi(\alpha)- \psi(\beta)| \leq L||\alpha - \beta||$. For $f_1,...,f_k$ functions from $\R^d$ to $\R$ and $\bx \in \R^d$, let $\psi \circ (f_1,...,f_k)(\bx) := \psi \left(f_1(\bx),...,f_k(\bx) \right)$. For $\F_1,\dots,\F_k$ class of functions from $\R^d$ to $\R$, let
\begin{align*}\psi \circ (F_1,...,F_k) = \{\bx \rightarrow \psi \circ (f_1,...,f_k)(\bx)  : f_1 \in \F_1 \land ... \land f_k\in \F_k \} .\end{align*} Then,
\begin{align*}\log N(\psi\circ (F_1,...,F_k),d_m,\sqrt{k}Lr) \leq \log N(F_1,d_m,r) + ... + \log N(F_k,d_m,r)\end{align*}
\end{lemma}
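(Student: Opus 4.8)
The plan is to build an explicit cover of $\psi\circ(\F_1,\dots,\F_k)$ by taking a product of minimal-size covers of the individual classes $\F_1,\dots,\F_k$ at scale $r$, and then to verify that pushing each tuple through $\psi$ only inflates the empirical $L_2$ distance by the factor $\sqrt{k}L$. Concretely, for each $j\in[k]$ fix a minimal $r$-cover $N_j$ of $\F_j$ with respect to $d_m$, so $|N_j| = N(\F_j,d_m,r)$, and define
\begin{align*}
\mathcal{C} := \bigl\{\,\bx\mapsto\psi\bigl(g_1(\bx),\dots,g_k(\bx)\bigr) \;:\; g_1\in N_1,\ \dots,\ g_k\in N_k\,\bigr\}.
\end{align*}
The cardinality bound is immediate: $|\mathcal{C}|\le\prod_{j=1}^k|N_j|$, hence $\log|\mathcal{C}|\le\sum_{j=1}^k\log N(\F_j,d_m,r)$, which is exactly the right-hand side of the claimed inequality.

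It then remains to check that $\mathcal{C}$ is a $\sqrt{k}Lr$-cover of $\psi\circ(\F_1,\dots,\F_k)$ in the $d_m$ metric. Given any $\psi\circ(f_1,\dots,f_k)$ with $f_j\in\F_j$, choose $g_j\in N_j$ with $d_m(f_j,g_j)\le r$. The key step is a pointwise estimate using the Lipschitz property of $\psi$: for each sample point $\bx_i$,
\begin{align*}
\bigl|\psi(f_1(\bx_i),\dots,f_k(\bx_i)) - \psi(g_1(\bx_i),\dots,g_k(\bx_i))\bigr| \;\le\; L\sqrt{\sum_{j=1}^k\bigl(f_j(\bx_i)-g_j(\bx_i)\bigr)^2}.
\end{align*}
Squaring, averaging over $i\in[m]$, and swapping the order of the two finite sums converts this into $L^2\sum_{j=1}^k d_m(f_j,g_j)^2 \le L^2\sum_{j=1}^k r^2 = kL^2r^2$, so $d_m\bigl(\psi\circ(f_1,\dots,f_k),\,\psi\circ(g_1,\dots,g_k)\bigr)\le\sqrt{k}Lr$, as needed.

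There is no real obstacle here; the only point that needs the tiniest bit of care is the passage from the coordinatewise Lipschitz bound on $\R^k$ (with the Euclidean norm) to the averaged $L_2$ bound, i.e. making sure the $\sqrt{k}$ comes out of combining the $k$ coordinates under the Euclidean norm rather than, say, the $\ell_1$ norm — using Euclidean $d_m$ on each $\F_j$ is precisely what keeps the loss at $\sqrt{k}$ and not $k$. Everything else is bookkeeping: the product-cover construction, the cardinality count, and the triangle-free (direct) distance computation via Fubini on the double sum.
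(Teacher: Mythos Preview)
Your proposal is correct and follows essentially the same approach as the paper: take a product of $r$-covers of the individual classes, push through $\psi$, and use the Lipschitz property pointwise followed by averaging and swapping sums to obtain the $\sqrt{k}Lr$ bound. The argument and even the intermediate computations match the paper's proof almost line for line.
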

\begin{proof}
Define $B = \psi\circ (\F_1,...,\F_k)$. Let $\F_1',...,\F_k'$ be an $r$-covers of $\F_1,...,\F_k$ respectively. Define $B' = \psi \circ(\F_1',...,\F_m')$. For all $f_1\in \F_1,...,f_k \in \F_k$, there exists $f_1'\in \F_1',...,f_k' \in \F_k'$ s.t.
\begin{align*} d_m(f_i,f_i') \leq r\end{align*}
For $i = 1,...,k$. Therefore,
\begin{align*}
    d_m(\psi(f_1,...,f_k) ,\psi(f_1',...,f_k')) &= \frac{1}{m} \summ \left(\psi \circ (f_1,...,f_k)(\bx_i) - \psi \circ (f_1',...,f_k')(\bx_i) \right)^2 =  \\ 
&\leq \frac{L^2}{m} \summ \norm{\left(f_1(\bx_i),...,f_k(\bx_i)\right)^{\top} - \left(f_1'(\bx_i),...,f_k'(\bx_i)\right)^{\top}}^2 \\
&= \frac{L^2}{m} \summ \left(f_1(\bx_i)-f_1'(\bx_i)\right)^2 + ...+ \frac{L^2}{m} \summ \left(f_k(\bx_i)-f_k'(\bx_i)\right)^2 \\
&= L^2d_m(f_1,f_1')^2 + \dots + L^2d_m(f_k,f_k')^2 
\leq kL^2r^2 .
\end{align*}
Hence, $B'$ is an $(\sqrt{k}Lr)-cover$ for $B$. Moreover, since $|B'| \leq |\F_1'| \cdot ... \cdot |\F_k'|$, we have
\begin{align*} N(\psi\circ (\F_1,...,\F_k),d_m,\sqrt{k}Lr) \leq  N(\F_1,d_m,r) \cdot ... \cdot  N(\F_k,d_m,r)\end{align*}
\end{proof}

\begin{lemma}\label{covering-linear-predictors}
Let $\F = \{\bx \rightarrow \langle \bw ,\bx \rangle : ||\bw|| \leq B\}$ be the class of Euclidean norm-bounded linear predictors. Let $L > 0$ and $k > 0$ be some parameters, then for every $\epsilon \in (0,L]$,
\begin{enumerate}
       \item  $ \sqrt{ \log N(\F,d_m,\epsilon )} \leq \frac{cBb_\bx}{\epsilon} $
       \item $\int_{\epsilon}^L \sqrt{\log N(\F,d_m,k\tau)}d\tau \leq \frac{cBb_x}{k}\left(\log (L)-\log (\epsilon)\right) $ 
 \end{enumerate}
 For some universal constant $c>0$.

\end{lemma}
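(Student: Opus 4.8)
The plan is to reduce part~1 to the covering-number bound for unit-ball linear predictors already recorded as \lemref{lemma-covering-linear-predictors}, via a rescaling, and then to obtain part~2 by integrating the bound from part~1. For part~1, I would first note that since every input satisfies $\norm{\bx}\leq b_x$, writing $\bx = b_x\tilde{\bx}$ with $\norm{\tilde{\bx}}\leq 1$ and $\langle\bw,\bx\rangle = \langle b_x\bw,\tilde{\bx}\rangle$ identifies $\F$ (on the fixed sample $\bx_1,\dots,\bx_m$) with the class of linear predictors $\tilde{\bx}\mapsto\langle\bv,\tilde{\bx}\rangle$ with $\norm{\bv}\leq Bb_x$ over the unit ball. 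This relabeling leaves the empirical $L_2$ distance $d_m$ unchanged, so $N(\F,d_m,\epsilon)$ equals the covering number of the rescaled class at scale $\epsilon$. Applying \lemref{lemma-covering-linear-predictors} with norm bound $Bb_x$ then gives $\log N(\F,d_m,\epsilon)\leq c(Bb_x)^2/\epsilon^2$ for a universal constant $c$, and taking square roots yields $\sqrt{\log N(\F,d_m,\epsilon)}\leq \sqrt{c}\,Bb_x/\epsilon$, which is the first claim after renaming the constant. (Note this holds for all $\epsilon>0$, not only $\epsilon\in(0,L]$.)

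For part~2, I would apply the bound from part~1 at scale $k\tau$, which is legitimate since $k\tau>0$ for every $\tau\in[\epsilon,L]$, obtaining the pointwise estimate $\sqrt{\log N(\F,d_m,k\tau)}\leq cBb_x/(k\tau)$. Integrating over $\tau\in[\epsilon,L]$ gives
\[
\int_{\epsilon}^{L}\sqrt{\log N(\F,d_m,k\tau)}\,d\tau \;\leq\; \frac{cBb_x}{k}\int_{\epsilon}^{L}\frac{d\tau}{\tau} \;=\; \frac{cBb_x}{k}\bigl(\log(L)-\log(\epsilon)\bigr),
\]
which is exactly the stated bound.

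There is essentially no hard step here: the only point requiring a little care is checking that the rescaling $\bx\mapsto b_x\tilde{\bx}$, $\bw\mapsto b_x\bw$ genuinely preserves the empirical $L_2$ metric (it does, as it only rescales the fixed inputs and the weights in a matched way), and keeping the universal constant consistent between the two parts. Pulling the bound inside the integral is immediate because $1/\tau$ is integrable on $[\epsilon,L]$ for $\epsilon>0$, so no additional monotonicity argument is needed.
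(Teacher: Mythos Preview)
Your proposal is correct and essentially identical to the paper's proof: the paper cites Corollary~9 of \cite{DBLP:conf/nips/KakadeST08-covering-linear-predictors} directly for part~1 (which is the same content as \lemref{lemma-covering-linear-predictors} up to your rescaling), and then integrates the pointwise bound $cBb_x/(k\tau)$ over $[\epsilon,L]$ for part~2, exactly as you do.
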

\begin{proof}
The first part of the lemma is shown in  Corollary 9 in \cite{DBLP:conf/nips/KakadeST08-covering-linear-predictors}.
 For the second part,
 \begin{align*}\int_{\epsilon}^L \sqrt{N(\F,d_m,k\tau)}d\tau \leq \int_{\epsilon}^L \frac{cBb_x}{k\tau} d\tau   = \frac{cBb_x}{k}\left(\log (L)-\log (\epsilon)\right)\end{align*}
\end{proof}

\begin{lemma}\label{covering-constant-function}
Let $B \geq 2$ and let $F = \{ \bx \rightarrow v : v \in (0,B]\}$ be the class of constant functions. Let $L > 0$ and $k > 0$ be some parameters, then for every $\epsilon \in (0,L]$,
\begin{enumerate}
       \item  $ \log N(\F,d_m,\epsilon) \leq \frac{2\log_2(B)}{\epsilon} $
       \item $\int_{\epsilon}^L \sqrt{\log N(\F,d_m,k\tau)}d\tau \leq \frac{2\log_2(B)}{k}\left(\log (L)-\log (\epsilon) \right)$ 
 \end{enumerate}
\end{lemma}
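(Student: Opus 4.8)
The first thing I would record is the (trivial but essential) identification: restricted to $\F$, the empirical $L_2$ metric is just the absolute value on the parameter, since for constant functions $f\equiv v$ and $f'\equiv v'$ we have $d_m(f,f')=\sqrt{\frac1m\sum_i (v-v')^2}=|v-v'|$. Hence $N(\F,d_m,\delta)$ is exactly the $\delta$-covering number of the interval $(0,B]\subset\R$ under $|\cdot|$, and the whole lemma becomes a one-dimensional covering computation, completely parallel to (but simpler than) Lemma~\ref{covering-linear-predictors}.

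For part~1 I would split on the scale. If $\epsilon$ is large relative to $B$ (e.g. $\epsilon\geq B/2$), a single center covers $(0,B]$, so $N(\F,d_m,\epsilon)=1$ and $\log N=0\leq \frac{2\log_2 B}{\epsilon}$ since $B\geq 2$ forces $\log_2 B\geq 1>0$. Otherwise, placing centers at $\epsilon,3\epsilon,5\epsilon,\dots$ yields an $\epsilon$-cover of $(0,B]$ with at most $\lceil B/(2\epsilon)\rceil\leq B/\epsilon$ points, so $\log N(\F,d_m,\epsilon)\leq \log(B/\epsilon)$. The claim then reduces to the elementary inequality $\log(B/\epsilon)\leq \frac{2\log_2 B}{\epsilon}$: I would prove this by writing $\log(B/\epsilon)=\log B+\log(1/\epsilon)$, using $\log_2 B\geq 1$ to dominate the first summand by $\tfrac{\log_2 B}{\epsilon}$, and a bound of the form $\ln t\leq t-1\leq t$ (at $t=1/\epsilon$) together with the $\log_2$-vs-$\ln$ conversion to dominate the second summand by another term of the form $\tfrac{\log_2 B}{\epsilon}$.

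For part~2 I would first use monotonicity of covering numbers and the fact that once $k\tau\geq B/2$ a single constant covers $(0,B]$, so $\log N(\F,d_m,k\tau)=0$ there and the integrand vanishes — hence the integral really runs only over a bounded sub-interval. On that sub-interval I would invoke part~1 with scale $k\tau$ to get $\log N(\F,d_m,k\tau)\leq \frac{2\log_2 B}{k\tau}$, reduce $\sqrt{\log N}\leq \log N$ on the range where $\log N\geq 1$ (the complementary piece contributes a lower-order amount that is absorbed), and finally integrate $\int_\epsilon^L \frac{2\log_2 B}{k\tau}\,d\tau=\frac{2\log_2 B}{k}\big(\log L-\log\epsilon\big)$.

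I do not expect a substantive obstacle: the lemma is a volume/grid argument on an interval. The only genuinely delicate points are bookkeeping points — the elementary inequality $\log(B/\epsilon)\leq \frac{2\log_2 B}{\epsilon}$ in part~1 (and the analogous $\sqrt{x}\leq x$ reduction in part~2), which is precisely where the hypothesis $B\geq 2$ is used, and where one must keep the $\log_2 B$ factor rather than collapsing it to a bare constant, so that when this lemma is fed into the Dudley integral in the proof of Theorem~\ref{thm-smooth-activation} the advertised (poly)logarithmic-in-$B$ dependence comes out correctly.
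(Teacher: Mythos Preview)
Your proposal is correct and follows essentially the same approach as the paper: identify $d_m$ on constant functions with $|\cdot|$ on $(0,B]$, use a grid to get $\log N(\F,d_m,\epsilon)\leq\log(B/\epsilon)\leq \frac{2\log B}{\epsilon}$, and for part~2 plug this into the integral and compute $\int_\epsilon^L \frac{1}{\tau}\,d\tau=\log L-\log\epsilon$. If anything you are more careful than the paper, which simply asserts $\sqrt{\log N(\F,d_m,k\tau)}\leq \frac{2\log B}{k\tau}$ and integrates without isolating the $N=1$ range or the $\sqrt{x}\leq x$ reduction.
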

\begin{proof}
Using $n$ numbers we can represent every number in $[0,B]$ with an accuracy of $\epsilon=\frac{B}{n}$.
Thus, $N(\F,d_m,\frac{B}{n}) \leq n$, namely $\log N(\F,d_m,\epsilon) \leq  \log (\lceil\frac{B}{\epsilon}\rceil )\leq \frac{2\log(B)}{\epsilon} $ for each $0<\epsilon \leq 1$ and $B \geq 2$, which proves the first part of the lemma. 
Moreover,
 \begin{align*}\int_{\epsilon}^L \sqrt{\log N(\F,d_m,k\tau)}d\tau \leq \int_{\epsilon}^L \frac{2\log(B)}{k\tau} d\tau   = \frac{2\log(B)}{k}\left(\log (L)-\log (\epsilon)\right) \end{align*}
\end{proof}
\begin{proof}[Proof of Theorem \ref{thm-smooth-activation}]
Fix some set of inputs $\bx_1,...,\bx_m$ with norm at most $b_x$. The Rademacher complexity equals
\begin{align}\label{eq-upper-bound-rc-with-initialization-by-two-elements}
     &\E \sup_{||W||_F \leq B} \frac{1}{m} \summ \epsilon_i v^{\top}\sigma\left((W+w_0)\bx_i\right) \leq \frac{b}{m} \E \sup_{||W||_F \leq B} \norm{\summ \epsilon_i \sigma(W\bx_i+W_0\bx_i)} \nonumber \\
     &\leq 
     \frac{b}{m}\E \sup_W \norm{\summ \epsilon_i \sigma(W\bx_i+W_0\bx_i) -\summ \epsilon_i\sigma(W_0\bx_i)} + \frac{b}{m}\E  \norm{\summ \epsilon_i \sigma(W_0\bx_i)}. 
\end{align}
Let's start by upper bound the right-hand side of Equation \ref{eq-upper-bound-rc-with-initialization-by-two-elements}, namely
\begin{align*}\frac{b}{m}\E  \norm{\summ \epsilon_i \sigma(W_0\bx_i)}.\end{align*}
By definition of the spectral norm, we have that $\norm{W_0\bx_i} \leq B_0b_x$. Since $\sigma(\cdot)$ is \lips and $\sigma(0)=0$ we have that $\norm{\sigma(W_0\bx_i)} \leq LB_0b_x$. Let $y_i := \sigma(W_0\bx_i)$ where $\norm{y_i} \leq LB_0b_x$. Then the expression above equals
\begin{align}\label{eq-upper-bound-rc-just-the-initialization-term}
 &\frac{b}{m}\E  \norm{\summ \epsilon_i y_i} =  \frac{b}{m}\E  \sqrt{\norm{\summ \epsilon_i y_i}^2} \leq \frac{b}{m}  \sqrt{\E\norm{\summ \epsilon_i y_i}^2} = \sqrt{\sumn \E \left(\summ \epsilon_i y_{i,j}\right)^2} \nonumber \\
 &\stackrel{\E[\epsilon_i\epsilon_{i'}]=0}{=} \frac{b}{m}\sqrt{\sum_{i,j} y_{i,j}^2} = \frac{b}{m}\sqrt{\summ \norm{y_i}^2} \leq \frac{LB_0bb_x}{\sqrt{m}}.
\end{align}

Moving back to the left-hand side of Equation \ref{eq-upper-bound-rc-with-initialization-by-two-elements}, let $\bar{\bx} := \bx/ ||\bx||$ for any non-zero $\bx$ (or 0 for $\bx = \mathbf{0}$). We have
\begin{align*}
    &\frac{b}{m}\E \sup_{||W||_F\leq B} \norm{\summ \epsilon_i \sigma(W\bx_i+W_0\bx_i) -\summ \epsilon_i\sigma(W_0\bx_i)} \\
    &\leq
\frac{b}{m}\E \sup_W \sqrt{\sumn \left(\summ \epsilon_i \left(\sigma(b_x\bw_j^{\top}\bar{\bx_i}+b_x \bw_{0,j}^{\top}\bar{\bx_i}) - \sigma(b_x\bw_{0,j}^{\top}\bar{\bx_i})\right) \right)^2},
\end{align*}
where $\bw_{0,j}$ is the j row of $W_0$. 

 
Each matrix in the set $\{W:\norm{W}_F \leq B\}$ is composed of rows, whose sum of squared
norms is at most $(Bb_x)^2$. Thus, the set can be equivalently defined as the set of $d\times n$ matrices, where each
row j equals $v_j\bw_j$ for some $v_j > 0$, $\norm{w_j} \leq 1$ and $\norm{\bv}^2 \leq (Bb_x)^2 $. Noting that
each $v_j$ is positive, we can upper bound the expression in the displayed equation above as follows:
\begin{align*}
&\frac{b}{m}\E \sup_{||\bv||\leq Bb_x,||\bw_j||\leq 1} \sqrt{\sumn \left(\summ \epsilon_i \left(\sigma(v_j\bw_j^{\top}\bar{\bx_i}+b_x\bw_{0,j}^{\top}\bar{\bx_i}) - \sigma(b_x\bw_{0,j}^{\top}\bar{\bx_i})\right) \right)^2} \\
&= \frac{b}{m}\E \sup_{||\bv||\leq Bb_x,||\bw_j||\leq 1} \sqrt{\sumn v_j^2 \left(\summ \frac{\epsilon_i}{v_j} \left(\sigma(v_j\bw_j^{\top}\bar{\bx_i}+b_x\bw_{0,j}^{\top}\bar{\bx_i}) - \sigma(b_x\bw_{0,j}^{\top}\bar{\bx_i})\right) \right)^2} \\
&\leq \frac{b}{m}\E \sup_{||\bv'||\leq B,||\bv||\leq Bb_x,||\bw_j||\leq 1} \sqrt{\sumn {v'_j}^{2} \left(\summ \frac{\epsilon_i}{v_j} \left(\sigma(v_j\bw_j^{\top}\bar{\bx_i}+b_x\bw_{0,j}^{\top}\bar{\bx_i}) - \sigma(b_x\bw_{0,j}^{\top}\bar{\bx_i})\right) \right)^2}.
\end{align*}
For any choice of $\epsilon,\bv$ and $\bw_1,...,\bw_n$, the expression inside the expectation above can be written as 
\begin{align*}\sup_{||\bv'||\leq Bb_x} \sqrt{\sumn {v'}_j^2 a_j^2} = \sup_{v_j': \sum_j {v'}_j^2 \leq (Bb_x)^2} \sqrt{\sumn {v'}_j^2 a_j^2},\end{align*}
for some numbers $a_1,...,a_n$. Clearly, this is maximized by letting ${v'}_{j^*} = Bb_x$ for some $j^* \in \argmax_j a_j^2$, and $v'_j = 0$ for all $j \neq j^*$. Plugging this observation back into the above expression, we can upper-bound the displayed
equation by
\begin{align*}
&\frac{bBb_x}{m} \E \sup_{||\bv||\leq Bb_x,||\bw_j||\leq 1} \max_j   \left| \summ \frac{\epsilon_i}{v_j} \left(\sigma(v_j\bw_j^{\top}\bar{\bx_i}+b_x\bw_{0,j}^{\top}\bar{\bx_i}) - \sigma(b_x\bw_{0,j}^{\top}\bar{\bx_i})\right)\right | .
\end{align*}
Since the spectral norm upper bounds the norm of each row in a matrix, we can upper bound the above equation by
\begin{align*}
    &\leq \frac{bBb_x}{m} \E \sup_{v\in(0,Bb_x],\norm{\bw}\leq 1,||w_{0}||\leq B_0}    \left| \summ \frac{\epsilon_i}{v} \left(\sigma(v\bw^{\top}\bar{\bx_i}+b_x\bw_{0}^{\top}\bar{\bx_i}) - \sigma(b_x\bw_{0}^{\top}\bar{x_i})\right)\right | .
\end{align*}
Let $\psi: I \rightarrow \R$ be 
\begin{align*}(\alpha,y,v) \rightarrow \frac{\sigma(v\alpha+y)-\sigma(y)}{v},\end{align*}
where $I = [-1,1]\times [-B_0b_x,B_0b_x]\times[-Bb_x,Bb_x]$. Note that $|\bw^{\top}\bar{\bx_i}|\leq ||\bw^{\top}||||\bar{\bx_i}|| \leq 1$ and $|b_xw_0^{\top}\bar{\bx_i}|\leq ||\bw_0{\top}||||\bar{\bx_i}|| \leq B_0b_x$.
Therefore we can upper-bound the above expression by
\begin{align}\label{eq-upper-bound-rc-lipchitz-three-inputs}
    \frac{bBb_x}{m} \E \sup_{v\in(0,Bb_x],\norm{\bw}\leq 1,||\bw_{0}||\leq B_0}  \left| \summ \epsilon_i \psi(\bw^{\top}\bar{\bx_i},\bw_0{\top}\bx_i,v)\right |.
\end{align}
By Lemma \ref{psi-is-lipchitz} we have that $\psi$ is $c(\mu+L)$-Lipschitz for some constant c.
By a Taylor expansion and the fact that $\sigma(\cdot)$ is smooth we have that $\sigma(\beta) = \sigma(v\alpha+\beta)+\sigma'(\gamma)(-v\alpha)$, where $\gamma$ is some middle point between $\beta$ and $v\alpha+\beta$. Therefore,
\begin{align*}\psi(\alpha,\beta,v) = \frac{\sigma(v\alpha+\beta)-\sigma(\beta)}{v} = \frac{\sigma'(\gamma)(-v\alpha)}{v} = -\alpha \sigma'(\gamma).  \end{align*}
 since $|\alpha| \leq 1$ and $|\sigma'(\gamma)| \leq L$, we have that $\psi$ is bounded on $I$ by $L$. 
Let
\begin{align*}
    &\F = \{\bx \rightarrow (\bw^{\top}\bar{\bx},\bw_0{\top}\bx,v):\norm{\bw}\leq 1, ||\bw_0|| \leq B_0, ||\bv|| \leq Bb_x\}, \\
    &\F_1 = \{\bx \rightarrow \bw^{\top}\bar{\bx} : \norm{\bw}\leq 1\}, \\
    & \F_2 = \{\bx \rightarrow \bw_0{\top}\bx : ||\bw_0|| \leq B_0\}, \\
    &\F_3 = \{\bx \rightarrow v : v \in (0,Bb_x]\}.
\end{align*}
Considering Equation \ref{eq-upper-bound-rc-lipchitz-three-inputs}, this is $bBb_x$ times the Rademacher complexity of the function class $\psi \circ \F$. For every $\epsilon>0$, using the Dudley Integral (see \cite{dudley_integral}) we can upper bound the Rademacher complexity of $\psi \circ \F$ by
\begin{align*}4\epsilon + 12 \int_{\epsilon}^L \sqrt{\frac{\log N(\psi \circ F,d_m,\tau)}{m}} d\tau. \end{align*}
From this point, $c > 0$ represents some universal constant that may change from line to line. By Lemma \ref{covering-contraction} (with $k=3$) we can upper bound the above expression by
\begin{align*}
    4\epsilon + 12 \int_{\epsilon}^L \sqrt{\frac{\log N(F_1,d_m,\frac{c\tau}{\mu+L})}{m}} d\tau + 12 \int_{\epsilon}^L \sqrt{\frac{\log N(F_2,d_m,\frac{c\tau}{\mu+L})}{m}} d\tau +
    12 \int_{\epsilon}^L \sqrt{\frac{\log N(F_3,d_m,\frac{c\tau}{\mu+L})}{m}} d\tau.
\end{align*}
By Lemma \ref{covering-linear-predictors} and Lemma \ref{covering-constant-function}, for any $\epsilon \geq 0$, we can upper bound the above expression by
\begin{align*}
    4\epsilon + \frac{c(\mu + L)}{\sqrt{m}}\Bigl(\log(L)-\log(\epsilon)+B_0b_x\log(L)-B_0b_x\log(\epsilon)+\log(Bb_x)\log(L)-\log(Bb_x)\log(\epsilon)\Bigr).
\end{align*}
By choosing $\epsilon = \frac{1}{\sqrt{m}}$ we can upper bound Eq. \ref{eq-upper-bound-rc-lipchitz-three-inputs}  by

\begin{align*}
\frac{4+c(\mu + L)\bigl(\log(L)+\log(m)+ B_0b_x\log(L)+B_0b_x\log(m)+\log(Bb_x)\log(L)+\log(Bb_x)\log(m)\bigr)}{\sqrt{m}}. 
\end{align*}
Combining with Eq. \ref{eq-upper-bound-rc-just-the-initialization-term}, we get an upper bound on the Rademacher Complexity of $\F_{b,B,n,d}^{g,W_0}$ of the form
\begin{align*}
&\frac{4+c(\mu + L)bBb_x\bigl(\log(L)+\log(m)+ B_0b_x\log(L)+B_0b_x\log(m)\bigr)}{\sqrt{m}}\\
&~~~+ \frac{c(\mu + L)bBb_x\bigl(\log(Bb_x)\log(L)+\log(Bb_x)\log(m)\bigr)}{\sqrt{m}} + \frac{LB_0bb_x}{\sqrt{m}}.
\end{align*}
Upper bounding this by $\epsilon$ and solving for $m$, the result follows.
\end{proof}
\subsection{Proof of Theorem \ref{thm-upper-bound-rademacher-higher-depth}}
To simplify notation, we rewrite $\F_{k,\{S_j\},\{B_j\}}^{\{\sigma_j\}}$ as simply $\F_k$.
For convenience, for any $1 \leq l \leq k-1$, we define the class $\F_l$ slightly differently. Each function in $\F_{l}$ has the form
\begin{align}\label{eq-F_d-definition}
    \bx\rightarrow \sigma_{l}(W_{l}\sigma_{l-1}(...\sigma_1((W_1\bx))),  
\end{align}
with the same constraints on the weights and the activation functions. 
We also define $\F_0$ to be the class that contains just the identity function.
The next Claim captures the "peeling" argument. 
\begin{lemma} \label{lemma-peeling-argument}
For any integer $1 \leq l \leq k-1$,
    \begin{align*}
    \E \sup_{f\in F_{l}} \left|\left|\frac{1}{m} \summ \epsilon_i  f(\bx_i) \right|\right| \leq  2cB_{l}L\left(\frac{R_{l-1}}{\sqrt{m}}+ \log^{\frac{3}{2}} (m)\E \sup_{f\in F_{l-1}} \left|\left|\frac{1}{m} \summ \epsilon_i  f(\bx_i) \right|\right|\right)
\end{align*}
where $R_{l-1} = b_xL^{l-1}||W_{l-1}||\cdot||W_{l-2}|| \dots \cdot ||W_1||$, $R_0 = b_x$ and $c > 0$ is a universal constant.
\end{lemma}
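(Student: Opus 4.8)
The plan is to peel the outermost layer $\sigma_l(W_l\,\cdot\,)$ off and reduce to the complexity of $F_{l-1}$, by adapting the argument behind \thmref{thm-smooth-activation} so that the ``inputs'' to layer $l$ are the vector outputs $g(\bx_i)$, $g\in F_{l-1}$, rather than fixed vectors. Fix $\bx_1,\dots,\bx_m$ with $\norm{\bx_i}\le b_x$; every $f\in F_l$ is $f=\sigma_l(W_l\,g(\cdot))$ with $g\in F_{l-1}$, $\norm{W_l}_F\le B_l$, and $\norm{g(\bx_i)}\le R_{l-1}$. I would expand the squared Euclidean norm coordinatewise over the rows of $W_l$, write each row as $v_j\bar\bw_j$ with $\norm{\bar\bw_j}\le 1$ and $\sumn v_j^2\le B_l^2$, and introduce the auxiliary maps $\phi_v(t):=\sigma_l(vt)/v$, which are $L$-Lipschitz, vanish at $0$, and satisfy $|\phi_v(t)|\le L|t|$. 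Since $\sigma_l(v_j\bar\bw_j^{\top}g(\bx_i))=v_j\phi_{v_j}(\bar\bw_j^{\top}g(\bx_i))$, the norm equals $\sqrt{\sumn v_j^2c_j^2}$ with $c_j:=\tfrac1m\summ\epsilon_i\phi_{v_j}(\bar\bw_j^{\top}g(\bx_i))$ and $|c_j|\le LR_{l-1}$; replacing the outer weights by a fresh vector of Frobenius norm $B_l$ and concentrating it on the coordinate with largest $|c_j|$ (exactly the trick used in the proof of \thmref{thm-smooth-activation}) gives
\[
\E\sup_{f\in F_l}\Bigl\|\tfrac1m\summ\epsilon_i f(\bx_i)\Bigr\|\;\le\; B_l\,\E\sup_{v>0,\ \norm{\bar\bw}\le 1,\ g\in F_{l-1}}\Bigl|\tfrac1m\summ\epsilon_i\,\phi_v\bigl(\bar\bw^{\top}g(\bx_i)\bigr)\Bigr|.
\]

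Next I would set $H:=\{\bx\mapsto\bar\bw^{\top}g(\bx):\norm{\bar\bw}\le 1,\ g\in F_{l-1}\}$; taking the sup over $\bar\bw$ turns the linear functional into a norm, so the empirical Rademacher complexity of $H$ equals $\E\sup_{g\in F_{l-1}}\norm{\tfrac1m\summ\epsilon_i g(\bx_i)}$, and $|h(\bx_i)|\le R_{l-1}$ on $H$. It remains to bound $\E\sup_{v>0,\,h\in H}|\tfrac1m\summ\epsilon_i\phi_v(h(\bx_i))|$ by $cL\bigl(R_{l-1}/\sqrt m+\log^{3/2}(m)\,\E\sup_{g\in F_{l-1}}\norm{\tfrac1m\summ\epsilon_i g(\bx_i)}\bigr)$. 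For a single fixed Lipschitz $\phi$ this is just the contraction inequality; the content is the one-parameter family $\{\phi_v\}_{v>0}$. I would handle it by a Dudley/chaining bound on the composed class $\{\phi_v\circ h:v>0,\ h\in H\}$: by \lemref{lemma-covering-all-lipchitz-functions} (applied with $r=1$, since $h$ is scalar), its $d_m$ covering number is controlled by the sup-norm covering number of $\{\phi_v\}$ on $[-R_{l-1},R_{l-1}]$ -- which is at most that of all $L$-Lipschitz functions vanishing at $0$, contributing an $O(LR_{l-1}/\sqrt m)$ piece to the Dudley integral -- plus the $d_m$ covering number of $H$ shrunk by $1/L$, whose contribution I would re-express in terms of the Rademacher complexity of $H$ (equivalently of $F_{l-1}$) via the standard chaining equivalence, at the cost of the $\log^{3/2}(m)$ factor. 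Symmetrizing the absolute value costs a further factor $2$; multiplying through by $B_l$ and folding constants into $c$ yields the stated $2cB_lL(\cdots)$ recursion. The base case $l=1$ is immediate since $F_0$ is the identity class and $R_0=b_x$.

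The main obstacle is the scaling parameter $v$ of the activation. In \thmref{thm-smooth-activation} smoothness makes $(\alpha,\beta,v)\mapsto(\sigma(v\alpha+\beta)-\sigma(\beta))/v$ jointly Lipschitz, so $v$ can be covered cheaply; here $\sigma_l$ is only Lipschitz, so $v\mapsto\phi_v$ need not be Lipschitz, and one is forced to absorb $\{\phi_v\}$ into the (much larger) class of all $L$-Lipschitz functions vanishing at $0$ -- this is exactly what produces the non-recursive $R_{l-1}/\sqrt m$ term rather than something recursive. The second delicate point, absent from the $k=2$ analysis of \cite{DBLP:journals/corr/vardi-shamir-srebro} (where the inner layer is the identity and the inputs have a fixed norm bound), is that $F_{l-1}$ is arbitrary, so its ``size'' must be carried symbolically as $\E\sup_{g\in F_{l-1}}\norm{\tfrac1m\summ\epsilon_i g(\bx_i)}$ throughout, and re-expressing the Dudley estimate in terms of this quantity is precisely what forces the $\log^{3/2}(m)$ loss; getting this conversion as tight as stated is the part that requires care.
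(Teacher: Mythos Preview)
Your proposal is correct and follows essentially the same route as the paper: the row-wise decomposition of $W_l$, the introduction of $\phi_v(t)=\sigma_l(vt)/v$, concentrating the outer weights on the maximal coordinate, removing the absolute value by symmetry of $\Psi_L$ (the factor $2$), and absorbing the one-parameter family $\{\phi_v\}$ into the full class of $L$-Lipschitz functions vanishing at $0$ are exactly the steps the paper takes. The only cosmetic difference is that where you sketch the final composition bound via \lemref{lemma-covering-all-lipchitz-functions} (with $r=1$), Dudley, and Sudakov minoration to recover $R_m(H)$ on the right-hand side, the paper simply cites Theorem~4 of \cite{DBLP:conf/colt/GolowichRS18} as a black box---whose proof is precisely that chain of arguments and is the source of the $\log^{3/2}(m)$ factor you anticipated.
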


\begin{proof}
\begin{align*}
&\E \sup_{f\in F_{l}} \left|\left|\frac{1}{m} \summ \epsilon_i  f(\bx_i) \right|\right|
= \frac{1}{m} \E \sup_{f \in \F_{l-1}} \sup_{W_l} \left|\left| \summ \epsilon_i \sigma_{l} \circ W_l f(\bx_i) \right|\right| \\
&= \frac{1}{m} \E \sup_{f \in \F_{l-1}} \sup_{W_l} \sqrt{\sumn \left( \summ \epsilon_i \sigma_{l} \left(\bw_j^{\top}f(\bx_i)\right)\right)^2},
\end{align*}
where $\bw_j$ is the $j$-th row of $W_l$. Each matrix in the set $\{W: ||W||_F \leq B_{l}\}$ is composed of rows, whose sum of squared
norms is at most $B_{l}^2$. Thus, the set can be equivalently defined as the set of matrices, where each row $j$ equals $v_j\bw_j$ for some $v_j>0, ||\bw_j|| \leq 1$, and $||\bv||^2 \leq B_{l}^2$. Noting that
each $v_j$ is positive, we can upper bound the expression in the displayed equation above as follows:
\begin{align*}
    &\frac{1}{m} \E \sup_{f \in \F_{l-1}} \sup_{\bw_j,v} \sqrt{\sumn \left( \summ \epsilon_i \sigma_{l} \left(v_j \bw_j^{\top} f(\bx_i)\right)\right)^2} \\
    &=  \frac{1}{m} \E \sup_{f \in \F_{l-1}} \sup_{\bw_j,v} \sqrt{\sumn v_j^2 \left( \summ \frac{\epsilon_i}{v_j}  \sigma_{l} \left(v_j\bw_j^{\top}f(\bx_i)\right)\right)^2} \\ &\leq  \frac{1}{m} \E \sup_{f \in \F_{l-1}} \sup_{\bw_j,v,v'} \sqrt{\sumn {v'_j}^2 \left( \summ \frac{\epsilon_i}{v_j}  \sigma_{l} \left(v_j\bw_j^{\top}f(\bx_i)\right)\right)^2},
\end{align*}
Where $||\bv'||^2\leq B_{l}^2$ (note that $\bv$ must also satisfy this constraint). Moreover, for any choice of $\epsilon,\bv,f$ and $\bw_1,...,\bw_n$, the supremum over $\bv'$ is clearly attained by letting $v'_{j^*} = B_{l}$ for some $j^*$. Plugging this observation back, we can upper-bound the displayed equation by
\begin{align}\label{eq-upper-bound-rc-with-gamma}
    &\frac{B_{l}}{m} \E \sup_{f \in \F_{l-1}} \sup_{\bw_j,\bv} \max_j \left| \summ \frac{\epsilon_i}{v_j}  \sigma_{l} \left(v_j\bw_j^{\top}f(\bx_i)\right)\right| \nonumber \\
    &= \frac{B_{l}}{m} \E \sup_{f \in \F_{l-1}} \sup_{\bw:\norm{\bw}\leq 1,v\in(0,B]}  \left| \summ \frac{\epsilon_i}{v}  \sigma_{l} \left(v\bw^{\top}f(\bx_i)\right)\right| \nonumber\\ &= \frac{B_{l}}{m} \E \sup_{f \in \F_{l-1}} \sup_{\bw:\norm{\bw}\leq 1,v\in(0,B]}  \left| \summ \epsilon_i \psi_v\left(\bw^{\top}f(\bx_i)\right)\right|, 
\end{align}
where $\psi_v(z) = \frac{\sigma_{l}(vz)}{v}$ for any $z \in \R$. Since $\sigma_{l}$ is L-Lipschitz, it follows that $\psi_v(\cdot)$ is also L-Lipschitz regardless of $v$, since for any $z,z' \in \R$,
\begin{align*}|\psi_v(z)-\psi_v(z')| = \frac{|\sigma(vz) - \sigma(vz')|}{v} \leq \frac{L|vz-v'z|}{v} = L|z-z'|.\end{align*} 
As a result, we can upper-bound Eq. \ref{eq-upper-bound-rc-with-gamma} by
\begin{align*} \frac{B_{l}}{m} \E \sup_{f \in \F_{l-1}} \sup_{w:\norm{\bw}\leq 1,\psi \in \Psi_L}  \left| \summ \epsilon_i \psi\left(\bw^{\top}f(\bx_i)\right)\right|,\end{align*}
where $\Psi_L$ is the class of all L-Lipschitz functions which equal 0 at the origin.
To continue, it will be convenient to get rid of the absolute value in the displayed expression above.
This can be done by noting that the expression equals
\begin{align*}
    & = \frac{B_{l}}{m} \E \sup_{f \in \F_{l-1}} \sup_{w:\norm{\bw}\leq 1,\psi \in \Psi_L}  \max \left\{\summ \epsilon_i \psi\left(\bw^{\top}f(\bx_i)\right) , -\summ \epsilon_i \psi\left(\bw^{\top}f(\bx_i)\right) \right\} \\
    & \stackrel{(*)}{\leq}  \frac{B_{l}}{m} \E \sup_{f \in \F_{l-1}} \left[ \sup_{w:\norm{\bw}\leq 1,\psi \in \Psi_L}  \summ \epsilon_i \psi\left(\bw^{\top}f(\bx_i)\right) + \sup_{w:\norm{\bw}\leq 1,\psi \in \Psi_L} -\summ \epsilon_i \psi\left(\bw^{\top}f(\bx_i)\right) \right] \\
    & \stackrel{(**)}{=} \frac{2B_{l}}{m} \E \sup_{f \in \F_{l-1}} \sup_{w:\norm{\bw}\leq 1,\psi \in \Psi_L}  \summ \epsilon_i \psi\left(\bw^{\top}f(\bx_i)\right),
\end{align*}
where $(*)$ follows from the fact that $max\{a,b\} \leq a + b$ for non-negative $a, b$ and the observation
that the supremum is always non-negative (it is only larger, say, than the specific choice of $\psi$ being
the zero function), and $(**)$ is by symmetry of the function class 	$\Psi_L$ (if $\psi \in \Psi_L$, then $-\psi \in \Psi_L$ as
well).

Note that for every $f \in \F_{l-1}$ and $w$ with $\norm{\bw} \leq 1$ we have $|\bw^{\top}f(\bx_i)| \leq ||\bw^{\top}|| \cdot ||f(\bx_i)|| \leq R_{l-1}$, where $R_{l-1} = b_xL^{l-1}||W_{l-1}||\cdot||W_{l-2}|| \dots \cdot ||W_1||$. This class is a subset of the class of composition of all functions from $\R^d$ to $[-R_{l-1},R_{l-1}]$, and all univariate $L$-Lipschitz functions crossing the origin. Fortunately, the
Rademacher complexity of such composed classes was analyzed in  \cite{DBLP:conf/colt/GolowichRS18} for a
different purpose. Applying Theorem 4
from that paper, we get the upper bound of
\begin{align*}
    2cB_{l}L\left(\frac{R_{l-1}}{\sqrt{m}}+ \log^{\frac{3}{2}} (m)R_m(\bw^{\top}F_{l-1})\right) 
\end{align*}
where $c > 0$ is a universal constant, and 
\begin{align*}
    R_m(\bw^{\top} F_{l-1}) :=  \E \sup_{w:\norm{\bw}\leq 1, f\in F_{l-1}} \frac{1}{m} \summ \epsilon_i \bw^{\top} f(\bx_i) \
    \leq \E \sup_{f\in F_{l-1}} \left|\left|\frac{1}{m} \summ \epsilon_i  f(\bx_i) \right|\right|
\end{align*}
From this, the result follows.
\end{proof}

\begin{proof}[Proof of Theorem \ref{thm-upper-bound-rademacher-higher-depth}]
Remember the new definition of $\F_k$ (see Eq. \ref{eq-F_d-definition}), note that we just removed $\bw_k$, and therefore we turn to analyze the Rademacher complexity of 
\begin{align*}\{\bw_k^{\top} f: ||\bw_k|| \leq B_k, f\in \F_{k-1} \}\end{align*} on $m$ inputs from $\{\bx \in \R^d: ||\bx||\leq b_x\}$. Fix some set of inputs $\bx_1, \dots \bx_m$ with norm at most $b_x$. The Rademacher complexity equals 
\begin{align}\label{eq-upper-bound-rc-element-wise-no-restrictions}
  &\E \sup_{f \in \F_{k-1}} \sup_{\bw_k} \frac{1}{m} \summ \epsilon_i \bw_k^{\top} f(\bx_i) \leq 
  B_k \cdot \E \sup_{f \in \F_{k-1}} \left|\left|\frac{1}{m} \summ \epsilon_i f(\bx_i) \right|\right|. 
\end{align}
By applying Lemma \ref{lemma-peeling-argument} repeatedly (i.e. $k-1$ times) we get 
\begin{align}\label{eq-upper-bound-rc-deep-nn}
    &\E \sup_{f\in F_{k-1}} \left|\left|\frac{1}{m} \summ \epsilon_i  f(\bx_i) \right|\right| \nonumber\\ 
    &\leq  2cB_{k-1}L\left(\frac{R_{k-2}}{\sqrt{m}}+ \log^{\frac{3}{2}} (m)\E \sup_{f\in F_{k-2}} \left|\left|\frac{1}{m} \summ \epsilon_i  f(\bx_i) \right|\right|\right) \nonumber \\
    &\leq  \frac{2cLB_{k-1}R_{k-2}}{\sqrt{m}} + \frac{(2cL)^2B_{k-1}B_{k-2}R_{k-3}\log^{\frac{3}{2}} (m)}{\sqrt{m}}+ \dots \nonumber \\
    &\dots +\frac{(2cL)^{k-1}\left(\prod_{j\leq k-1} B_j\right)\log^{\frac{3(k-2)}{2}}(m)}{\sqrt{m}} +(2cL)^{k-1}\left(\prod_{j\leq k-1} B_j \right)\log^{\frac{3(k-1)}{2}}(m)\E \left|\left|\frac{1}{m} \summ \epsilon_i  \bx_i \right| \right|, 
\end{align}
where $R_{k-2} = b_xL^{k-2}||W_{k-2}||\cdot||W_{k-3}|| \dots \cdot ||W_1||$ and $R_0 = b_x$. Note that by Cauchy-Schwarz inequality
\begin{align*}
    \frac{1}{m}\E \left|\left| \summ \epsilon_i  \bx_i \right| \right| \leq \frac{1}{m} \sqrt{\E \left|\left| \summ \epsilon_i  \bx_i \right| \right|^2} = \frac{1}{m} \sqrt{\E \sum_{i,i'=1}^m \epsilon_i \epsilon_{i'} \bx_i^{\top}\bx_{i'}} = \frac{1}{m} \sqrt{\summ \norm{\bx_i}^2} \leq \frac{1}{m} \sqrt{mb_x^2} = \frac{b_x}{\sqrt{m}}. 
\end{align*}
Plugging this back into Eq. \ref{eq-upper-bound-rc-deep-nn}, we get the following upper bound:
\begin{align*}
    & \frac{2cLB_{k-1}R_{k-2}}{\sqrt{m}} + \frac{(2cL)^2B_{k-1}B_{k-2}R_{k-3}\log^{\frac{3}{2}} (m)}{\sqrt{m}}+ \dots + \\
    &\dots \frac{(2cL)^{k-1}\prod_{j\leq k-1} B_j\log^{\frac{3(k-2)}{2}}(m)}{\sqrt{m}} + \frac{(2cL)^{k-1}\prod_{j\leq k-1} B_j \log^{\frac{3(k-1)}{2}}(m)b_x}{\sqrt{m}} \\
    & = \sum_{i=1}^{k-1} \frac{(2cL)^i \cdot \left(\prod_{j=k-i}^{k-1} B_j\right) R_{k-1-i} \cdot \log^{\frac{3(i-1)}{2}}(m)}{\sqrt{m}} +(2cL)^{k-1}\prod_{j\leq k-1} B_j \log^{\frac{3(k-1)}{2}}(m)\frac{b_x}{\sqrt{m}} . 
\end{align*}
Altogether we have $k$ terms, each one of them upper bound by 
\begin{align*}\frac{(2cL)^{k-1}R_{k-2}\left( \prod_{i\leq k-1} B_i \right) \log^{\frac{3(k-1)}{2}}(m)}{\sqrt{m}},\end{align*} 
where we use the assumption that $L$ and $||W_i|| \geq 1$, which implies also that $||W_i||_F \geq 1$. Therefore, we can upper bound the displayed Eq. by
\begin{align*} \frac{k(2cL)^{k-1}R_{k-2}\left( \prod_{i\leq k-1} B_i \right) \log^{\frac{3(k-1)}{2}}(m)}{\sqrt{m}}.\end{align*}
Combining with Eq. \ref{eq-upper-bound-rc-element-wise-no-restrictions}, the Rademacher complexity of $\F_k$ is upper bounded by 
\begin{align*} \frac{k(2cL)^{k-1}bR_{k-2}\left( \prod_{i\leq k-1} B_i \right) \log^{\frac{3(k-1)}{2}}(m)}{\sqrt{m}}~,\end{align*}
from which the result follows.
\end{proof}

\textbf{}

\end{document}